\tikzset{
    events/.style={ellipse, draw, align=center},
}
\newtheorem {definition}{Definition}
\newtheorem {proposition}{Proposition}
\newtheorem {theorem}{Theorem}
\newtheorem {lemma}{Lemma}
\newtheorem {example}{Example}
\newtheorem {corollary}{Corollary}
\newenvironment{proof1} {\noindent\emph{Proof:}}{$\left.\right.$\hfill$\Box$}
\newcommand{\alias}[2]{\newcommand{#1}[0]{#2}}
\newcommand{\malias}[2]{\alias{#1}{\ensuremath{#2}}}
\malias{\q}{{\mathbb q}}
\malias{\Q}{{\mathcal Q}}
\alias{\yes}{{\sc yes}}
\alias{\no}{{\sc no}}
\alias{\PPc}{$PP$-$complete$}
\alias{\coNPc}{$coNP$-$complete$}
\alias{\NPc}{$NP$-$complete$}
\alias{\thetac}{$\Delta_2^P[O(log n)]$-$complete$}
\newcommand{\K}{{\mathcal{K}}}
\newcommand{\T}{{\mathcal{T}}}
\newcommand{\A}{{\mathcal{A}}}
\newcommand{\R}{{\mathcal{R}}}
\newcommand{\C}{{\mathcal{C}}}
\newcommand{\M}{{\mathcal{M}}}
\newcommand{\KB}[2]{\left<#1,#2\right>}
\newcommand{\dllite}{{{DL-Lite}}}
\newcommand{\eg}{\textit{e.g.}}
\newcommand{\resp}{\textit{resp.}}
\newcommand{\wrt}{w.r.t.}
\newcommand{\kb}{KB}
\newcommand{\Cl}{\textit{Cl}}
\newcommand{\cl}[1]{\circ_{cl}(#1)}
\newcommand{\lex}[1]{\circ_{card}(#1)}
\newcommand{\incl}[1]{\circ_{rep}(#1)}
\newcommand{\clf}{\circ_{cl}}
\newcommand{\lexf}{\circ_{card}}
\newcommand{\inclf}{\circ_{rep}}
\newcommand{\isa}{\sqsubseteq}
\newcommand{\isn}{\sqsubseteq$$\neg}
\newcommand{\tab}[1]{\begin{tabular}{c} #1 \end{tabular}}
\newcommand{\infr}[2]{\left<#1,#2\right>}
\newcommand{\infer}[2]{\models_{\langle \circ_#1, #2 \rangle}}
\newcommand{\ma}{\mathsf{R}}
\newcommand{\mb}{\mathsf{MR}}
\newcommand{\mc}{\mathsf{CMR}}
\newcommand{\md}{\mathsf{MCMR}}
\newcommand{\me}{\mathsf{CR}}
\newcommand{\mf}{\mathsf{MCR}}
\newcommand{\mg}{\mathsf{RC}}
\newcommand{\mh}{\mathsf{MRC}}
\newcommand{\m}{\circ}
\alias{\algPP}{{\sc MajSolver}}
\begin{document}
\title{A General Modifier-based Framework\\ for Inconsistency-Tolerant Query Answering}
\author{Jean Fran\c{c}ois Baget \\
INRIA, France \\
baget@lirmm.fr\\
\And Salem Benferhat \\
Univ Artois, France\\
benferhat@cril.fr\\
\And 
Zied Bouraoui  \\
Univ Aix-Marseille, France\\
zied.bouraoui@lsis.fr\\
\And  Madalina Croitoru \\
Univ Montpellier, France \\
croitoru@lirmm.fr\\
\AND Marie-Laure Mugnier  \\
Univ Montpellier, France\\
mugnier@lirmm.fr\\
\And Odile Papini \\
Univ Aix-Marseille, France\\
odile.papini@amu-univ.fr\\
\And Swan Rocher \\
Univ Montpellier, France\\
rocher@lirmm.fr\\
\And  Karim Tabia \\
Univ Artois, France\\
tabia@cril.fr\\
}
\maketitle


\begin{abstract}
We  propose a general framework for inconsistency-tolerant query answering within existential rule setting. This framework unifies the main semantics proposed by the state of art and introduces new ones based on cardinality and majority principles. It relies on two key notions: modifiers and inference strategies. An inconsistency-tolerant semantics is seen as a composite modifier plus an inference strategy. We compare the obtained semantics from a productivity point of view. 
\end{abstract}

\section{Introduction}
\label{sec:introduction}

In this paper we place ourselves in the context of Ontology-Based Data Access \cite{PoggiLCGLR08} and we address the problem of query answering when the assertional base (which stores data) is inconsistent with the ontology (which represents generic knowledge about a domain).  Existing work in this area studied different inconsistency-tolerant inference relations, called \emph{semantics}, which consist of getting rid of inconsistency by first computing a set of consistent subsets of the assertional base, called \emph{repairs}, that restore consistency w.r.t the ontology, then using them to perform query answering. Most of these proposals, inspired by database approaches \eg{} \cite{ArenasBC99} or propositional logic approaches \eg{} \cite{BenferhatDPW97}, were introduced for the lightweight description logic \dllite{} \eg{} \cite{lemboLRRS15}. Other description logics \eg{} \cite{Rosati11} or existential rule \eg{} \cite{LukasiewiczMPS15} have also been considered.  In this paper, we use existential rules \eg{}\cite{BagetLMS11} as ontology language that generalizes lightweight description logics.

The main contribution of this paper consists in setting up a general framework that unifies previous proposals and extends the state of the art with new semantics.  The idea behind our framework is to distinguish between the way data assertions are virtually distributed (notion of modifiers) and inference strategies.   An inconsistency-tolerant semantics is  then naturally defined by a modifier and an inference strategy. We also propose a classification of the productivity of hereby obtained semantics by sound and complete conditions relying on modifier inclusion and inference strategy order. The objective of framework is to establish a methodology for inconsistency handling which, by distinguishing between modifiers and strategies, allows not only to cover existing semantics, but also to easily define new ones, and to study different kinds of their properties.


\section{Preliminaries}
\label{sec:dllite}

We consider first-order logical languages without functional symbols, hence a \emph{term} is a variable or a constant. An \emph{atom} is of the form $p(t_1,\ldots,t_k)$ where $p$ is a predicate of arity $k$, and the $t_i$ are terms. Given an atom or a set of atoms $E$, \emph{terms}($E$) denotes the set of terms occurring in $E$. A (factual) \emph{assertion} is an atom without variables.

A \emph{conjunctive query} is an existentially quantified conjunction of atoms. For readability, we restrict our focus to \emph{Boolean} conjunctive queries, which are closed formulas. However the framework and the obtained results can be directly extended to general conjunctive queries. In the following, by \emph{query}, we mean a Boolean conjunctive query. Given a set of assertions $\A$ and a query $q$, the answer to $q$ over $\A$ is yes iff $\A \models q$, where $\models$ denotes the standard logical consequence.

A knowledge base can be seen as a database enhanced with an ontological component. Since inconsistency-tolerant query answering has been mostly studied in the context of description logics (DLs), and especially \dllite{}, we will use some DL vocabulary, like ABox for the data and TBox for the ontology.  However, our framework is not restricted to DLs, hence we define TBoxes and ABoxes in terms of first-order logic. We assume the reader familiar with the basics of DLs and their logical translation.

An \emph{ABox} is a set of factual assertions.  As a special case we have DL assertions restricted to unary and binary predicates.  A \emph{positive axiom} is of the form $\forall \mathbf{x} \forall \mathbf{y}(B[\mathbf{x},\mathbf{y}] \rightarrow \exists \mathbf{z}~H[\mathbf{y},\mathbf{z}])$ where $B$  and $H$ are conjunctions of atoms  (in other words, it is a positive existential rule).  As a special case, we have for instance concept and role inclusions in  \dllite$_R$,  which are respectively of the form  $B_1 \sqsubseteq B_2$  and $S_1 \sqsubseteq S_2$, where $B_i := A~| ~\exists S$  and $S_i := P ~| ~P^-$ (with $A$ an atomic concept, $P$ an atomic role and $P^{-}$ the inverse of an atomic role).  A \emph{negative axiom} is of the form $\forall \mathbf{x} (B[\mathbf{x}] \rightarrow \bot)$ where $B$ is a conjunction of atoms (in other words, it is a negative constraint). As a special case, we have for instance disjointness axioms in  \dllite$_R$, which are inclusions of the form $B_1\sqsubseteq\neg B_2$  and $S_1 \sqsubseteq \neg S_2$, or equivalently $B_1\sqcap B_2 \sqsubseteq \bot$  and $S_1\sqcap S_2 \sqsubseteq \bot$.

A \emph{TBox} $\T= \T_p \cup \T_n$ is partitioned into a set $\T_p$ of positive axioms and a set $\T_n$ of negative axioms.
Finally, a \emph{knowledge base} (\kb)  is of the form  $\K=\KB{\T}{\A}$ where $\A$ is an ABox and $\T$ is a TBox.  $\K$ is said to be \emph{consistent} if $\T \cup \A$ is satisfiable, otherwise it is said to be \emph{inconsistent}. We also say that $\A$ is (in)consistent (with $\T$), which reflects the assumption that the TBox is reliable. The answer to a query $q$ over a consistent \kb{} $\K$ is yes iff $\KB{\T}{\A}\models q$.  When $\K$ is inconsistent, standard consequence is not appropriate since all queries would be positively answered.

A key notion in inconsistency-tolerant query answering is the one of a repair of the ABox \wrt{} the TBox. A repair is a subset of the ABox consistent with the TBox and inclusion-maximal for this property: $\R\subseteq$$\A$ is a \emph{repair} of $\A$ \wrt{} $\T$ if  i) $\KB{\T}{\R}$ is consistent, and ii) $\forall \R' \subseteq \mathcal A$, if  $\R\varsubsetneq \R'$ ($\R$ is strictly included in $\R'$) then $\KB{\T}{\R'}$ is inconsistent. 
We denote by $\R(\A)$ the set of $\A$'s repairs (for easier reading, we often leave $\mathcal T$ implicit in our notations). Note that $\R(\A) = \{\mathcal A\}$ iff $\mathcal A$ is consistent. 
 The most commonly considered semantics for inconsistency-tolerant query answering, inspired from previous work in databases, is the following: $q$ is said to be a \emph{consistent consequence} of $\K$ if it is a standard consequence of each repair of $\A$. Several variants of this semantics have been proposed, which differ with respect to their behaviour (in particular they can be more or less cautious) and their computational complexity. Before recalling the main semantics studied in the literature, we need to introduce the notion of the positive closure of an ABox.
The \emph{positive closure} of $\A$ (w.r.t.{} $\T$), denoted by $\Cl(\A)$, is obtained by adding to $\A$ all assertions (built on the individuals occurring in $\A$) that can be inferred using the positive axioms of the TBox, namely: \\
$\Cl(\A)$=$\{A$ atom$ | \KB{\T_p}{\A}\models A$ and \emph{terms}($A$) $\subseteq$ \emph{terms}($\A$)$\}$
Note that the set of atomic consequences of a \kb{} $\K$=$ \KB{\T}{\A}$ may be infinite whereas the positive closure of $\A$ is always finite since it does not contain new terms. Note also that $\A$ is consistent (with $\T$) iff $\Cl(\A)$ is consistent (with $\T$).

We now recall the most well-known inconsistency-tolerant semantics introduced in \cite{ArenasBC99,LemboLRRS10,bienvenuAAAI12}. 
Given a  possibly inconsistent \kb{} $\K$=$\KB{\T}{\A}$, a query $q$ is said to be: 
\begin{itemize}
\item a consistent (or AR) consequence of $\K$ if $\forall \R\in \R(\A)$, \\ $\KB{\T}{\R}\models q$;
\item a CAR consequence of $\K$ if $\forall$$\R\in$$\R(\Cl(\A))$,$\KB{\T}{\R}$$\models$$q$;
\item an IAR consequence of $\K$ if $\KB{\T}{\bigcap_{\R\in\R(\A)} \R}\models q$;
\item an ICAR consequence of $\K$ if $\KB{\T}{\bigcap_{\R\in\Cl(\A)}\R}\models q$;
\item an ICR consequence of $\K$ if $\KB{\T}{\bigcap_{\R\in\R(\A)}\Cl(\R)}\models q$.
\end{itemize}


\section{A Unified Framework for Inconsistency-Tolerant Query Answering}
\label{sec:multiAbox}

In this section, we define a unified framework for inconsistency-tolerant query answering based on two main concepts:  modifiers and inference strategies.   

Let us first introduce the notion of MBox \kb{}s.  While a standard \kb{} has a single ABox, it is convenient for subsequent definitions to define \kb{}s with multiple ABoxes (``MBoxes''). Formally, an \emph{MBox  \kb{}} is of the form $\K_\M$=$\KB{\T}{\M}$ where $\T$ is a TBox and $\M$=$\{\A_1$,$\ldots$,$\A_n\}$ is a set of ABoxes called an MBox. 
We say that $\K_\M$ is \emph{consistent}, or  $\M$ is consistent (with $\T$) if each $\A_i$ in $\M$ is consistent (with $\T$).

In the following, we start with an MBox \kb{} which is a possibly inconsistent standard \kb{} (namely with a single ABox in $\M$) and produce a consistent MBox \kb{}, in which each element reflects a virtual reparation of the initial ABox.  We see an inconsistency-tolerant query answering method as made out of a \emph{modifier}, which produces a consistent MBox from the original ABox (and the Tbox), and an \emph{inference strategy}, which evaluates queries against the obtained MBox \kb{}.


\subsection{Elementary and Composite Modifiers}
\label{sbsec:mbox}

We first introduce three classes of elementary modifiers, namely expansion, splitting and selection.
For each class, we consider a "natural" instantiation, namely \emph{positive closure}, splitting into \emph{repairs} and selecting the largest elements (i.e., maximal \wrt{} \emph{cardinality}). 
 Elementary modifiers can be combined to define \emph{composite} modifiers.  Given the three natural instantiations of these modifiers, we show that their combination yields  exactly eight different composite modifiers. 

\paragraph{Expansion modifiers.}
The expansion of an MBox consists in  explicitly adding some inferred knowledge to its ABoxes. A natural expansion modifier consists in computing the \emph{positive closure} of an MBox, which is defined as follows:
$$\cl{\M}=\{\Cl(\A_i) | \A_i \in \M\}.$$

\paragraph{Splitting modifiers.} A splitting modifier replaces each $\A_i$ of an MBox by one or several of its consistent subsets. A natural splitting modifier consists of splitting each ABox into the set of its repairs, which is defined as follows:
$$\incl{\M}=\bigcup_{\A_i\in\M}\{\R(\A_i)\}.$$
This modifier always produces a consistent MBox.

\paragraph{Selection modifiers.} A selection modifier selects some subsets of an MBox. As a natural selection modifier, we consider the \emph{cardinality-based selection} modifier, which selects the largest elements of an MBox:
$$\lex{\M}=\{\A_i \in \M|\nexists \A_j \in \M\, s.t\,|\A_j| > |\A_i|\}.$$

\smallskip
We call a \emph{composite modifier} any combination of these three elementary modifiers. We now study the question of how many different composite modifiers yielding consistent MBoxes exist and how they compare to each other. We begin with some properties that considerably reduce the number of combinations to be considered. First, the three modifiers are idempotent. Second, the  modifiers $\clf$ and $\inclf$ need to be applied only once.

\begin{lemma}
\label{lem:equi-modif}
For any MBox $\M$, the following holds:
\begin{enumerate}
\item $\cl{\cl{\M}}$=$\cl{\M}$, $\incl{\incl{\M}}$=$\incl{\M}$ and  $\lex{\lex{\M}}$ =$\lex{\M}$.
\item Let $\circ_{d}$ be any composite modifier. Then: \\(a) $\cl{\circ_{d}(\cl{\M})}=\circ_{d}(\cl{\M})$, and \\
(b) $\incl{\circ_{d}(\incl{\M})}=\circ_{d}(\incl{\M})$.
\end{enumerate}
\end{lemma}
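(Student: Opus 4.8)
The plan is to reduce both parts to a few structural facts about the three elementary modifiers, treating part 1 by direct computation and part 2 by an invariant-preservation argument. For part 1 I would handle each modifier separately. For $\clf$, idempotence reduces to $\Cl(\Cl(\A))=\Cl(\A)$ for every ABox $\A$; this is the usual idempotence of the positive closure operator, using that $\Cl$ adds no new terms (so \emph{terms}$(\Cl(\A))=$\emph{terms}$(\A)$) and that $\Cl(\A)$ consists exactly of the $\T_p$-consequences of $\A$, so that $\T_p \cup \Cl(\A)$ and $\T_p \cup \A$ entail the same atoms. For $\inclf$, I would use the fact recalled in the preliminaries that a consistent ABox is its own unique repair: every element of $\incl{\M}$ is a repair, hence consistent, so re-applying $\inclf$ replaces each element by itself and $\incl{\incl{\M}}=\incl{\M}$. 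For $\lexf$, I would note that all ABoxes surviving the selection share the same maximal cardinality, so applying $\lexf$ again keeps all of them.

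For part 2(a) the unifying idea is to introduce the invariant \emph{closed}, meaning $\Cl(\A_i)=\A_i$ for every ABox of the MBox, and to show that (i) $\clf$ acts as the identity on closed MBoxes, and (ii) each elementary modifier maps closed MBoxes to closed MBoxes. Granting (i) and (ii), an easy induction on the number of elementary modifiers composing $\circ_{d}$ shows that $\circ_{d}(\cl{\M})$ is closed (since $\cl{\M}$ is closed by part 1), whence the outer $\clf$ leaves it unchanged. Fact (i) is immediate, and among the modifiers in (ii) only $\inclf$ is delicate: $\clf$ preserves closedness by part 1, and $\lexf$ does so because it only discards ABoxes.

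The crux is thus the claim that a repair $\R$ of a closed ABox $\A_i$ is itself closed. I would argue as follows: by monotonicity of the closure and since $\A_i$ is closed, $\R\subseteq\Cl(\R)\subseteq\Cl(\A_i)=\A_i$; moreover $\Cl(\R)$ is consistent because $\R$ is (the preliminaries note that an ABox is consistent iff its closure is); hence $\Cl(\R)$ is a consistent subset of $\A_i$ containing $\R$, and maximality of the repair $\R$ forces $\Cl(\R)=\R$. Part 2(b) is then entirely analogous with the invariant ``every ABox is consistent'': $\clf$ preserves it (by the same consistency fact), $\lexf$ preserves it trivially, $\inclf$ preserves it since repairs are consistent, and $\inclf$ is the identity on a consistent MBox, so $\incl{\circ_{d}(\incl{\M})}=\circ_{d}(\incl{\M})$.

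The main obstacle I anticipate is exactly the closedness of repairs used in part 2(a): the natural worry is that deleting atoms from a closed ABox to restore consistency could re-open it under closure, and the argument must instead exploit the chain $\R\subseteq\Cl(\R)\subseteq\A_i$ together with the consistency of $\Cl(\R)$ to invoke maximality. Once this is in place, the remaining steps --- the two idempotence computations and the inductions over composite modifiers --- are routine.
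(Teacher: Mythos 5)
Your proposal is correct and follows essentially the same route as the paper: part 1 by direct per-modifier arguments (entailment transitivity for $\clf$, consistency of repairs for $\inclf$, equal cardinalities for $\lexf$), and part 2 by showing the invariants ``closed'' and ``consistent'' are preserved by each elementary modifier and hence, by induction, by any composite one. The only cosmetic difference is at the crux of 2(a): the paper derives a contradiction atom-by-atom from a missing $f\in\Cl(\A')\setminus\A'$, whereas you apply repair maximality to the whole set $\Cl(\R)\subseteq\A_i$ at once --- the same argument, slightly more cleanly packaged.
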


\begin{figure}[t]
\tiny
\begin{tikzpicture}[ >=stealth']
\tikzstyle{mbox}=[rectangle,draw, inner sep=0.8pt, outer sep=0,minimum size=0cm];
\node[mbox] (start) at (-0.3,0) {\tab{$\K_\M$=$\KB{\T}{\M = \{\A\}}$}};
\node[mbox] (C) at (1.5,-0.7) {\tab{Expansion: \\ $\cl{\M}$}};
\node[mbox] (CM) at  (1.5,-1.7) {\tab{Splitting:$\circ_7$=\\ $\incl{\cl{\M}}$}};
\node[mbox] (CML) at (1.5, -3.7) {\tab{Selection: $\circ_8$= \\ $\lex{\incl{\cl{\M}}}$}};
\node[mbox] (M) at (-2,-0.7) {\tab{Splitting:$\circ_1$ \\{=$\incl{\M}$}}};
\node[mbox] (MC) at (-0.9, -1.7) {\tab{Expansion:$\circ_5$\\ =$\cl{\incl{\M}}$}};
\node[mbox] (MCL) at (-0.9,-2.7) {\tab{Selection: $\circ_6$=\\ $\lex{\cl{\incl{\M}}}$}};

\node[mbox] (ML) at (-4, -1.7) {\tab{Selection: $\circ_2$=\\$\lex{\incl{\M}}$}};
\node[mbox] (MLC) at (-4, -2.7) {\tab{Expansion:$\circ_3$= \\ $\cl{\lex{\incl{\M}}}$}};
\node[mbox] (MLCL) at (-4,-3.7) {\tab{Selection: $\circ_4$= \\ $\lex{\cl{\lex{\incl{\M}}}}$}};

\draw [->] (start) --  (C);
\draw [->] (C) --  (CM);
\draw [->] (CM) -- (CML);
\draw [->] (start) --  (M);
\draw [->] (M) --  (ML) ;
\draw [->] (M) --  (MC) ;
\draw [->] (ML) --  (MLC);
\draw [->] (MLC) --  (MLCL);
\draw [->] (MC) --  (MCL);
\end{tikzpicture}
\vspace{-0.5cm}
\caption{\small The eight possible combinations of modifiers starting from a single MBox \kb{} $\K_\M$=$\KB{\T}{\M=\{\A\}}$}
\label{sch:comp-modif}
\vspace{-0.2cm}
\end{figure}

Figure \ref{sch:comp-modif} presents the eight different composite modifiers (thanks to Lemma \ref{lem:equi-modif}) that can be applied to an MBox initially composed of a single (possibly inconsistent) ABox.
At the beginning, one can perform either  an expansion or a splitting operation (the selection has no effect). Expansion can only be followed  by a splitting or a selection operation. 
From the MBox $\incl{\cl{\M}}$ only a selection can be performed, thanks to Lemma \ref{lem:equi-modif}. 
Similarly, if one starts with a splitting operation followed by a selection operation, then only an expansion can be  done (thanks to Lemma \ref{lem:equi-modif} again). 
From $\cl{\lex{\incl{\M}}}$ only a selection can be performed (Lemma \ref{lem:equi-modif} again). 

To ease reading, we also denote the  modifiers by short names reflecting the order in which the elementary modifiers are applied, and using the following letters:  $\mathsf{R}$ for $\inclf$, $\mathsf{C}$ for $\clf$ and $\mathsf{M}$ for $\lexf$ as shown in Table \ref{tab:abox-modifs}. For instance, $\mf$ denotes the modifier  that first splits the initial ABox into its set of repairs, then closes these repairs and finally selects the maximal-cardinality elements.

\begin{table}
\centering\small
\begin{tabular}{|c|c|c|}
\hline Modifier & Combination& MBox \tabularnewline
\hline
$\ma$ & $\circ_1=\incl{.}$ & $\M_1=\circ_1(\M)$
\tabularnewline
\hline
$\mb$ & $\circ_2=\lex{\incl{.}}$ & $\M_2=\circ_2(\M)$
\tabularnewline
\hline
$\mc$ & $\circ_3=\cl{\lex{\incl{.}}}$ & $\M_3=\circ_3(\M)$
\tabularnewline
\hline
$\md$ & $\circ_4=\lex{\cl{\lex{\incl{.}}}}$ & $\M_4=\circ_4(\M)$
\tabularnewline
\hline
$\me$ & $\circ_5=\cl{\incl{.}}$   & $\M_5=\circ_5(\M)$
 \tabularnewline
\hline
$\mf$  & $\circ_6=\lex{\cl{\incl{.}}}$ & $\M_6=\circ_6(\M)$
\tabularnewline
\hline
 $\mg$  & $\circ_7=\incl{\cl{.}}$ & $\M_7=\circ_7(\M)$
\tabularnewline
\hline
$\mh$  & $\circ_8=\lex{\incl{\cl{.}}}$ & $\M_8=\circ_8(\M)$
\tabularnewline
\hline
\end{tabular}
\caption{The eight possible composite modifiers for an MBox $\K_\M$=$\KB{\T}{\M=\{\A\}}$}
\label{tab:abox-modifs}
\vspace{-0.5cm}
\end{table}

\begin{theorem}
\label{th:eight-modifier}
Let $\K_\M$=$\KB{\T}{\M=\{\A\}}$ be a possibly inconsistent  \kb{}. Then for any composite modifier $\circ_c$ that can be obtained by a finite combination of the elementary modifiers $\inclf$, $\lexf$, $\clf$, there exists a composite modifier $\circ_i$ in $\{\circ_1 \ldots \circ_8\}$ (see Table \ref{tab:abox-modifs}) such that $\circ_c(\M)$=$\circ_i(\M)$.
\end{theorem}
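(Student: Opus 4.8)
The plan is to regard a composite modifier as a finite word over the three-letter alphabet $\{\mathsf{R},\mathsf{C},\mathsf{M}\}$ (standing for $\inclf$, $\clf$, $\lexf$), written in \emph{application order} so that the leftmost letter is applied first to $\M=\{\A\}$; for instance $\circ_4=\lex{\cl{\lex{\incl{.}}}}$ is the word $\mathsf{R}\mathsf{M}\mathsf{C}\mathsf{M}$. I would then exhibit a small set of length-decreasing, value-preserving rewriting rules (each justified either by Lemma~\ref{lem:equi-modif} or by an elementary observation) and prove two things: (i) every rewriting step preserves $\circ_c(\M)$, and (ii) the words on which no rule applies are exactly the eight words encoding $\circ_1,\dots,\circ_8$. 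Since the rules strictly shorten the word, rewriting terminates, so any word reduces to an irreducible one; by (ii) that irreducible word denotes one of the eight modifiers, and by (i) it computes the same MBox. This turns the theorem into a finite, checkable combinatorial statement.

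The rules I would use are: (1) \emph{no adjacent repeats}, $\mathsf{R}\mathsf{R}\to\mathsf{R}$, $\mathsf{C}\mathsf{C}\to\mathsf{C}$, $\mathsf{M}\mathsf{M}\to\mathsf{M}$, sound by the idempotence of each elementary modifier (part~1 of Lemma~\ref{lem:equi-modif}, applied to the current MBox); (2) \emph{at most one $\mathsf{C}$}: delete every $\mathsf{C}$ that has another $\mathsf{C}$ to its left, sound because, writing the factor strictly between the first $\mathsf{C}$ and a later one as a composite $\circ_d$, the later $\mathsf{C}$ becomes the redundant outer closure $\cl{\circ_d(\cl{\cdot})}=\circ_d(\cl{\cdot})$ of part~2(a); (3) \emph{at most one $\mathsf{R}$}: symmetrically delete every $\mathsf{R}$ that has another $\mathsf{R}$ to its left, sound by part~2(b); (4) \emph{no $\mathsf{M}$ before the first $\mathsf{R}$}: delete any $\mathsf{M}$ occurring in the prefix preceding the first $\mathsf{R}$. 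Rule~(4) is sound because $\clf$ and $\lexf$ both map a singleton MBox to a singleton MBox, so the MBox stays a singleton until the first $\mathsf{R}$ is applied, and $\lexf$ is the identity on singletons; this is the remark that ``the selection has no effect at the beginning'', extended to any position before the first split. Each rule deletes or merges a letter, so the word length strictly decreases and the process terminates.

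It then remains to enumerate the irreducible words. Assuming the word contains at least one $\mathsf{R}$ (see below), rules~(2)--(4) force exactly one $\mathsf{R}$, at most one $\mathsf{C}$, and all $\mathsf{M}$'s after that single $\mathsf{R}$, while rule~(1) forbids two adjacent $\mathsf{M}$'s, so each maximal run of $\mathsf{M}$'s collapses to one letter. Splitting on the position of the (at most one) $\mathsf{C}$ relative to the unique $\mathsf{R}$ gives: no $\mathsf{C}$, yielding $\mathsf{R}$ or $\mathsf{R}\mathsf{M}$ ($\circ_1,\circ_2$); $\mathsf{C}$ before $\mathsf{R}$, yielding $\mathsf{C}\mathsf{R}$ or $\mathsf{C}\mathsf{R}\mathsf{M}$ ($\circ_7,\circ_8$); and $\mathsf{C}$ after $\mathsf{R}$, where an $\mathsf{M}$ may sit in each of the two gaps (between $\mathsf{R}$ and $\mathsf{C}$, and after $\mathsf{C}$), yielding $\mathsf{R}\mathsf{C}$, $\mathsf{R}\mathsf{M}\mathsf{C}$, $\mathsf{R}\mathsf{C}\mathsf{M}$, $\mathsf{R}\mathsf{M}\mathsf{C}\mathsf{M}$ ($\circ_5,\circ_3,\circ_6,\circ_4$). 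These are precisely the eight words of Table~\ref{tab:abox-modifs}. I would note explicitly that a word containing no $\mathsf{R}$, applied to an inconsistent $\A$, produces an inconsistent MBox, whereas all eight target modifiers yield consistent MBoxes; this is why the claim concerns composite modifiers yielding consistent MBoxes, i.e.\ those containing at least one $\inclf$, as in the surrounding discussion.

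The bulk of the technical content is carried by Lemma~\ref{lem:equi-modif}, so in my own argument the main points to get right are twofold. First, the soundness of rules~(2) and~(3) must be stated so that the lemma applies to an \emph{arbitrary} intervening composite $\circ_d$ and an \emph{arbitrary} current MBox, not merely at the top level; this is exactly the form in which part~2 is phrased, so it goes through, but it is the step most easily mishandled. Second, I do not need full confluence of the rewriting system: it suffices that it terminates and that its irreducible words are the eight listed ones, which the enumeration establishes; I would therefore argue carefully that the enumeration is exhaustive (the only freedom left after the four rules is the $\mathsf{C}$/$\mathsf{R}$ order together with one optional $\mathsf{M}$ per gap) rather than rely on any particular order of applying the rules.
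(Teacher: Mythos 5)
Your proposal is correct and follows essentially the same route as the paper: the paper's entire proof of Theorem~\ref{th:eight-modifier} is an appeal to Lemma~\ref{lem:equi-modif} together with the informal enumeration accompanying Figure~\ref{sch:comp-modif}, and your rewriting rules (idempotence, absorption of repeated $\clf$ and $\inclf$ via parts 2(a)--(b), and deletion of $\lexf$ before the first $\inclf$ on singleton MBoxes) are exactly a formalization of that sketch. Your version supplies the termination and exhaustiveness arguments the paper leaves implicit, and correctly flags that the statement must be read as restricted to combinations containing at least one $\inclf$ (equivalently, those yielding consistent MBoxes), a caveat the paper handles only in the surrounding discussion.
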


\begin{example}
\label{exp:mboxmodif}
Let $\K_\M$=$\KB{\T}{\M}$ be an MBox \dllite{} \kb{} where
$\T$=$\{A\isn B$, $A \isn C$, $B\isn C$, $A\isa D$, $B\isa D$, $C\isa D$, $B\isa E$, $C\isa E\}$ and
$\M$=$\{\{A(a),B(a),C(a),A(b)\}\}$. We have \\
$\circ_1(\M)$=$\{\{A(a),A(b)\}$,$\{B(a),A(b)\}$,$\{C(a),A(b)\}\}$,
$\circ_5(\M)$=$\{\{A(a),D(a),A(b),D(b)\}$,$\{B(a),D(a),E(a)$, $A(b)$,$D(b)\}$, $\{C(a),D(a),E(a),A(b),D(b)\}\}$, and \\
$\circ_6(\M)$=$\{\{B(a),D(a),E(a),A(b),D(b)\}, \{C(a),D(a)$, $E(a),A(b),D(b)\}\}$.
\end{example}

The composite modifiers can be classified according to "inclusion" as depicted in Figure \ref{sch:mbox-modif-relats}. We consider the relation, denoted by $\subseteq_R$, defined as follows:   given two modifiers $X$ and $Y$, $X$$\subseteq_R$$Y$ if, for any MBox $\M$, for each $A$$\in$$X(\M)$ there is $B$$\in$$Y(\M)$  such that $A \subseteq B$. We also consider two specializations of $\subseteq_R$:  the true inclusion $\subseteq$ (i.e., $X(\M) \subseteq Y(\M)$) and the "closure" inclusion, denoted by $\subseteq_{cl}$: $X$$\subseteq_{cl}$$Y$ if $Y(\M)$ is the positive closure of $X(\M)$ (then each  $A \in X(\M)$ is included in its closure in $Y(\M)$). In Figure \ref{sch:mbox-modif-relats}, there is an edge  from a modifier $X$ to a modifier $Y$ iff $X \subseteq_R Y$. We label each edge by the most specific inclusion relation that holds from $X$ to $Y$. Transitivity edges are not represented.

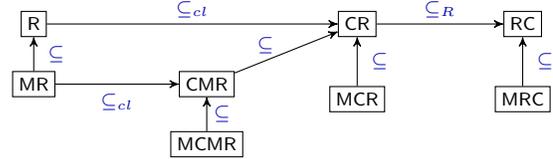
\begin{figure}[!h]
\centering\scriptsize
\begin{tikzpicture}[>=stealth']
\tikzstyle{mbox}=[rectangle,draw,fill=none];
\node[mbox] (m1) at (0,0)  {$\ma$};
\node[mbox] (m5) at (4.3,0)  {$\me$};
\node[mbox] (m7) at (6.5,0)  {$\mg$};
\node[mbox] (m2) at (0,-.8) {$\mb$};
\node[mbox] (m3) at (2.3,-.8) {$\mc$};
\node[mbox] (m4) at (2.3,-1.6) {$\md$};
\node[mbox] (m6) at (4.3,-1) {$\mf$};
\node[mbox] (m8) at (6.5,-1) {$\mh$};
\draw [->] (m1) -- (m5)node [above=0.1pt, pos=0.5,black!30!blue] {$\subseteq_{cl}$};
\draw [->] (m5) -- (m7) node [above=0.1pt, pos=0.5,black!30!blue] {$\subseteq_{R}$};
\draw [->] (m2) -- (m1) node [pos=0.5, right=0.1,black!30!blue] {$\subseteq$};
\draw [->] (m6) -- (m5) node [pos=0.5, right=0.1,black!30!blue] {$\subseteq$};
\draw [->] (m8) -- (m7) node [pos=0.5, right=0.1,black!30!blue] {$\subseteq$};
\draw [->] (m2) -- (m3) node [pos=0.5, below=1pt,black!30!blue] {$\subseteq_{cl}$};
\draw [->] (m4) -- (m3) node [pos=0.5, right=0.1pt, black!30!blue] {$\subseteq$};
\draw [->] (m3) -- (m5) node [pos=0.3, above=0.1pt,black!30!blue]{$\subseteq$};
\end{tikzpicture}
\vspace{-0.2cm}
\caption{Inclusion relations between composite modifiers.}
\label{sch:mbox-modif-relats}
\vspace{-0.3cm}
\end{figure}

\noindent With any $X$ and $Y$ such that $ X$$\subseteq_R$$Y$, one can naturally associate, for any MBox $\M$, a mapping from Mbox $X(\M)$ to MBox $Y(\M)$, which assigns each $A$$\in$$X(\M)$ to a $B$$\in$$Y(\M)$ such that $A$$\subseteq$$B$. We point out the following useful facts:
\begin{description}
\item [Fact 1] The MBox mapping associated with $\subseteq_R$ is injective in all our cases.
\item [Fact 2] The MBox mapping associated with $\subseteq_{cl}$ is surjective (hence bijective). The same holds for the mapping from $\me$ to $\mg$.
\end{description}


\subsection{Inference Strategies for Querying an MBox}
\label{sbsec:multibox-inf}

An inference-based strategy takes as input a consistent MBox \kb{} $\K_\M$=$\KB{\T}{\M}$ and a query $q$ and determines if $q$ is entailed from $\K_\M$. We consider four main inference strategies: 
universal (also known as skeptical), safe, majority-based and existential (also called brave). \footnote{Of course, one can consider other inference strategies such as the argued inference, parametrized inferences, etc. This is left for future work. }

The \emph{universal} inference strategy states that a conclusion is valid iff it is entailed from $\T$ and every ABox in $\M$.  It is a standard way to derive conclusions from conflicting sources, used for instance in default reasoning \cite{reiter1980logic}, where one only accepts conclusions derived from each extension of a default theory.
 The \emph{safe} inference strategy considers as valid conclusions those  entailed from $\T$ and the intersection of all ABoxes.  The safe inference is a very sound and conservative inference relation since it only considers assertions shared by different ABoxes. The \emph{existential} inference strategy (called also brave inference relation) considers as valid all conclusions  entailed from $\T$ and at least one ABox.
 The existential inference is a very adventurous inference relation and may derive conclusions that are together inconsistent with $\T$. It is often considered as undesirable when the \kb{} represents available knowledge base on some problem. It only makes sense in some decision problems when one is only looking for a possible solution of a set of constraints or preferences. 
 Finally,  the \emph{majority-based} inference relation considers as valid all conclusions   entailed  from $\T$ and the majority of ABoxes. The majority-based inference can be seen as a good compromise between universal / safe inference  and existential inference.

We formally define these inference strategies as follows:
 \begin{itemize}
 \item Query $q$ is a \emph{universal} consequence of $\K_\M$, denoted by $\K_\M \models_{\forall} q$ iff $\forall \A_i \in \M$,$\KB{\T}{\A_i} \models q$.
 \item Query $q$ is a \emph{safe} consequence of $\K_\M$, denoted by $\K_\M \models_{\cap} q$, iff $\KB{\T}{\bigcap_{\A_i\in \M}\A_i} \models q$.
 \item Query $q$ is a \emph{majority-based} consequence of $\K_\M$, denoted $\K_\M \models_{maj} q$, iff $\left. \frac{|\A_i: \A_i \in \M,\KB{\T}{\A_i}\models q|}{|\M|}  \right. > 1/2$.
\item  Query $q$ is an \emph{existential} consequence of $\K_\M$, denoted by $\K_\M \models_{\exists} q$ iff $\exists \A_i \in \M$, $\KB{\T}{\A_i} \models q$.
\end{itemize}

Given two inference strategies $s_i$ and $s_j$, we say that  $s_i$ is \emph{more cautious} than $s_j$, denoted $s_i \leq s_j$, when for any consistent MBox $\K_\M$ and any query $q$,  if $\K_\M$$\models_{s_i}$$q$ then $\K_\M$$\models_{s_j}$$q$. The considered inference strategies are totally ordered by $\leq$ as follows:
\begin{equation}
\label{eq:inferrelation}
\cap \leq \forall \leq maj \leq \exists
\end{equation}
\begin{figure}[H]
\centering\scriptsize
\begin{tikzpicture}[ >=stealth']
\node[draw] (safe) {safe inference};
\node[draw, below=.4cm of safe] (forall) {universal inference};
\node[draw, below=.4cm of forall] (majority) {majority-based inference};
\node[draw, below=.4cm of majority] (exist) {existential inference};
\draw [->] (safe) --  (forall);
\draw [->] (forall)  -- (majority);
\draw [->]  (majority) -- (exist);
\end{tikzpicture}
\caption{\small Comparison between inference strategies, where $X$$\longrightarrow$$Y$ means that $X \leq Y$}
\label{sch:comp-inf-stg}
\end{figure}
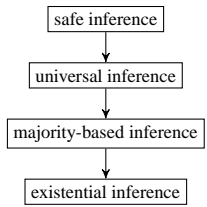

\begin{example}
\label{exp:inferstg}
Let us consider the MBox $\M_1$=$\circ_1(\M)$ given in Example  \ref{exp:mboxmodif}.  We have $\bigcap_{\A_i\in \M}$$\A_i$=$\{A(b)\}$, hence $\K_{\M_{1}}$$\models_{\cap}$$D(b)$. By universal inference, we also have $\K_{\M_{1}}$$\models_{\forall}$$D(a)$. The majority-based inference adds $E(a)$ as a valid conclusion. Indeed, $\KB{\T}{\{B(a),A(b)\}}$$\models$$E(a)$ and $\KB{\T}{\{C(a),A(b)\}}$$\models$$E(a)$ and $|\M_1|$=$3$, hence $\K_{\M_{1}}$$\models_{maj}$$E(a)$. Finally, the existential inference adds $A(a)$ as a valid conclusion.
\end{example}


\subsection{Inconsistency-Tolerant Semantics = Composite Modifier + Inference Strategy}

We can now define an inconsistency-tolerant query answering semantics by a composite modifier and an inference strategy.

\begin{definition}
\label{def:generic-inf}
Let $\K$=$\KB{\T}{\A}$ be a standard {\kb},
$\circ_i$ be a composite modifier and $s_j$
be an inference strategy. A query $q$ is
said to be an
\emph{$\infr{\circ_i}{s_j}$-consequence} of
$\K$, which is denoted by $\K \infer
{i}{s_j}q$, if it is entailed from the MBox KB
$\KB{\T}{\circ_i(\{\A\})}$ with the inference strategy $s_j$.
\end{definition}

This definition covers the main semantics recalled in Section \ref{sec:dllite}:  AR, IAR, CAR, ICAR and ICR semantics respectively correspond to $\langle \m_1,\forall \rangle$, $\langle \m_1,\cap \rangle$,  $\langle \m_7, \forall \rangle$ $\langle \m_7, \cap \rangle$ and $\langle \m_5, \cap \rangle$.


\section{Comparison of Inconsistency-Tolerant Semantics w.r.t. Productivity}
\label{sec:inferprod}
We now compare the obtained semantics with respect to productivity, which we formalize as follows.

\begin{definition}
Given two semantics $\infr{\circ_i}{s_k}$ and $\infr{\circ_j}{s_l}$, we say that $\infr{\circ_j}{s_l}$ is \emph{more productive} than $\infr{\circ_i}{s_k}$, and note
$\infr{\circ_i}{s_k} \sqsubseteq \infr{\circ_j}{s_l}$ if, for any KB  $\K$=$\KB{\T}{\A}$ and any query $q$,  if $\K \infer i {s_k} q$ then $\K \infer j {s_l} q$.
\end{definition}

We first pairwise compare semantics defined with the same inference strategy. For each inference strategy, we give necessary and sufficient conditions for
the comparability of the associated semantics w.r.t. productivity. These conditions rely on the inclusion relations  between modifiers (see Figure  \ref{sch:mbox-modif-relats}).

\begin{proposition}\label{prop:safeprod}[Productivity of $\cap$-semantics] See Figure \ref{sch:safe-inf}. It holds that $\langle \circ_i , \cap \rangle \sqsubseteq \langle \circ_j , \cap \rangle$ iff $\circ_j \subseteq \circ_i$ or $\circ_i \subseteq_R \circ_j$ in a bijective way (see Fact 2).
\end{proposition}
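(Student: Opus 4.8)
The plan is to prove the biconditional for the $\cap$-semantics by relating the safe consequence of an MBox directly to the intersection of its member ABoxes. The key observation is that $\langle \circ_i, \cap\rangle$ answers $q$ positively iff $\KB{\T}{I_i}\models q$, where $I_i = \bigcap_{A\in\circ_i(\M)} A$ is the intersection of all ABoxes produced by $\circ_i$. Consequently, $\langle \circ_i,\cap\rangle \sqsubseteq \langle\circ_j,\cap\rangle$ holds for all $\K$ and $q$ iff $I_i \subseteq I_j$ as sets of assertions (the ``only if'' direction here uses that each assertion $A\in I_i$ is itself a query, so if $\KB{\T}{I_i}\models A$ always propagates to $\KB{\T}{I_j}$, then $A$ must already lie in $I_j$ up to logical equivalence; one must be slightly careful and argue at the level of closures, but the intended statement is that productivity reduces to containment of intersections). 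So the whole proposition becomes: $I_i\subseteq I_j$ iff $\circ_j\subseteq\circ_i$ or $\circ_i\subseteq_{cl}\circ_j$ (more precisely $\subseteq_R$ in the bijective sense of Fact 2).

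For the ``if'' direction I would treat the two cases separately. First, suppose $\circ_j\subseteq\circ_i$, i.e.\ $\circ_j(\M)\subseteq\circ_i(\M)$ as sets of ABoxes. Then the intersection over a subfamily can only be larger: $I_i=\bigcap_{A\in\circ_i(\M)}A \subseteq \bigcap_{A\in\circ_j(\M)}A = I_j$, giving $I_i\subseteq I_j$ and hence productivity. Second, suppose $\circ_i\subseteq_R\circ_j$ holds bijectively via a closure, as in Fact 2 (the cases $\ma\subseteq_{cl}\me$, $\mb\subseteq_{cl}\mc$, and the $\me$-to-$\mg$ mapping). Here the bijection pairs each $A\in\circ_i(\M)$ with its closure $\Cl(A)\in\circ_j(\M)$, and since $A\subseteq\Cl(A)$ for every matched pair, intersecting over the two families preserves inclusion: $I_i\subseteq I_j$. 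I would note that closure adds only consequences of $\T_p$, so anything derivable from $I_i$ remains derivable from the larger $I_j$, and in fact the safe consequences coincide because the added atoms are themselves $\T$-consequences.

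For the ``only if'' direction — which I expect to be the main obstacle — I must show that whenever $\circ_i\subseteq_R\circ_j$ holds \emph{without} the bijectivity/closure structure, or whenever no inclusion relation holds at all, productivity fails. The strategy is to inspect Figure \ref{sch:mbox-modif-relats} case by case: for each edge labelled merely $\subseteq_R$ (rather than $\subseteq$ or $\subseteq_{cl}$) and for each incomparable pair, exhibit (or invoke) a counterexample KB showing $I_i\not\subseteq I_j$. The subtle point is an edge like $\me\subseteq_R\mg$, where splitting a closed ABox into repairs can \emph{shrink} the safe intersection: selecting maximal-cardinality or re-splitting discards atoms shared across the coarser family, so $I_j$ may omit assertions present in $I_i$. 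I would build a small \dllite\ example, analogous to Example \ref{exp:mboxmodif}, where some atom lies in every ABox of $\circ_i(\M)$ but is dropped from at least one ABox of $\circ_j(\M)$, yielding a query answered under $\langle\circ_i,\cap\rangle$ but not under $\langle\circ_j,\cap\rangle$. The bookkeeping of which of the eight modifiers dominate which under intersection — and verifying that exactly the claimed pairs survive — is the labor-intensive core, but each individual check is a direct computation on the intersection sets.
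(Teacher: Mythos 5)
Your overall skeleton --- reduce safe-consequence productivity to inclusion of the intersections $I_i \subseteq I_j$, prove the positive cases by intersecting over a true inclusion or over a bijection, and dispose of all remaining pairs with counterexamples --- is essentially the paper's own approach (its Lemma \ref{lem:relation-inf}, Lemma \ref{lem:relation-inf2}, and a case-by-case list of counterexample KBs). But there is a genuine error in how you dispatch the edge $\me \subseteq_R \mg$ (i.e.\ $\circ_5$ versus $\circ_7$, the ICR/ICAR pair), and it is self-contradictory within your own text. In your ``if'' direction you correctly list the $\me$-to-$\mg$ mapping among the bijective cases of Fact 2, yet in your ``only if'' direction you single out that very edge as one ``labelled merely $\subseteq_R$'' for which you would build a \dllite{} example in which some atom lies in every ABox of $\circ_5(\M)$ but is dropped from some ABox of $\circ_7(\M)$. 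No such example exists: Fact 2 states that the mapping from $\me$ to $\mg$ is surjective, hence bijective, so this pair falls under the \emph{positive} clause of the proposition, and indeed $\infr{\me}{\cap} \sqsubseteq \infr{\mg}{\cap}$ is an edge of Figure \ref{sch:safe-inf}. The paper proves this case by a dedicated argument (item 7 of its first proposition for Figure \ref{sch:safe-inf}): an assertion lying in every element of $\M_5$ cannot belong to any conflict of $\Cl(\A)$, hence it survives in every repair of the closure, i.e.\ it lies in $\bigcap_{\A_j \in \M_7}\A_j$. This is precisely the nontrivial content of the ``if'' direction, and your plan would spend its effort searching for a counterexample that cannot be found.

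Two smaller points. First, your description of the bijection in the $\me$/$\mg$ case (``pairs each $A$ with its closure $\Cl(A)$'') is inaccurate: the elements of $\M_5$ are already closed; the mapping sends a closed repair to a repair of $\Cl(\A)$ containing it, and its surjectivity is exactly what needs proof --- it is not definitional as in the $\subseteq_{cl}$ cases, which is why the paper argues it separately. Second, productivity is equivalent to $\KB{\T}{I_j} \models q$ holding whenever $\KB{\T}{I_i} \models q$, not to the literal containment $I_i \subseteq I_j$; containment suffices on the positive side, but on the negative side each counterexample must exhibit a concrete query whose entailment fails (as the paper's examples do), not merely an assertion missing from $I_j$. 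You flag this caveat yourself and it is repairable; the misclassification of the $\me$/$\mg$ pair is not.
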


\begin{figure}[htb]
\scriptsize\centering
\begin{tikzpicture}[>=stealth']
\tikzstyle{infer}=[rectangle,draw,fill=none];
\node[infer] (freeM1) {$\infr{\ma}{\cap}$};
\node[infer] at (-1,-1) (freeM2) {$\infr{\mb}{\cap}$};
\node[infer] at(1,-1) (freeM5) {$\infr{\me}{\cap}$};
\node[infer, below=.4cm of freeM2] (freeM3) {$\infr{\mc}{\cap}$};
\node[infer, below=.4cm of freeM3] (freeM4) {$\infr{\md}{\cap}$};
\node[infer, below  =.4cm of freeM5] (freeM6) {$\infr{\mf}{\cap}$};
\node[infer, right=.4cm of freeM6] (freeM7) {$\infr{\mg}{\cap}$};
\node[infer, below=.4cm of freeM7] (freeM8) {$\infr{\mh}{\cap}$};

\draw [->] (freeM1) -- (freeM2);
\draw [->] (freeM1) -- (freeM5);
\draw [->] (freeM2) -- (freeM3);
\draw [->] (freeM3) -- (freeM4);
\draw [<-] (freeM3) -- (freeM5);
\draw [->] (freeM5) -- (freeM6);
\draw [->] (freeM5) -- (freeM7);
\draw [->] (freeM7) -- (freeM8);

\node[draw=none] (less) at (4.5,0) {less  productive};
\node[draw=none] (more) at (4.5,-2.8) {more productive};
\draw [->] (less) -- (more);
\end{tikzpicture}
\caption{Relationships between $\cap$-based semantics}
\label{sch:safe-inf}
\vspace{-0.2cm}
\end{figure}
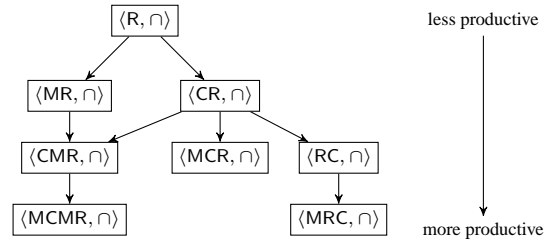

\begin{proposition}\label{prop:univprod}[Productivity of $\forall$-semantics] See Figure \ref{sch:all-inf}. It holds that $\langle \circ_i , \forall \rangle \sqsubseteq \langle \circ_j , \forall \rangle$ iff $\circ_j \subseteq \circ_i$, or $\circ_i \subseteq_R \circ_j$ in a bijective way (see Fact 2) or $\circ_j \subseteq_{cl} \circ_i$.
\end{proposition}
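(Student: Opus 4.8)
The plan is to prove the two implications of the \emph{iff} separately: sufficiency of each of the three conditions by a uniform monotonicity argument, and necessity by a finite inspection of the eight modifiers. Throughout, recall that $\K \infer{i}{\forall} q$ holds exactly when every ABox $A \in \circ_i(\{\A\})$ satisfies $\KB{\T}{A} \models q$, so universal inference is \emph{antitone} in the set of ABoxes and \emph{monotone} under enlarging each individual ABox. For the first condition, if $\circ_j \subseteq \circ_i$ then $\circ_j(\{\A\}) \subseteq \circ_i(\{\A\})$, and a query entailed by all ABoxes of $\circ_i(\{\A\})$ is a fortiori entailed by all ABoxes of the smaller MBox $\circ_j(\{\A\})$. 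For the second condition, if $\circ_i \subseteq_R \circ_j$ in a bijective way (\eg{} the edge from $\me$ to $\mg$, Fact 2), then every $B \in \circ_j(\{\A\})$ is the image of some $A \in \circ_i(\{\A\})$ with $A \subseteq B$; from $\KB{\T}{A} \models q$ and monotonicity of standard entailment we obtain $\KB{\T}{B} \models q$, and surjectivity guarantees that \emph{every} $B$ is reached.

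The third condition is where universal inference departs from the safe case of Proposition \ref{prop:safeprod}. If $\circ_j \subseteq_{cl} \circ_i$ then $\circ_i(\{\A\}) = \cl{\circ_j(\{\A\})}$, so each $A \in \circ_i(\{\A\})$ equals $\Cl(B)$ for a (unique, by Fact 2) $B \in \circ_j(\{\A\})$. The key observation is that $\KB{\T}{B}$ and $\KB{\T}{\Cl(B)}$ entail exactly the same queries: since $\Cl(B)$ only adds atoms already derivable from $B$ via the positive axioms $\T_p \subseteq \T$, we have $\KB{\T}{B} \models q \iff \KB{\T}{\Cl(B)} \models q$. Hence the two semantics actually coincide here and productivity holds (indeed in both directions); this fails for safe inference precisely because closing each ABox genuinely enlarges the intersection. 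It is also worth recording why \emph{bijectivity} cannot be dropped from the second condition: a merely injective, non-surjective $\subseteq_R$ correspondence would leave some $B \in \circ_j(\{\A\})$ outside the image, and such a $B$ may fail to entail $q$, so universal inference would break.

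For necessity, the approach is a finite case analysis over the ordered pairs from $\{\circ_1,\ldots,\circ_8\}$ organised by Figure \ref{sch:all-inf}. For every pair for which productivity is claimed, I would exhibit a witnessing relation by reading it off the inclusion diagram of Figure \ref{sch:mbox-modif-relats} and its transitive closure (for example, the $\subseteq_{cl}$ edge $\ma \subseteq_{cl} \me$ yields the equivalence of $\langle \ma,\forall\rangle$ and $\langle \me,\forall\rangle$), checking that it instantiates one of the three stated conditions. For each remaining pair none of the conditions holds, and I would refute productivity with a single concrete KB, in the style of Example \ref{exp:mboxmodif}, chosen so that some query is a universal consequence under $\circ_i$ but not under $\circ_j$. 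The main obstacle is exactly this family of counterexamples: for each incomparable pair one must design a TBox and ABox whose repairs, closures and cardinality selections separate the two semantics simultaneously. The sufficiency half, by contrast, is routine once the closure-invariance of $\models$ is in place.
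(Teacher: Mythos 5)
Your sufficiency half is correct and follows essentially the paper's own route: condition~1 is the antitonicity of universal inference under MBox inclusion (the paper's Lemma~\ref{lem:relation-inf}), condition~3 is the closure-invariance of entailment over consistent ABoxes (Lemma~\ref{lem:relation-inf2}, items~1), and condition~2 is the monotonicity-plus-surjectivity argument, which is exactly how the paper derives $\langle \me,\forall\rangle \sqsubseteq \langle \mg,\forall\rangle$ (its item~5, where the surjectivity of the mapping from $\M_5$ to $\M_7$ is established and then used). Your observation that $\subseteq_{cl}$ yields equivalence of the two $\forall$-semantics, not just one-way productivity, also matches the paper (the equivalences $\infr{\ma}{\forall}\equiv\infr{\me}{\forall}$ and $\infr{\mb}{\forall}\equiv\infr{\mc}{\forall}$ in Figure~\ref{sch:all-inf}).

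The genuine gap is the necessity direction, which you describe but do not carry out. The proposition is an ``iff'': you must show that whenever \emph{none} of the three conditions holds between $\circ_i$ and $\circ_j$, the productivity $\langle\circ_i,\forall\rangle \sqsubseteq \langle\circ_j,\forall\rangle$ actually fails, and the only way to do this is to exhibit, for each such ordered pair, a concrete KB and query witnessing the failure. You state the plan (``I would refute productivity with a single concrete KB'') and even flag these constructions as ``the main obstacle'', but you build none of them. This is precisely where the paper's proof spends almost all of its effort: it gives an explicit counterexample for every missing reciprocal edge (for instance $\T=\{A \isa B, B \isn C\}$, $\A=\{A(a),B(a),C(a)\}$, $q= A(a)$ separates $\langle\mb,\forall\rangle$ from $\langle\ma,\forall\rangle$; and $\T=\{A\isn B, B\isa D\}$, $\A=\{A(a),B(a)\}$, $q= D(a)$ separates $\langle\mg,\forall\rangle$ from $\langle\me,\forall\rangle$), plus a further family of examples establishing all the incomparabilities ($\mf$ versus $\mg$ and $\mh$; each of $\mb$, $\mc$, $\md$ versus each of $\mf$, $\mg$, $\mh$), each requiring a purpose-built TBox whose repairs, closures and cardinality selections pull the two semantics apart in both directions. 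Without these witnesses the ``only if'' half --- which is the substantive content of the proposition, since it asserts that Figure~\ref{sch:all-inf} is \emph{complete} and no further arrows can be added --- remains unproven.
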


\begin{figure}[htb]
\scriptsize\centering
\begin{tikzpicture}[>=stealth']
\tikzstyle{infer}=[rectangle,draw,fill=none];
\node[infer] (allM1) {$\infr{\ma}{\forall} \equiv \infr{\me}{\forall}$};
\node[infer,  below =.3cm of allM1] (allM6) {$\infr{\mf}{\forall}$};
\node[infer,  left =.3cm of allM6] (allM2) {$\infr{\mb}{\forall}$$\equiv$$\infr{\mc}{\forall}$};
\node[infer,  below = .3cm of allM2] (allM4) {$\infr{\md}{\forall}$};
\node[infer, right =.3cm of allM6] (allM7) {$\infr{\mg}{\forall}$};
\node[infer,  below=.3cm of allM7] (allM8) {$\infr{\mh}{\forall}$};
\draw [->] (allM1) -- (allM2);
\draw [->] (allM2) -- (allM4);
\draw [->] (allM1) -- (allM6);
\draw [->] (allM1) -- (allM7);
\draw [->] (allM7) -- (allM8);

\node[draw=none] (less) at (3.3,0) {less  productive};
\node[draw=none] (more) at (3.3,-1.6) {more productive};
\draw [->] (less) -- (more);

\end{tikzpicture}
\caption{Relationships between $\forall$-based semantics}
\label{sch:all-inf}
\vspace{-0.3cm}
\end{figure}
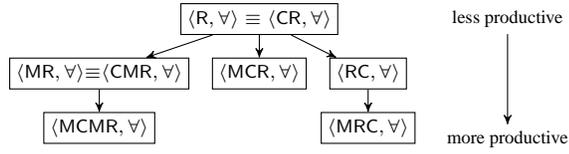

\begin{proposition}\label{prop:majprod} [Productivity of $maj$-semantics] See Figure \ref{sch:maj-inf}. It holds that $\langle \circ_i , maj \rangle \sqsubseteq \langle \circ_j , maj \rangle$ iff $\circ_i \subseteq_R \circ_j$ in a bijective way (see Fact 2) or $\circ_j \subseteq_{cl} \circ_i$ .
\end{proposition}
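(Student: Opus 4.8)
The plan is to prove the biconditional in two steps: the sufficiency (\emph{if}) direction by a counting argument on the majority ratio, and the necessity (\emph{only if}) direction by exhibiting, for every ordered pair of modifiers not linked by an admissible relation, a \kb{} and a query that break productivity. Throughout I rely on the inclusion relations and Facts 1 and 2 recalled with Figure \ref{sch:mbox-modif-relats}.

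\medskip
\noindent\textbf{Sufficiency.} There are two cases. First suppose $\circ_i \subseteq_R \circ_j$ holds in a bijective way, and let $f : \circ_i(\M) \to \circ_j(\M)$ be the witnessing bijection, so $A \subseteq f(A)$ for every $A \in \circ_i(\M)$ (Facts 1 and 2 guarantee such an $f$ in each relevant case). By monotonicity of classical entailment, $\KB{\T}{A} \models q$ implies $\KB{\T}{f(A)} \models q$. Since $f$ is a bijection the two MBoxes have equal cardinality, while the number of ABoxes of $\circ_j(\M)$ entailing $q$ is at least the number of ABoxes of $\circ_i(\M)$ entailing $q$; hence the majority ratio does not decrease and $\K \infer{i}{maj} q$ implies $\K \infer{j}{maj} q$. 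Second, suppose $\circ_j \subseteq_{cl} \circ_i$, i.e.\ $\circ_i(\M)$ is the positive closure of $\circ_j(\M)$. The closure map $B \mapsto \Cl(B)$ is again a bijection (Fact 2), and for any ABox $B$ one has $\KB{\T}{B} \models q$ iff $\KB{\T}{\Cl(B)} \models q$: the forward direction is monotonicity since $B \subseteq \Cl(B)$, and the backward direction holds because every atom of $\Cl(B)$ is a consequence of $\KB{\T_p}{B}$, so $\KB{\T}{B} \models \Cl(B)$ and we may cut. Thus each ABox of $\circ_j(\M)$ and its image in $\circ_i(\M)$ entail exactly the same queries, the two majority ratios coincide, the semantics are in fact equivalent, and in particular $\langle \circ_i, maj \rangle \sqsubseteq \langle \circ_j, maj \rangle$.

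\medskip
\noindent\textbf{Necessity.} I argue contrapositively: whenever neither $\circ_i \subseteq_R \circ_j$ holds bijectively nor $\circ_j \subseteq_{cl} \circ_i$, I build a witness. Inspecting Figure \ref{sch:mbox-modif-relats}, every edge other than the two $\subseteq_{cl}$ edges ($\ma \to \me$ and $\mb \to \mc$) and the bijective edge $\me \to \mg$ is either a strict inclusion $\subseteq$ coming from a cardinality selection or a non-bijective $\subseteq_R$ coming from splitting one ABox into several repairs. Two mechanisms then suffice. (i) When $\circ_i$ and $\circ_j$ differ by such a selection or splitting, the denominator $|\M|$ of the majority ratio changes and the ratio ceases to be monotone; for instance, with three repairs of which two are maximal, a query entailed by two of the three (ratio $2/3$) but by only one of the two maximal ones (ratio $1/2$) separates $\langle \ma, maj\rangle$ from $\langle \mb, maj\rangle$, and with five repairs a query entailed by exactly the two maximal repairs (ratios $1$ versus $2/5$) separates them the other way. (ii) For the reverse of the bijective edge $\me \to \mg$ the denominators coincide, but the inclusions $A \subsetneq f(A)$ are strict, so a query entailed — thanks to the extra closure atoms produced by closing before splitting — by a majority of $\mg(\M)$ yet by only a minority of the strictly smaller matching ABoxes of $\me(\M)$ witnesses $\langle \mg, maj\rangle \not\sqsubseteq \langle \me, maj\rangle$. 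All counterexamples are minor variants of Example \ref{exp:mboxmodif}, using enough disjointness axioms to control both the cardinalities of the repairs and which atoms each repair entails.

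\medskip
\noindent\textbf{Main obstacle.} The delicate part is the bookkeeping of the necessity direction: one must certify that the three selection-terminal modifiers $\md$, $\mf$ and $\mh$ are majority-incomparable with every other modifier and that transitivity introduces no unexpected relation. I would first use the equivalences $\langle \ma, maj\rangle \equiv \langle \me, maj\rangle$ and $\langle \mb, maj\rangle \equiv \langle \mc, maj\rangle$ from the sufficiency part to collapse the eight semantics into fewer classes, then dispatch each remaining non-edge of Figure \ref{sch:maj-inf} with mechanism (i) or (ii). The subtlest instances involve $\md$, where two successive cardinality selections surround a closure, so the counterexample must simultaneously control the repair cardinalities before and after the intermediate closure.
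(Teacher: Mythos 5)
Your proposal is correct and follows essentially the same route as the paper: sufficiency via the bijective $\subseteq_R$ counting argument (the bijection from Fact 2 plus monotonicity keeps the majority ratio from decreasing) and via invariance of per-ABox entailment under positive closure, and necessity via counterexamples exploiting cardinality selection (which changes the denominator) and the strict-containment counterexample for the reverse of the $\me \to \mg$ edge — exactly the mechanisms behind the paper's appendix examples. The level of detail in your necessity sketch is comparable to the paper's own treatment, which likewise dispatches the non-edges of Figure \ref{sch:maj-inf} by concrete examples rather than a uniform argument.
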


\begin{figure}[htb]
\scriptsize\centering
\begin{tikzpicture}[>=stealth']
\tikzstyle{infer}=[rectangle,draw,fill=none];
\node[infer] (majM1) {$\infr{\ma}{maj}\equiv \infr{\me}{maj}$};
\node[infer, left  =.4cm of majM1] (majM2) {$\infr{\mb}{maj}\equiv \infr{\mc}{maj}$};
\node[infer,  below=.4cm of majM2] (majM4) {$\infr{\md}{maj}$};
\node[infer,  below=.4cm of majM1] (majM7) {$\infr{\mg}{maj}$};
\node[infer,  below=.4cm of majM7] (majM8) {$\infr{\mh}{maj}$};
\node[infer,  left=.1cm of majM7] (majM6) {$\infr{\mf}{maj}$};

\draw [->] (majM1) -- (majM7);
\node[draw=none] (less) at (2.3,0) {less  productive};
\node[draw=none] (more) at (2.3,-1.7) {more productive};
\draw [->] (less) -- (more);
\end{tikzpicture}
\caption{Relationships between $maj$-based semantics}
\label{sch:maj-inf}
\vspace{-0.3cm}
\end{figure}

\begin{proposition}\label{prop:existprod}[Productivity of $\exists$-semantics] See Figure \ref{sch:exist-inf}. It holds that $\langle \circ_i , \exists \rangle \sqsubseteq \langle \circ_j , \exists  \rangle$ iff $\circ_i \subseteq_R \circ_j$ (in particular $\circ_i \subseteq \circ_j$ or  $\circ_i \subseteq_{cl} \circ_j$) or $\circ_j \subseteq_{cl} \circ_i$ .
\end{proposition}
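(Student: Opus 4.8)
The plan is to reduce $\exists$-productivity to a purely combinatorial containment condition on the produced MBoxes and then read the claim off the inclusion diagram of Figure~\ref{sch:mbox-modif-relats}. Everything rests on two elementary facts about classical entailment. First, \emph{monotonicity}: if $A \subseteq B$ then $\KB{\T}{A}\models q$ implies $\KB{\T}{B}\models q$. Second, \emph{closure invariance}: since $\Cl(B)$ only adds atoms that are positive consequences of $B$, the knowledge bases $\KB{\T}{B}$ and $\KB{\T}{\Cl(B)}$ are logically equivalent, so $\KB{\T}{B}\models q$ iff $\KB{\T}{\Cl(B)}\models q$. Combining them, $A \subseteq \Cl(B)$ yields $\KB{\T}{A}\models q \Rightarrow \KB{\T}{B}\models q$. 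The target characterization I aim for is: $\infr{\circ_i}{\exists} \sqsubseteq \infr{\circ_j}{\exists}$ iff for every $\K=\KB{\T}{\A}$ and every $A \in \circ_i(\{\A\})$ there is $B \in \circ_j(\{\A\})$ with $A \subseteq \Cl(B)$.

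First I would prove this characterization. The ``if'' direction is immediate from the combined fact above: an $\infr{\circ_i}{\exists}$-consequence $q$ is witnessed by some $A \in \circ_i(\{\A\})$ with $\KB{\T}{A}\models q$, and the matching $B$ then witnesses it for $\infr{\circ_j}{\exists}$. For the ``only if'' direction I would use the canonical ground query $q_A = \bigwedge_{\alpha \in A}\alpha$ attached to a set of assertions $A$: trivially $\KB{\T}{A}\models q_A$, while for consistent $B$ one has $\KB{\T}{B}\models q_A$ iff every atom of $A$ lies in $\Cl(B)$, i.e. iff $A \subseteq \Cl(B)$ (a ground atom on a constant absent from $B$ can never be positively entailed). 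Hence, if for some $\K$ and some $A \in \circ_i(\{\A\})$ no $B \in \circ_j(\{\A\})$ satisfies $A \subseteq \Cl(B)$, then $q_A$ is an $\infr{\circ_i}{\exists}$-consequence but not an $\infr{\circ_j}{\exists}$-consequence, so productivity fails.

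Granting the characterization, sufficiency of the stated conditions is short. If $\circ_i \subseteq_R \circ_j$, then each $A \in \circ_i(\{\A\})$ admits $B \in \circ_j(\{\A\})$ with $A \subseteq B \subseteq \Cl(B)$; this also subsumes the two parenthetical special cases $\circ_i \subseteq \circ_j$ and $\circ_i \subseteq_{cl} \circ_j$, both of which specialise $\subseteq_R$. If instead $\circ_j \subseteq_{cl} \circ_i$, then $\circ_i(\{\A\})$ is the positive closure of $\circ_j(\{\A\})$, so by surjectivity of the closure mapping (Fact~2) every $A \in \circ_i(\{\A\})$ equals $\Cl(B)$ for some $B \in \circ_j(\{\A\})$, whence $A \subseteq \Cl(B)$ trivially. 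In both cases the characterization holds, so productivity follows.

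For necessity I would argue the contrapositive, using the characterization as a counterexample generator: whenever neither $\circ_i \subseteq_R \circ_j$ nor $\circ_j \subseteq_{cl} \circ_i$ holds, I must exhibit a $\K$ and an $A \in \circ_i(\{\A\})$ for which no $B \in \circ_j(\{\A\})$ contains $A$ up to closure, and then $q_A$ separates the two semantics. Since only the eight modifiers of Table~\ref{tab:abox-modifs} are at stake, this is a finite check over Figure~\ref{sch:mbox-modif-relats}: each $\subseteq_{cl}$ edge becomes an $\exists$-equivalence (closure invariance makes the backward containment free), while the $\subseteq$ and $\subseteq_R$ edges are strict productivity steps, producing exactly the order of Figure~\ref{sch:exist-inf}. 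The main obstacle is verifying that this ``for all $\K$'' characterization is tight, i.e. that for each genuinely incomparable pair a single ABox already breaks the containment. The delicate bookkeeping is that every modifier except $\ma$ and $\mb$ yields positively closed ABoxes (a maximal consistent subset of $\Cl(\A)$ is closed, since adding an already-derivable atom preserves consistency and maximality then forces it in), so for those targets $\Cl(B)=B$ and $A \subseteq \Cl(B)$ collapses to plain inclusion; the extra productivity granted by the closure-reversal condition $\circ_j \subseteq_{cl} \circ_i$ arises precisely when the target is one of the non-closed modifiers $\ma,\mb$, and tracking this distinction is what makes the enumeration of counterexamples correct.
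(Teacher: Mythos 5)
Your proposal is correct, but it follows a genuinely different route from the paper's. The paper proves the positive edges of Figure~\ref{sch:exist-inf} one at a time: item 4 of Lemma~\ref{lem:relation-inf} (existential inference is monotone under inclusion of MBoxes) handles the edges coming from $\subseteq$, item 4 of Lemma~\ref{lem:relation-inf2} (closure invariance) yields the $\equiv$ classes coming from $\subseteq_{cl}$, and the remaining edges use the pointwise inclusion facts established for Figure~\ref{sch:mbox-modif-relats}; the negative direction is then handled by exhibiting, pair by pair, concrete KBs with hand-picked conjunctive queries such as $C(a) \wedge D(a)$, with the incomparability witnesses explicitly omitted as ``similar''. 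You instead prove a single standalone characterization --- $\infr{\circ_i}{\exists} \sqsubseteq \infr{\circ_j}{\exists}$ iff for every KB each $A \in \circ_i(\{\A\})$ satisfies $A \subseteq \Cl(B)$ for some $B \in \circ_j(\{\A\})$ --- whose converse direction is obtained uniformly by the canonical query $q_A = \bigwedge_{\alpha \in A} \alpha$. This buys real uniformity: the paper's separating queries are precisely instances of your $q_A$, your relation ``containment up to closure'' is transitive (so it captures composite inclusions such as $\infr{\mc}{\exists} \sqsubseteq \infr{\ma}{\exists}$ automatically, which the literal condition of the proposition only reaches via transitive closure), and the entire ``only if'' half becomes a purely combinatorial check, which your closedness observation (all modifiers except $\ma$ and $\mb$ output positively closed ABoxes) reduces to the $\subseteq_R$ facts already proved for Figure~\ref{sch:mbox-modif-relats}, comparing against $\me$ (resp.\ $\mc$) when the target is $\ma$ (resp.\ $\mb$). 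Two caveats. First, the load-bearing equivalence ``$\KB{\T}{B} \models q_A$ iff $A \subseteq \Cl(B)$ for consistent $B$'' silently uses two facts: negative axioms contribute no positive ground consequences to a consistent KB, and entailed ground atoms only involve terms of $B$; both hold in the existential-rule setting but deserve explicit mention, since $\Cl$ is defined via $\T_p$ alone. Second, like the paper itself, you stop short of enumerating the witnessing KBs for the incomparable pairs; in your setup each such witness is just a failure of pointwise set containment, already documented in the appendix's justification of Figure~\ref{sch:mbox-modif-relats}, so this is a dependency to be cited rather than a gap, but it should be said.
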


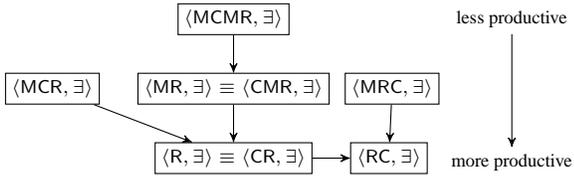
\begin{figure}[H]
\scriptsize\centering
\begin{tikzpicture}[ >=stealth']
\tikzstyle{infer}=[rectangle,draw,fill=none];
\node[infer] (exM4) {$\infr{\md}{\exists}$};
\node[infer, below = .5cm of exM4] (exM2) {$\infr{\mb}{\exists}\equiv\infr{\mc}{\exists}$};
\node[infer,  below = .5cm of exM2] (exM1) {$\infr{\ma}{\exists}  \equiv \infr{\me}{\exists}$};
\node[infer, left = .5 cm of exM2] (exM6) {$\infr{\mf}{\exists}$};
\node[infer, right = .5cm of exM1] (exM7) {$\infr{\mg}{\exists}$};
\node[infer, right=  .2  cm of exM2] (exM8) {$\infr{\mh}{\exists}$};
\draw [<-] (exM1) -- (exM2);
\draw [<-]  (exM2) -- (exM4);
\draw [<-]  (exM1) -- (exM6);
\draw [<-]  (exM7) -- (exM8);
\draw [->]  (exM1) -- (exM7);

\node[draw=none] (less) at (3.7,0) {less  productive};
\node[draw=none] (more) at (3.7,-1.9) {more productive};
\draw [->] (less) -- (more);
\end{tikzpicture}
\caption{Relationships between $\exists$-based semantics}
\label{sch:exist-inf}
\vspace{-0.3cm}
\end{figure}


We now extend previous results to any pair of semantics, possibly based on different inference strategies.

\begin{theorem}\label{th:prodsem}[Productivity of semantics] The inclusion relation $\sqsubseteq$ is the smallest relation that contains
the inclusions $\langle \circ_i , s_k \rangle \sqsubseteq \langle \circ_j , s_k\rangle$  defined by Propositions   \ref{prop:safeprod}-\ref{prop:existprod} and satisfying the two following conditions:
\begin{enumerate}
\item for all $s_j$, $s_p$ and $o_i$, if  $s_j \leq s_p$ then $\langle \circ_i , s_j \rangle \sqsubseteq \langle \circ_i, s_p \rangle$.
\item it is transitive.
\end{enumerate}
\end{theorem}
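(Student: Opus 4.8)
My plan is to prove the stated equality by the two inclusions $R\subseteq{\sqsubseteq}$ and ${\sqsubseteq}\subseteq R$, where $R$ denotes the relation generated by the per-strategy inclusions of Propositions~\ref{prop:safeprod}--\ref{prop:existprod} and closed under condition~1 (the strategy-bump rule) and condition~2 (transitivity). The first inclusion is the soundness part and is routine; the second is the minimality (completeness) part and carries all the difficulty.

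For soundness I would show that the \emph{actual} productivity order $\sqsubseteq$ already enjoys the three defining closure properties of $R$, so that minimality of $R$ forces $R\subseteq{\sqsubseteq}$. Concretely: the generators coming from Propositions~\ref{prop:safeprod}--\ref{prop:existprod} are genuine productivity inclusions by those (already proved) propositions; the strategy-bump generators $\infr{\circ_i}{s_j}\sqsubseteq\infr{\circ_i}{s_p}$ with $s_j\le s_p$ hold because both semantics query the \emph{same} MBox $\circ_i(\{\A\})$, which is consistent for each of the eight modifiers (repairs are consistent with $\T$, and by the Preliminaries closure preserves consistency), so the total order $\cap\le\forall\le maj\le\exists$ of~(\ref{eq:inferrelation}) transfers verbatim to productivity; and $\sqsubseteq$ is transitive directly from its set-of-answers definition. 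Hence $\sqsubseteq$ satisfies conditions~1 and~2 and contains every generator, so it contains $R$.

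For completeness I would take an arbitrary true inclusion $\infr{\circ_i}{s_k}\sqsubseteq\infr{\circ_j}{s_l}$ and exhibit a derivation of it inside $R$. The intended route is a \emph{factorization} through an intermediate node, using one of the two paths $\infr{\circ_i}{s_k}\sqsubseteq\infr{\circ_j}{s_k}\sqsubseteq\infr{\circ_j}{s_l}$ (a Proposition edge followed by a strategy bump) or $\infr{\circ_i}{s_k}\sqsubseteq\infr{\circ_i}{s_l}\sqsubseteq\infr{\circ_j}{s_l}$. Two facts must be secured. First, \emph{strategy monotonicity is necessary}: any true inclusion forces $s_k\le s_l$. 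I would prove this by constructing, for the modifiers at hand, a KB on which a strictly more productive strategy derives a query that the strictly less productive one misses, ruling out $s_k>s_l$ for a universally quantified inclusion (the strictness of $\cap<\forall<maj<\exists$ being already witnessed by Examples~\ref{exp:mboxmodif}--\ref{exp:inferstg}). Second, once $s_k\le s_l$, the surviving within-strategy leg of the factorization must be one of the inclusions listed by Propositions~\ref{prop:safeprod}--\ref{prop:existprod}; here I would invoke the characterizations in terms of $\subseteq$, $\subseteq_R$ and $\subseteq_{cl}$ (Figure~\ref{sch:mbox-modif-relats}, together with Facts~1--2).

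The hard part is exactly the cross-strategy, cross-modifier pairs in this completeness step: from $\infr{\circ_i}{s_k}\sqsubseteq\infr{\circ_j}{s_l}$ with $s_k<s_l$ one cannot simply read off the intermediate edge $\infr{\circ_i}{s_k}\sqsubseteq\infr{\circ_j}{s_k}$, since the more productive target $s_l$ is a \emph{weaker} requirement and need not transfer back to the stronger target $s_k$. To close this gap cleanly I would complement the factorization argument with an explicit bank of separating counterexamples --- finitely many KB/query pairs, obtained by enriching the ABox of Example~\ref{exp:mboxmodif} --- one for each pair that the generated relation $R$ does \emph{not} already connect. Showing that every such missing pair admits a separating instance certifies that $\sqsubseteq$ adds nothing beyond $R$, which is precisely the minimality claim and completes ${\sqsubseteq}\subseteq R$.
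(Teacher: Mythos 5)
Your proposal is correct and follows essentially the same route as the paper's own proof: soundness of the generators and the two closure conditions directly from the definitions of $\leq$ and $\sqsubseteq$, then completeness via (i) ruling out any inclusion into a strictly more cautious strategy and (ii) showing that every true cross-strategy inclusion factors as a strategy bump followed by a within-strategy edge at the less cautious level, with both (i) and (ii) ultimately settled by a finite bank of separating counterexamples (the paper likewise admits checking (ii) case by case). The one difference worth noting is that for step (i) the paper uses a single generic KB --- a TBox with only negative constraints and repairs all of equal size, so that all eight composite modifiers produce the same MBox --- which disposes of all modifier pairs at once, whereas your plan of per-pair constructions is needed precisely because the strictness witnesses you cite (Examples of the main text) fix the modifier on both sides and so do not by themselves cover cross-modifier pairs.
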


Theorem \ref{th:prodsem} is an important result. It states that the productivity relation can only be obtained from Figures \ref{sch:safe-inf}-\ref{sch:exist-inf} (\resp{} Propositions \ref{prop:safeprod}-\ref{prop:existprod}) and some composition of the relations. No more inclusion relations hold. In particular when $s_i > s_j$, it holds that  $\forall k, \forall l$, $\infr{\circ_k}{s_i} \not \sqsubseteq \infr{\circ_l}{s_j}$, which means that there exist a query $q$ and a KB $\mathcal K$ such that $q$ is an $\infr{\circ_k}{s_i}$-consequence of $\mathcal K$ but not  an $\infr{\circ_l}{s_j}$-consequence of $\mathcal K$. Note that this holds already for DL-Lite$_R$ KBs. 

\begin{proof1}[Sketch]
Condition 1 holds by definition of $\leq$. Transitivity holds by definition of $\sqsubseteq$. To show that there are no other inclusions, we prove two lemmas:  
for all $\infr{\circ_i}{s_j}$ and $\infr{\circ_k}{s_p}$,  (1) if $s_p < s_j$ then  $\infr{\circ_i}{s_j} \not \sqsubseteq \infr{\circ_k}{s_p}$; and (2) if $\infr{\circ_i}{s_j} \sqsubseteq \infr{\circ_k}{s_p}$ and $s_j < s_p$, then $\infr{\circ_i}{s_p} \sqsubseteq \infr{\circ_k}{s_p}$.
\end{proof1}

Lastly, it is important to note that when the initial \kb{} is consistent, all semantics collapse with standard entailment, namely:

\begin{proposition}
\label{prop:cons-case}
Let $\K$ be a consistent standard \kb{}. Then: $\forall$$s$$\in$$\{\cap$,$\forall$,$maj$, $\exists\}$, $\forall i: 1,..., 8, $ $\K$$\infer i s$$q$ iff  $\K$ $\models$ $q$.
\end{proposition}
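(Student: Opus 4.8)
Prove Proposition~\ref{prop:cons-case}: when the initial standard \kb{} $\K = \KB{\T}{\A}$ is consistent, every semantics $\infr{\circ_i}{s}$ (for each of the eight modifiers and each of the four strategies) coincides with standard entailment.

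**Plan.** The natural strategy is to show that on a consistent ABox, all eight composite modifiers collapse to producing a single-element MBox, and that this single ABox entails exactly the same queries as $\K$. Once the MBox is a singleton $\{\A'\}$ with $\KB{\T}{\A'} \models q \iff \KB{\T}{\A} \models q$, the four inference strategies are indistinguishable: for a one-element MBox, universal, safe, majority-based, and existential inference all reduce to ``$\KB{\T}{\A'} \models q$'' (the intersection over a singleton is $\A'$ itself, the majority fraction is either $0$ or $1$, and the universal/existential quantifiers range over one element). So the whole proposition reduces to a statement purely about the modifiers.

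**Key steps.** First I would use the fact, recalled in the Preliminaries, that $\R(\A) = \{\A\}$ iff $\A$ is consistent. Hence when $\A$ is consistent, $\incl{\{\A\}} = \{\R(\A)\} = \{\A\}$, so the splitting modifier $\inclf$ leaves the singleton MBox unchanged (up to its single element). Next, the cardinality selection $\lexf$ applied to any one-element MBox is the identity, since there is no strictly larger element. This already handles every modifier that begins with splitting followed by selections and closures on a single ABox: tracking Table~\ref{tab:abox-modifs}, each of $\circ_1,\dots,\circ_8$ applied to $\{\A\}$ produces an MBox that is either $\{\A\}$ itself or $\{\Cl(\A)\}$, depending on whether a closure step $\clf$ is ever applied. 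Concretely, $\circ_1,\circ_2$ give $\{\A\}$; $\circ_3,\circ_4,\circ_5,\circ_6,\circ_7,\circ_8$ give $\{\Cl(\A)\}$ (each collapses because splitting and selection are trivial, and $\clf$ is idempotent by Lemma~\ref{lem:equi-modif}). In every case the resulting MBox is a singleton.

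**Closing the argument.** It remains to check that querying this singleton agrees with standard entailment. If the MBox is $\{\A\}$, then $\KB{\T}{\A}\models q$ is exactly $\K \models q$ by definition. If the MBox is $\{\Cl(\A)\}$, I would invoke that $\Cl(\A)$ consists of atoms already entailed by $\KB{\T_p}{\A}$, so $\KB{\T}{\Cl(\A)} \models q \iff \KB{\T}{\A} \models q$; adding positive consequences of $\A$ back into the ABox does not change the set of entailed queries, and it does not introduce inconsistency since $\A$ consistent implies $\Cl(\A)$ consistent (as noted in the Preliminaries). Combining this with the collapse of the four strategies on a singleton MBox yields $\K \infer i s q \iff \K \models q$ for all $i$ and $s$.

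**Main obstacle.** There is no deep difficulty here; the proposition is essentially a sanity check. The only point requiring care is the bookkeeping over the eight modifiers: one must verify systematically, using Lemma~\ref{lem:equi-modif} and the consistency equivalence $\R(\A)=\{\A\}$, that every composite modifier degenerates to either $\{\A\}$ or $\{\Cl(\A)\}$ rather than splitting $\A$ into several pieces. The substantive (if elementary) lemma underpinning everything is that $\Cl$ preserves the entailed queries, i.e.\ $\KB{\T}{\Cl(\A)}$ and $\KB{\T}{\A}$ have the same consequences.
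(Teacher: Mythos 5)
Your proof is correct. The paper states this proposition without giving a proof (it does not appear in the appendix), and your argument is exactly the intended one: it relies only on facts the paper itself establishes, namely that $\R(\A)=\{\A\}$ iff $\A$ is consistent, that $\A$ is consistent iff $\Cl(\A)$ is consistent, and that for a consistent ABox $\KB{\T}{\A}\models q$ iff $\KB{\T}{\Cl(\A)}\models q$ (item~1 of Lemma~\ref{lem:relation-inf2}). Your case analysis showing that $\circ_1,\circ_2$ yield $\{\A\}$ while $\circ_3,\ldots,\circ_8$ yield $\{\Cl(\A)\}$, together with the observation that all four inference strategies coincide on a singleton MBox, is complete and fills the gap the paper leaves implicit.
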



\section{Conclusion}

This paper provides a general and unifying framework for inconsistency-tolerant query answering. On the one hand, our logical setting based on existential rules includes previously considered languages. On the other hand, viewing an inconsistency-tolerant semantics as a pair composed of a modifier and an inference strategy allows us to include the main known semantics and to consider new ones.  We believe that the choice of semantics depends on the applicative context, namely the features of the semantics, i.e rationality properties, complexity (which we have studied, but not presented in this paper) and productivity with respect to the applicative context. 
In particular, cardinality-based selection allows us to counter troublesome assertions that conflict with many others. In some contexts, requiring to find an answer in all selected repairs can be too restrictive, hence the interest of majority-based semantics, which are more productive than universal semantics, without being as productive as the adventurous existential semantics.
As for future work, we plan consider other inference strategies such as the argued inference, parametrized inferences, etc. We also want to adapt the framework to  belief change problems, like merging or revision.

\section*{Acknowledgment}
This work has been supported by the French National Research Agency. ASPIQ project ANR-12-BS02-0003.

\bibliographystyle{aaai}
\small\bibliography{kr16-short}
\onecolumn
\section{Appendix}

\vspace{1cm}

In this appendix, we provide details on the proofs. 

\subsection*{Section 3: A Unified Framework for Inconsistency-Tolerant Query Answering}

\medskip
\noindent \textbf {Theorem 1}. 
\emph{Let $\K_\M$=$\KB{\T}{\M=\{\A\}}$ be a possibly inconsistent  \kb{}. Then for any composite modifier $\circ_c$ that can be obtained by a finite combination of the elementary modifiers $\inclf$, $\lexf$, $\clf$, there exists a composite modifier $\circ_i$ in $\{\circ_1 \ldots \circ_8\}$ (see Table \ref{tab:abox-modifs}) such that $\circ_c(\M)$=$\circ_i(\M)$.}

\medskip
\noindent{\textbf{ Lemma \ref{lem:equi-modif}} 
\emph{For any MBox $\M$, the following holds:
\begin{enumerate}
\item $\cl{\cl{\M}}$=$\cl{\M}$, $\incl{\incl{\M}}$=$\incl{\M}$ and  $\lex{\lex{\M}}$ =$\lex{\M}$.
\item Let $\circ_{d}$ be any composite modifier. Then: \\(a) $\cl{\circ_{d}(\cl{\M})}=\circ_{d}(\cl{\M})$, and \\
(b) $\incl{\circ_{d}(\incl{\M})}=\circ_{d}(\incl{\M})$.
\end{enumerate}
}
\begin{proof}
The proof of the idempotence of $\inclf$ follows from the facts that: i) $\forall \A_i \in \incl{\M}, \KB{\T}{\A_i}$ is consistent and ii) if $\KB{\T}{\A_i}$ is consistent, then $\incl{\A_i}=\{\A_i\}$.  The proof of the the idempotence of $\lexf$ follows from the facts that: i) $\forall \A_i\in \lex{\M}, \forall \A_j \in \lex{\M}$, we have $|\A_i|=|\A_j|$ ii) if $\forall \A_i\in \lex{\M}, \forall \A_j \in \lex{\M}, |\A_i|=|\A_j|$ then $\lex{\M}=\M$. 
For the idempotence of $\clf$, it is enough to show that for a given $\A\in\M, \cl{\cl{\A}}=\cl{\A}$. From the definition of $\clf$, clearly we have $\cl{\A}\subseteq  \cl{\cl{\A}}$. Now assume that $f\in\cl{\cl{\A}}$ but  $f\notin\cl{\A}$. Let $B_f \subseteq \cl{\A}$ be the subset that allows to derive $f$, namely $\KB{\T_p}{B_f}\models f$. Now for each element $x$ of $B_f$, we have $\KB{\T_p}{\A}\models x$. Then clearly, $\KB{\T_p}{\A}\models f$. 

Regarding item (2.a), if $\circ_{d}$ is an elementary modifier then it can be either $\clf$, $\lexf$, or $\inclf$. If $\circ_{d}=\clf$ then the result holds since $\clf$ is idempotent.  If $\circ_{d}=\lexf$ then the selected elements from $\lex{\cl{\M}}$ are closed sets of assertions  since  $\lexf$ only discards some elements of $\cl{\M}$ but does not change the content of remaining elements. Lastly, let us consider the case where $\circ_{d}=\inclf$. Again $\forall \A' \in \incl{\cl{\M}}, \A'=\cl{\A'}$. Let us recall that $\A'$ is a maximally consistent subset  of $\A\in\cl{\M}$, with $\A=\cl{\A}$. If $\A'\neq\cl{\A}$ this means that $\exists f \in \cl{\A'}$ (hence $f\in\A$) such that $f\notin\A'$ despite the fact that $\KB{\T}{\A'}\models f$. This is impossible since $\A'$ should be a maximal consistent subbase of $\A$. Since each $\circ_{d}\in\{\clf,\lexf,\inclf\}$ applied on closed a ABox  preserves the closeness property, then clearly a composite modifier also preserves this closeness property.

The proof of item (2.b) follows immediately from the fact that i) $\forall \A_i\in\incl{\M}$, $\KB{\T}{\A_i}$ is consistent, ii) if $\M$ is consistent, then $\forall \circ_{d}\in\{\clf,\lexf,\inclf\}$ yields a consistent  subbase, and iii) $\incl{\M}=\M$ if $\M$ is consistent. 
\end{proof}

\begin{proof} [Proof of Theorem \ref{th:eight-modifier}]
The proof relies on Lemma \ref{lem:equi-modif} (see also the explanations following Lemma \ref{lem:equi-modif} in the paper). 
\end{proof}

\noindent
\textbf{Justification of Figure 2 (Inclusion relations between composite modifiers)}: see following Proposition \ref{prop:cplex-modif-rels}, Example \ref{ex-modifiers1}, Proposition \ref{prop:cplex-modif-rels2}  and Example \ref{ex-modifiers2}. 

\begin{proposition}[Part of the proof of Figure \ref{sch:mbox-modif-relats}]
\label{prop:cplex-modif-rels}
Let $\K_\M$=$\KB{\T}{\M=\{\A\}}$ be an inconsistent  \kb{}. Let $\{\M_1$,...,$\M_8\}$ be the MBoxes obtained by the eight composite modifiers $\{\circ_1,...,\circ_8\}$ summarized in Table \ref{tab:abox-modifs}. Then: 
\begin{enumerate}
\item $\M_2 \subseteq \M_1$. 
\item $\M_4 \subseteq \M_3$. 
\item $\M_6 \subseteq \M_5$. 
\item $\M_8 \subseteq \M_7$.
\item $\M_3=\cl{\M_2}$.
\item $\M_5=\cl{\M_1}$.
\item $\M_3 \subseteq \M_5$. 
\item $\M_5 \subseteq_{R} \M_7$.
\end{enumerate}
\end{proposition}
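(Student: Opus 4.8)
The plan is to exploit the fact that $\M=\{\A\}$ is a single-ABox MBox, so that each $\M_i$ unfolds into a concrete expression in $\A$, its repairs, and the positive closure. Writing out the definitions gives $\M_1=\R(\A)$, $\M_2=\lex{\R(\A)}$, $\M_5=\{\Cl(\R)\mid\R\in\R(\A)\}$ and $\M_7=\R(\Cl(\A))$. With this in hand, items~5 and~6 require nothing beyond matching notation: $\M_3=\cl{\lex{\incl{\M}}}=\cl{\M_2}$ and $\M_5=\cl{\incl{\M}}=\cl{\M_1}$ hold by the very definitions of $\circ_3$ and $\circ_5$ together with $\M_2=\lex{\incl{\M}}$ and $\M_1=\incl{\M}$.

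Items~1--4 I would dispatch in one stroke by isolating the single structural property behind all four: the cardinality-based selection modifier only discards elements, so $\lex{\X}\subseteq\X$ for every MBox $\X$ (this is immediate from the definition of $\lexf$, which keeps the $\A_i$ of maximal cardinality and drops the rest). Instantiating $\X$ as $\incl{\M}$, $\cl{\lex{\incl{\M}}}$, $\cl{\incl{\M}}$ and $\incl{\cl{\M}}$ yields $\M_2\subseteq\M_1$, $\M_4\subseteq\M_3$, $\M_6\subseteq\M_5$ and $\M_8\subseteq\M_7$ respectively.

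For item~7 I would combine item~1 with the monotonicity of the closure modifier under true inclusion. Since $\clf$ acts elementwise, an inclusion of MBoxes is preserved by $\clf$: every $\Cl(\A_i)$ arising from the smaller MBox also arises from the larger one. Applying this to $\M_2\subseteq\M_1$ and invoking items~5 and~6 gives $\M_3=\cl{\M_2}\subseteq\cl{\M_1}=\M_5$.

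The hard part will be item~8, the only claim asserting a $\subseteq_R$ relation rather than a plain inclusion, and hence the only one needing a genuine argument. I must show that every element of $\M_5$ is contained in some element of $\M_7$; concretely, that for each repair $\R\in\R(\A)$ there is a repair $\R'\in\R(\Cl(\A))$ with $\Cl(\R)\subseteq\R'$. The argument rests on two facts about the positive closure. First, it is monotone: from $\R\subseteq\A$ it follows that $\Cl(\R)\subseteq\Cl(\A)$, since any atom derivable from $\R$ using $\T_p$ (with terms already among those of $\R$, hence of $\A$) is a fortiori derivable from $\A$. Second, since $\R$ is consistent with $\T$, so is $\Cl(\R)$, by the equivalence recalled in the preliminaries that an ABox is consistent iff its positive closure is. Together these make $\Cl(\R)$ a consistent subset of $\Cl(\A)$, which therefore extends to a maximal consistent subset of $\Cl(\A)$---that is, to a repair $\R'\in\R(\Cl(\A))=\M_7$ with $\Cl(\R)\subseteq\R'$, establishing $\M_5\subseteq_R\M_7$. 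The only subtlety worth double-checking is that these two closure facts are invoked correctly; every other item is pure definition-chasing.
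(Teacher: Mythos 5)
Your proof is correct. For items 1--7 it is essentially the paper's own argument: items 1--4 from the fact that $\lexf$ only discards elements ($\lex{\X}\subseteq\X$), items 5--6 by unfolding the definitions of $\circ_3$ and $\circ_5$, and item 7 by pushing the inclusion $\M_2\subseteq\M_1$ through the elementwise closure (the paper phrases this step via the relation $\M_2\subseteq_{cl}\M_5$, but the content is the same). The one genuine difference is item 8, the only claim requiring a real argument. The paper takes a repair $B\in\M_1$, observes $B\subseteq\A\subseteq\Cl(\A)$, extends the consistent set $B$ \emph{itself} to a repair $R\in\incl{\cl{\M}}=\M_7$, and only then absorbs the closure by invoking Lemma \ref{lem:equi-modif} (repairs of a closed ABox are themselves closed), giving $\Cl(B)\subseteq\Cl(R)=R$. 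You close \emph{first} instead: you argue that $\Cl(B)$ is a consistent subset of $\Cl(\A)$, using monotonicity of $\Cl$ together with the preliminaries' observation that consistency is preserved under positive closure, and then extend $\Cl(B)$ directly to a repair of $\Cl(\A)$. Both routes hinge on the same extension-to-a-maximal-consistent-subset step (unproblematic since $\Cl(\A)$ is finite); yours trades the paper's appeal to the closedness of repairs of closed ABoxes (Lemma \ref{lem:equi-modif}, item 2a) for two elementary closure facts stated explicitly in the preliminaries, which makes your version slightly more self-contained, whereas the paper's version reuses machinery it needs anyway for Theorem \ref{th:eight-modifier}.
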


\begin{proof}
$~$

\begin{itemize}
\item Items 1-4 follow from the definition of the elementary modifier $\lexf$. Since $\lexf$ selects subsets of $\M$ having maximal cardinality. Namely, given $\M$ an MBox, we have $\lex{\M}\subseteq \M$. Hence relations  $\M_4\subseteq\M_3$, $\M_2\subseteq\M_1$, $\M_6\subseteq\M_5$, and $\M_8\subseteq\M_7$ holds.

\item Items 5-6 follow immediately from the definition of the elementary modifier $\clf$, hence we trivially have $\M_5=\cl{\M_1}$ and $\M_3=\cl{\M_2}$.

\item Let us show that $\M_2 \subseteq_{cl} \M_5$, namely $\forall A$$\in$$\M_2$,$\exists B\in \M_5$ such that $B=\Cl(\A)$. The proof is immediate. Recall that $\M_2\subseteq\M_1$, hence $\forall A\in\M_2$ we also have $A\in\M_1$. Recall also that $\M_5=\cl{\M_1}$. This means  that $\forall A\in \M_2, \exists B\in \M_5$ such that $B=\Cl(A)$.

\item Regarding the proof of Item 8,  we have $\M_2  \subseteq_{cl} \M_5$. This means that $\forall A \in \M_2$, there exists $B \in \M_5$ such that $B=\Cl(A)$. Said differently, $\forall A\in \M_2$, we have $\Cl(A)\in\M_5$. Since  $\M_3$=$\cl{\M_2}$, we conclude that $\M_3\subseteq\M_5$.  

\item  We now show that $\M_5$$\subseteq_{R}$$\M_7$.  Let $B$$\in$$\incl{\{\A\}}$ and let us show that there exists a set of assertions $X$ such that $\cl{\{B\}}$$\subseteq$$X$ and $X$$\in$$\M_7$. Since $B$$\in$$\incl{\{\A\}}$, this means by definition that $B\subseteq \A$ and hence $B \subseteq \cl{\A}$. Now, $B$ is consistent, this means that there exists $R\in \incl{\cl{\A}}=\M_7$ such that $B \subseteq R$. From Lemma  \ref{lem:equi-modif}, $R$ is a closed set of assertions, then this means that $\Cl(B)\subseteq R$.
\end{itemize}
\end{proof}

\begin{example}[Counter-examples showing that there are no reciprocal edges in Figure \ref{sch:mbox-modif-relats}]
\label{ex-modifiers1}
$~$

\begin{enumerate}

\item The converse of $\M_2 \subseteq \M_1$ does not hold.\\ 
Let $\T$=$\{B$$\isa$$C$,$C$$\isn$$D\}$ and $\M$=$\{\{B(a)$, $C(a)$, $D(a)\}\}$. \\ It is easy to check that $\KB{\T}{\M}$ is inconsistent. We have: \\
$\M_1$=$\incl{\M}$=$\{\{C(a), B(a)\},\{D(a)\}\}$, and\\
$\M_2=\lex{\M_1}=\{\{C(a),B(a)\}\}$. \\
One can check that $\M_1 \nsubseteq_{R} \M_2$.\\

\item The converse of $\M_4 \subseteq \M_3$ does not hold.\\ 
Let $\T$=$\{A\isa B$, $B\isn C\}$ and $\M$=$\{\{A(a),C(a)\}\}$.  \\ It is easy to check that $\KB{\T}{\M}$ is inconsistent.  We have: \\
$\M_1$=$\incl{\M}$=$\{\{A(a)\}, \{C(a)\}\}$, \\
$\M_2=\lex{\M_1}=\{\{A(a)\},\{C(a)\}\}$,  \\
$\M_3=\cl{\M_2}=\{\{A(a), B(a)\}, \{C(a)\}\}$, and \\
$\M_4=\lex{\M_3}=\{\{A(a), B(a)\}\}$.  \\
One can check that $\M_3 \nsubseteq_{R} \M_4$.\\

\item  The converse of $\M_6\subseteq \M_5$ does not hold.\\
Let $\T$=$\{B\isa C$, $C\isn D\}$ and $\M$=$\{\{B(a),D(a)\}\}$. \\ It is easy to check that $\KB{\T}{\M}$ is inconsistent.  We have: \\
$\M_1$=$\incl{\M}$=$\{\{B(a)\},\{D(a)\}\}$, \\
$\M_5=\cl{\M_1}$=$\{\{C(a),B(a)\},\{D(a)\}\}$, and \\
$\M_6=\lex{\M_5}$=$\{\{C(a),B(a)\}\}$\\
One can check that $\M_5\nsubseteq_{R} \M_6$. \\

\item  The converse of $\M_8\subseteq \M_7$ does not hold.\\ 
Let $\T$=$\{A\isa B$, $B\isn D\}$ and $\M$=$\{\{A(a),D(a)\}\}$.  \\ It is easy to check that $\KB{\T}{\M}$ is inconsistent.  We have: \\
$\cl{\M}$=$\{\{A(a),B(a),D(a)\}\}$, \\
$\M_7=\incl{\cl{\M}}$=$\{\{A(a),B(a)\},\{D(a)\}\}$, and \\
$\M_8=\lex{\M_7}$=$\{\{A(a),B(a)\}\}$\\
One can check that $\M_7\nsubseteq_{R} \M_8$. \\

\item The converse of $\M_3 \subseteq \M_5$ does not hold.\\ 
Let $\T$=$\{A\isa B$, $B\isa C$, $C\isn D\}$ and $\M$=$\{\{A(a), B(a), D(a)\}\}$.  \\ It is easy to check that $\KB{\T}{\M}$ is inconsistent.  We have: \\
$\M_1$=$\incl{\M}$=$\{\{A(a), B(a)\}$, $\{D(a)\}\}$,\\
$\M_5=\cl{\M_1}=\{\{A(a),B(a),C(a)\}$, $\{D(a)\}\}$, \\
$\M_2=\lex{\M_1}$=$\{\{A(a),B(a)\}\}$, and \\
$\M_3=\cl{\M_2}$=$\{\{A(a),B(a),C(a)\}\}$. \\
One can check that $\M_5 \nsubseteq_{R} \M_3$.\\

\item The converse of $\M_5 \subseteq_{R} \M_7$ does not hold.\\ 
Let $\T$=$\{A\isn B$, $B\isa D\}$ and $\M$=$\{\{A(a), B(a)\}\}$.  \\ It is easy to check that $\KB{\T}{\M}$ is inconsistent.  We have: \\
$\cl{\M}$=$\{\{A(a),B(a),D(a)\}\}$,\\
$\M_7=\incl{\cl{\M}}$=$\{\{A(a),D(a)\},\{B(a),D(a)\}\}$,\\
$\M_1$=$\incl{\M}$=$\{\{A(a)\},\{B(a)\}\}$, and \\
$\M_5=\cl{\M_1}$=$\{\{A(a)\},\{B(a),D(a)\}\}$, \\
One can check that $\M_7 \nsubseteq_{R} \M_5$.\\
\end{enumerate}
\end{example}

\begin{corollary}
Let $\K_\M$=$\KB{\T}{\M=\{\A\}}$ be an inconsistent  \kb{}. Let $\{\M_1$,...,$\M_8\}$ be the MBoxes obtained by the eight composite modifiers $\{\circ_1$,...,$\circ_8\}$ summarized in Table \ref{tab:abox-modifs}. Then:
\begin{enumerate}
\item $\forall \A_i \in \M_3$, $\exists \A_j\in \M_1$ such that $\A_i=\Cl(\A_j)$.
\item $\forall \A_i \in \M_4$, $\exists \A_j\in \M_1$ such that $\A_i=\Cl(\A_j)$.
\item $\forall \A_i \in \M_6$, $\exists \A_j\in \M_1$ such that $\A_i=\Cl(\A_j)$.
\item $\forall \A_i \in \M_1$, $\exists \A_j\in \M_7$ such that $\A_i\subseteq\A_j$.
\item $\forall \A_i \in \M_1$, $\exists \A_j\in \M_8$ such that $\A_i\subseteq\A_j$.
\item $\forall \A_i \in \M_4$, $\exists \A_j\in \M_2$ such that $\A_i=\Cl(\A_j)$.
\item $\forall \A_i \in \M_2$, $\exists \A_j\in \M_7$ such that $\A_i\subseteq\A_j$.
\item $\forall \A_i \in \M_3$, $\exists \A_j\in \M_7$ such that $\A_i\subseteq\A_j$.
\item $\forall \A_i \in \M_4$, $\exists \A_j\in \M_7$ such that $\A_i\subseteq\A_j$.
\item $\forall \A_i \in \M_5$, $\exists \A_j\in \M_8$ such that $\A_i\subseteq\A_j$.
\end{enumerate}
\end{corollary}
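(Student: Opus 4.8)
The plan is to derive all ten statements as corollaries of Proposition~\ref{prop:cplex-modif-rels}, by chaining the already-established relations while keeping track of which of the three relations $\subseteq$, $\subseteq_{cl}$, $\subseteq_R$ survives each composition. The two equalities $\M_3=\cl{\M_2}$ and $\M_5=\cl{\M_1}$ do the heavy lifting: they let me pass between an MBox and its element-wise closure, while the subset inclusions $\M_2\subseteq\M_1$, $\M_4\subseteq\M_3$, $\M_6\subseteq\M_5$ and $\M_8\subseteq\M_7$ let me relocate an element (or its pre-image under closure) into a smaller MBox. Transitivity of $\subseteq$ and of $\subseteq_R$ then closes each argument.

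For the closure-type statements (items~1,~2,~3 and~6), which assert $\A_i=\Cl(\A_j)$, I would argue as follows. By $\M_3=\cl{\M_2}$, every $\A_i\in\M_3$ is by definition $\Cl(\A_j)$ for some $\A_j\in\M_2$; since $\M_2\subseteq\M_1$ this $\A_j$ also lies in $\M_1$, which is item~1. Because $\M_4\subseteq\M_3$, the same holds for every $\A_i\in\M_4$ with $\A_j\in\M_1$ (item~2) and with $\A_j\in\M_2$ (item~6). Item~3 is symmetric, using $\M_5=\cl{\M_1}$ together with $\M_6\subseteq\M_5$: each element of $\M_6$ is an element of $\M_5$, hence the closure of an element of $\M_1$.

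For the containment-into-$\M_7$ statements (items~4,~7,~8 and~9), which assert $\A_i\subseteq\A_j$ with $\A_j\in\M_7$, I would route every argument through $\M_5$ and then apply $\M_5\subseteq_R\M_7$. Concretely, for $\A_i\in\M_1$ we have $\A_i\subseteq\Cl(\A_i)$ and $\Cl(\A_i)\in\M_5$ (by $\M_5=\cl{\M_1}$); since $\M_5\subseteq_R\M_7$, there is $\A_j\in\M_7$ with $\Cl(\A_i)\subseteq\A_j$, whence $\A_i\subseteq\A_j$ by transitivity, which is item~4. Items~7,~8 and~9 are obtained by first moving the source element into $\M_5$ (respectively via $\M_2\subseteq\M_1$, via $\M_3\subseteq\M_5$, and via $\M_4\subseteq\M_3\subseteq\M_5$) and then reusing the same $\M_5\subseteq_R\M_7$ step.

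The delicate point, and the step I expect to be the main obstacle, concerns items~5 and~10, whose right-hand side is the cardinality-selected MBox $\M_8$. These cannot be reached by the composition scheme above: in Figure~\ref{sch:mbox-modif-relats} the only edge incident to $\M_8$ is $\M_8\subseteq\M_7$, so there is no chain of established relations \emph{ending} at $\M_8$. A direct proof therefore requires a genuine cardinality argument, establishing that every repair of $\A$ (item~5), and more generally the closure of every repair of $\A$ (item~10), is contained in some \emph{maximum-cardinality} repair of $\Cl(\A)$. I would attempt this by combining the bijection of Fact~2 between $\M_5$ and $\M_7$ with a comparison of cardinalities along that bijection; the care needed is to rule out the possibility that a minority-cardinality repair of $\A$ is dominated only by non-maximal repairs of $\Cl(\A)$, which is precisely the part of the claim that does not follow formally from Proposition~\ref{prop:cplex-modif-rels} and must be verified on its own.
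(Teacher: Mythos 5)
Your handling of items 1--4 and 6--9 is correct and is exactly the route the paper intends: the corollary is stated there without a separate proof, as a direct consequence of Proposition~\ref{prop:cplex-modif-rels}, and your bookkeeping (use the identities $\M_3=\cl{\M_2}$ and $\M_5=\cl{\M_1}$ to pass between a repair and its closure, relocate elements along $\M_2\subseteq\M_1$, $\M_4\subseteq\M_3$, $\M_6\subseteq\M_5$, and finish the $\M_7$-items with $\M_5\subseteq_R\M_7$ plus transitivity) is precisely that composition argument.

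For items 5 and 10, however, the gap you flagged is not merely delicate --- it cannot be closed, because both items are false as stated. Take $\T=\{A\isa C$, $A\isn B$, $B\isn C\}$ and $\A=\{A(a),B(a)\}$, which is inconsistent with $\T$. Then $\M_1=\incl{\{\A\}}=\{\{A(a)\},\{B(a)\}\}$ and $\M_5=\cl{\M_1}=\{\{A(a),C(a)\},\{B(a)\}\}$, while $\Cl(\A)=\{A(a),B(a),C(a)\}$ gives $\M_7=\{\{A(a),C(a)\},\{B(a)\}\}$ and hence $\M_8=\lex{\M_7}=\{\{A(a),C(a)\}\}$. The repair $\{B(a)\}$, which belongs to both $\M_1$ and $\M_5$, is contained in no element of $\M_8$. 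This realizes exactly the scenario you said had to be ruled out: a repair of $\A$ dominated only by a non-maximum-cardinality repair of $\Cl(\A)$. The reason your planned argument via Fact~2 cannot work is that the bijection from $\M_5$ to $\M_7$ does not preserve relative cardinalities --- closure inflates different repairs by different amounts --- so cardinality selection applied after closure can discard the image of some repair of $\A$. Note also that the falsity of items 5 and 10 is the reading consistent with the rest of the paper: Figure~\ref{sch:mbox-modif-relats} has no $\subseteq_R$ path into $\mh$, and if item~5 held, monotonicity of entailment would give $\infr{\ma}{\exists}\sqsubseteq\infr{\mh}{\exists}$, contradicting Proposition~\ref{prop:existprod} and Figure~\ref{sch:exist-inf}, where these two semantics are incomparable. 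So the correct completion of your proof is not a cleverer cardinality argument but the counterexample above: items 5 and 10 are errors in the corollary, and only the remaining eight items are provable.
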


\begin{proposition}[Part of the proof of Figure \ref{sch:mbox-modif-relats}]
\label{prop:cplex-modif-rels2}
Let $\{\circ_1$,...,$\circ_8\}$ be the eight composite modifiers summarized in Table \ref{tab:abox-modifs}.Then: 
\begin{enumerate}
\item There exists $\M$ such that $\circ_6(\M)$ and  $\circ_8(\M)$ are incomparable. 
\item There exists $\M$ such that $\circ_2(\M)$ and $\circ_6(\M)$ are incomparable.
\item There exists $\M$ such that $\circ_3(\M)$ and $\circ_6(\M)$ are incomparable.
\item There exists $\M$ such that $\circ_4(\M)$ and $\circ_6(\M)$ are incomparable.
\item There exists $\M$ such that $\circ_2(\M)$ and $\circ_8(\M)$ are incomparable.
\item There exists $\M$ such that $\circ_3(\M)$ and $\circ_8(\M)$ are incomparable.
\item There exists $\M$ such that $\circ_4(\M)$ and $\circ_8(\M)$ are incomparable.
\end{enumerate}
\end{proposition}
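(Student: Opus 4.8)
The plan is to prove each of the seven claims by exhibiting, for the pair of modifiers in question, an explicit (DL-Lite$_R$) knowledge base $\K_\M=\KB{\T}{\M=\{\A\}}$ and computing by hand the two modified MBoxes. Recall that $\circ_p(\M)\subseteq_R\circ_q(\M)$ fails exactly when some $A\in\circ_p(\M)$ is contained in no element of $\circ_q(\M)$. Hence, to establish that $\circ_p(\M)$ and $\circ_q(\M)$ are incomparable, I would in each case point to one element $A\in\circ_p(\M)$ included in no element of $\circ_q(\M)$ \emph{together with} one element $B\in\circ_q(\M)$ included in no element of $\circ_p(\M)$. Since no induction or limiting argument is involved, once the seven witnesses are fixed the remaining verification is purely mechanical, exactly as in the counter-examples of the preceding Example.

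The design principle behind all seven examples is uniform: the two modifiers in each pair differ only in \emph{whether} and \emph{at which stage} the cardinality selection $\lexf$ is applied relative to the closure $\clf$ (and, for the pairs involving $\circ_8$, in whether splitting precedes or follows closure). The incomparability is therefore engineered by creating a \emph{cardinality inversion}: two repairs $\A_1,\A_2$ of $\A$ with $|\A_1|>|\A_2|$ but $|\Cl(\A_1)|<|\Cl(\A_2)|$. Then selection before closure retains $\A_1$ (or its closure $\Cl(\A_1)$), while selection after closure retains $\Cl(\A_2)$; by arranging that $\A_1$ and $\A_2$ mention disjoint base assertions, the two survivors become $\subseteq$-incomparable. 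A TBox carrying a short positive implication chain beneath one atom and a long one beneath another realises this inversion (one repair has few base facts but many inferred ones, and conversely), while the negative axioms are tuned so that exactly the intended repairs arise.

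Concretely, the pairs contrasting selection-before-closure with selection-after-closure on the split-first side, namely $\circ_2/\circ_6$ (item~2) and $\circ_3/\circ_6$ (item~3), follow directly from such an inversion. The pairs involving the close-first modifier $\circ_8=\lexf\incl{\cl{\cdot}}$, namely $\circ_6/\circ_8$ (item~1), $\circ_2/\circ_8$ (item~5) and $\circ_3/\circ_8$ (item~6), additionally exploit the split-then-close versus close-then-split discrepancy already witnessed in the earlier Example (the repairs of $\Cl(\A)$ may genuinely differ from the closures of the repairs of $\A$): I would combine that discrepancy with a cardinality inversion so that a repair surviving in one MBox has no $\subseteq$-superset in the other, and symmetrically.

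The main obstacle is the double-selection modifier $\circ_4=\lexf\clf\lex{\incl{\cdot}}$, which appears in items~4 ($\circ_4/\circ_6$) and~7 ($\circ_4/\circ_8$). Analysing $\circ_4$ requires tracking cardinalities at \emph{two} stages: a first selection among the raw repairs, then a second selection among the closures of the surviving repairs. An example separating $\circ_4$ from $\circ_6$ must arrange that $\circ_4$'s \emph{first} selection discards a repair whose closure $\circ_6$ would retain (since $\circ_6$ closes \emph{all} repairs before its single selection), while $\circ_4$ simultaneously produces a set that $\circ_6$ omits; the pair $\circ_4/\circ_8$ adds the close-first subtlety on top. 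Designing a single knowledge base that controls both selection stages and then checking non-inclusion in both directions is the most delicate bookkeeping of the proof, but it remains a finite computation of repairs, closures, cardinalities, and a handful of inclusion tests.
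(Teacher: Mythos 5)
Your overall strategy coincides with the paper's: Proposition~\ref{prop:cplex-modif-rels2} is proved there (Example~\ref{ex-modifiers2}) by exhibiting explicit \dllite{} \kb{}s and checking, for a fixed $\M$, that each of the two MBoxes contains an element included in no element of the other; and the ``cardinality inversion'' you isolate --- distinct repairs $\A_1,\A_2$ with $|\A_1|>|\A_2|$ but $|\Cl(\A_1)|<|\Cl(\A_2)|$ --- is exactly the mechanism realized by the paper's witnesses, as is your reading of what incomparability \wrt{} $\subseteq_R$ requires. The genuine gap is that you never instantiate the recipe: the proposition is purely existential, so the concrete \kb{}s \emph{are} the proof, and a design principle, however sound, discharges nothing until a witness is written down and the repairs, closures and selections are actually computed. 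The omission is most serious exactly where you yourself flag the construction as delicate --- the pairs involving $\circ_4$ (items 4 and 7) and $\circ_8$ (items 1, 5, 6, 7) --- since for those your text only asserts that a suitable \kb{} can be engineered.

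The worry about $\circ_4$ is in fact unnecessary, and seeing why shows how little was left to do. If the \kb{} is chosen so that there is a \emph{unique} maximum-cardinality repair, then $\lex{\incl{\M}}$ is a singleton MBox, the second selection in $\circ_4$ is vacuous, and $\circ_4(\M)=\circ_3(\M)$; if moreover that repair is already closed, then $\circ_2(\M)=\circ_3(\M)=\circ_4(\M)$. This is precisely the paper's trick: with $\T$=$\{A\isn B$, $C\isa A$, $B\isa D$, $D\isa F\}$ and $\M$=$\{\{A(a),C(a),B(a)\}\}$, the repairs are $\{A(a),C(a)\}$ and $\{B(a)\}$, so $\circ_2(\M)=\circ_3(\M)=\circ_4(\M)=\{\{A(a),C(a)\}\}$ while $\circ_6(\M)=\{\{B(a),D(a),F(a)\}\}$, and these are mutually non-included; this single example settles items 2--4 at once. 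A second example of the same shape (with the inversion created on the closed side) settles items 5--7, and a third settles item 1, so three small witnesses suffice for all seven claims. Your proposal is missing exactly those three \kb{}s; with them added, and the $\circ_4$ collapse observed instead of feared, it would be the paper's proof.
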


\begin{example}[Examples that prove Proposition \ref{prop:cplex-modif-rels2}] 
\label{ex-modifiers2}

The following examples prove the statements in Proposition \ref{prop:cplex-modif-rels2}. 

\begin{enumerate}
\item There exists $\M$ such that  $\circ_6(\M)$ and $\circ_8(\M)$ are incomparable.\\ 
Let $\T$=$\{B\isn C$, $B\isa A$, $C\isa A, A\isn D, D\isa E, E\isa F\}$ and $\M$=$\{\{A(a), B(a), C(a),D(a)\}\}$.  \\ It is easy to check that $\KB{\T}{\M}$ is inconsistent.  We have: \\
$\M_1=\incl{\M}=\{\{A(a),C(a)\}$, $\{A(a),B(a)\}$, $\{D(a)\}\}$, and \\
$\M_5=\cl{\M_1}$=$\{\{A(a),C(a)\}$, $\{A(a),B(a)\}$, $\{D(a),E(a),F(a)\}\}$, and \\
$\M_6=\lex{\M_5}$=$\{\{D(a),E(a),F(a)\}\}$,\\
$\cl{\M}$=$\{\{A(a),B(a),C(a),D(a),E(a),F(a)\}\}$, \\
$\M_7=\incl{\cl{\M}}$=$\{\{A(a),C(a),E(a),F(a)\}$, $\{A(a),B(a),E(a),F(a)\}$, $\{D(a),E(a),F(a)\}\}$, and \\
$\M_8=\incl{\M_7}=\{\{A(a),C(a),E(a),F(a)\}$, $\{A(a),B(a),E(a),F(a)\}\}$ \\
One can check that $\M_6$ and $\M_8$ are incomparable.\\

\item There exists $\M$ such that $\circ_2(\M)$, $\circ_3(\M)$ and $\circ_4(\M)$ are incomparable with $\circ_6(\M)$.\\ 
Let $\T$=$\{A\isn B$, $C\isa A$, $B\isa D$, $D\isa F\}$ and $\M$=$\{\{A(a),C(a),B(a)\}\}$.  \\ It is easy to check that $\KB{\T}{\M}$ is inconsistent.  We have: \\
$\M_1=\incl{\M}$=$\{\{A(a),C(a)\},\{B(a)\}\}$, \\
$\M_2=\lex{\M_2}$=$\{\{A(a),C(a)\}\}$, \\
$\M_5=\cl{\M_1}$=$\{\{A(a),C(a)\},\{B(a),D(a),F(a)\}\}$, \\
$\M_6=\lex{\M_5}$=$\{\{B(a),D(a),F(a)\}\}$, \\
One can check that $\M_2$ is incomparable with  $\M_6$.\\
We have also $\M_2$=$\M_3$=$\M_4$=$\{\{A(a),C(a)\}\}$, So, we conclude that $\M_3$ and $\M_4$ are incomparable with $\M_6$.

\item There exists $\M$ such that $\circ_2(\M)$, $\circ_3(\M)$ and $\circ_4(\M)$ are incomparable with $\circ_8(\M)$.\\ 
Let $\T$=$\{B\isa A$, $C\isa A, A\isn D, E\isa D, D\isa F\}$ and $\M$=$\{\{A(a),D(a),E(a)\}\}$.  \\ It is easy to check that $\KB{\T}{\M}$ is inconsistent.  We have: \\
$\M_1=\incl{\M}=\{\{A(a)\}$, $\{D(a),E(a)\}\}$, and \\
$\M_2=\lex{\M_1}$=$\{D(a),E(a)\}\}$, and \\
$\M_3=\M_4$=$\{\{D(a),E(a),F(a)\}\}$,\\
$\cl{\M}$=$\{\{A(a),B(a),C(a),D(a),E(a),F(a)\}\}$, \\
$\M_7=\incl{\cl{\M}}$=$\{\{A(a),C(a),B(a),F(a)\}$, $\{D(a),E(a),F(a)\}\}$, and \\
$\M_8=\incl{\M_7}=\{\{A(a),C(a),B(a),F(a)\}$, \\
One can check that $\M_2$,  $\M_3$ and $\M_4$ are incomparable with $\M_8$.\\
\end{enumerate}
\end{example}



\begin{proposition}[Proof of Equation \ref{eq:inferrelation} and Figure \ref{sch:comp-inf-stg}]
\label{prop:inf-relations}
Let $\M$ be a consistent MBox \wrt{} a TBox $\T$. Let $q$ be a query. Then:     
\begin{enumerate}
\item if $\KB{\T}{\M}\models_{\cap} q$ then $\KB{\T}{\M}\models_{\forall}q$. 
\item if $\KB{\T}{\M}\models_{\forall}q$ then $\KB{\T}{\M}\models_{maj}q$. 
\item if $\KB{\T}{\M}\models_{maj}q$ then $\KB{\T}{\M}\models_{\exists}q$. 
\end{enumerate}
\end{proposition}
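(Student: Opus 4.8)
The plan is to verify the three implications directly from the definitions of the four inference strategies, relying only on the monotonicity of classical first-order consequence and on the fact that $\M$ is a nonempty MBox (so that $|\M| \geq 1$ and all the cardinality ratios and quantifiers below are well defined; this is guaranteed in our setting since every MBox under consideration is obtained by applying modifiers to $\{\A\}$, and is therefore nonempty).

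For item (1), I would observe that $\bigcap_{\A_i \in \M}\A_i \subseteq \A_j$ for every $\A_j \in \M$. Since $\models$ is the standard (classical, hence monotone) consequence relation, the hypothesis $\KB{\T}{\bigcap_{\A_i \in \M}\A_i} \models q$ transfers to each superset, yielding $\KB{\T}{\A_j} \models q$ for every individual $\A_j \in \M$. This is exactly the definition of $\KB{\T}{\M}\models_{\forall} q$. The only thing worth making explicit is that monotonicity is what carries a consequence of a subset to a consequence of a superset.

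For item (2), if $\KB{\T}{\A_i} \models q$ holds for every $\A_i \in \M$, then the set $\{\A_i \in \M : \KB{\T}{\A_i} \models q\}$ coincides with $\M$, so its cardinality equals $|\M|$ and the defining ratio is $|\M|/|\M| = 1 > 1/2$; hence $\KB{\T}{\M}\models_{maj} q$. For item (3), if the majority ratio exceeds $1/2$ then, because $|\M| > 0$, the numerator $|\{\A_i \in \M : \KB{\T}{\A_i} \models q\}|$ is strictly positive, so at least one $\A_i \in \M$ satisfies $\KB{\T}{\A_i} \models q$, which is precisely $\KB{\T}{\M}\models_{\exists} q$.

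I expect no genuine obstacle here: each implication is a one-line consequence of the matching definition, and the argument does not use the structure of the existential-rule language beyond monotonicity of entailment. If anything, the only point requiring care is the tacit nonemptiness of $\M$, which is needed for items (2) and (3) so that the majority ratio is defined and the existential witness exists.
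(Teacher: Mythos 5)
Your proposal is correct and follows essentially the same route as the paper's proof: item (1) from the inclusion $\bigcap_{\A_i\in\M}\A_i \subseteq \A_j$ together with monotonicity of entailment, item (2) from the observation that the majority ratio equals $1$ when all ABoxes entail $q$, and item (3) from the fact that a ratio above $1/2$ forces at least one witnessing ABox. The only difference is that you make explicit two points the paper leaves tacit (monotonicity and the nonemptiness of $\M$), which is a reasonable sharpening rather than a different argument.
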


\begin{proof}[Proof of Proposition \ref{prop:inf-relations}]
Item 1 holds from the fact that $\forall \A_i \in \M$, we have $(\bigcap_{\A_i\in \M}\A_i) \subseteq \A_i$.  Item 2 holds due to the fact that universal consequence requires that $q$ follows from all ABoxes in $\M$. Hence, $q$ holds in more than the half of $\A_i$'s in $\M$.  Item 3 follows from the fact that a query is considered as valid using majority-based consequence relation if it is confirmed by more than the half of $\A_i \in \M$. Hence $q$ follows from at least one ABox.
\end{proof}

Finally, the following two lemmas about the cautiousness relation will be used later. Lemma \ref{lem:relation-inf} considers two MBoxes,  with one included in the other. Lemma \ref {lem:relation-inf2} considers two MBoxes, where one is the positive closure of the other. 

\begin{lemma}
\label{lem:relation-inf}
Let $\M_1$ and $\M_2$ be two consistent MBoxes \wrt{} a TBox $\T$ such that $\M_1$$\subseteq$$\M_2$. Let $q$ be a query. Then:
\begin{enumerate}
\item If $\KB{\T}{\M_2} \models_{\forall} q$ then $\KB{\T}{\M_1} \models_{\forall} q$.
\item If $\KB{\T}{\M_2} \models_{\cap} q$ then $\KB{\T}{\M_1} \models_{\cap} q$.
\item There are $\M_1$ and $\M_2$ such that the majority-based inference yields incomparable results. 
\item If $\KB{\T}{\M_1} \models_{\exists} q$ then $\KB{\T}{\M_2} \models_{\exists} q$.
\end{enumerate} 
\end{lemma}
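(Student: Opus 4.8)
The plan is to settle items 1, 2 and 4 directly from the containment $\M_1\subseteq\M_2$ together with the monotonicity of classical entailment (if $S\subseteq S'$ and $\KB{\T}{S}\models q$, then $\KB{\T}{S'}\models q$), and to dispose of item 3 by two small explicit counterexamples. The guiding intuition is that enlarging an MBox makes the cautious strategies ($\forall$, $\cap$) harder to satisfy and the adventurous one ($\exists$) easier, whereas the majority strategy depends on a ratio and so can move either way.

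First I would handle the two extremes, items 1 and 4, by unfolding definitions. Since $\M_1\subseteq\M_2$, every ABox of $\M_1$ already occurs in $\M_2$. For item 1, if $\KB{\T}{\A_i}\models q$ holds for \emph{every} $\A_i\in\M_2$, then it holds in particular for the (fewer) ABoxes of $\M_1$, so $\KB{\T}{\M_1}\models_{\forall}q$. For item 4, if $\KB{\T}{\A_i}\models q$ holds for \emph{some} $\A_i\in\M_1$, that same $\A_i$ lies in $\M_2$, so $\KB{\T}{\M_2}\models_{\exists}q$. Neither step needs more than the subset relation.

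Next I would treat item 2, the safe case. The key observation is that intersecting over a larger index set can only shrink the result: because $\M_1\subseteq\M_2$, we have $\bigcap_{\A_i\in\M_2}\A_i\subseteq\bigcap_{\A_i\in\M_1}\A_i$. Hence the safe base of $\M_1$ is a \emph{superset} of that of $\M_2$, and monotonicity of $\models$ converts $\KB{\T}{\bigcap_{\A_i\in\M_2}\A_i}\models q$ into $\KB{\T}{\bigcap_{\A_i\in\M_1}\A_i}\models q$, i.e. $\KB{\T}{\M_1}\models_{\cap}q$.

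The only genuinely non-routine part, and the step I expect to be the main (if mild) obstacle, is item 3: I must show that neither direction of the majority implication is forced, while respecting that an MBox is a \emph{set} of ABoxes, so its members must be pairwise distinct. I would give two counterexamples over a trivial TBox. To defeat the ``smaller $\Rightarrow$ larger'' direction, take $\M_1$ a singleton whose one ABox entails $q$ (ratio $1$), and let $\M_2$ add a single distinct ABox not entailing $q$, so the ratio falls to $1/2\not>1/2$; then $\KB{\T}{\M_1}\models_{maj}q$ but $\KB{\T}{\M_2}\not\models_{maj}q$. To defeat the ``larger $\Rightarrow$ smaller'' direction, take $\M_1$ with exactly two ABoxes, only one entailing $q$ (ratio $1/2$), and let $\M_2$ add a third distinct ABox that entails $q$ (ratio $2/3>1/2$); then $\KB{\T}{\M_2}\models_{maj}q$ but $\KB{\T}{\M_1}\not\models_{maj}q$. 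Together these witness incomparability under $\subseteq$ and complete the lemma.
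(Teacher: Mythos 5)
Your treatment of items 1, 2 and 4 is exactly the paper's: items 1 and 4 follow directly from the subset relation between the two collections of ABoxes, and item 2 from the reversal $\bigcap_{\A_i\in\M_2}\A_i \subseteq \bigcap_{\A_j\in\M_1}\A_j$ plus monotonicity of $\models$. These parts are correct.

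For item 3, however, there is a genuine (though easily repairable) gap. The statement asks for a \emph{single} pair $\M_1\subseteq\M_2$ whose sets of majority consequences are incomparable, and neither of your two pairs provides one. In your first pair ($\M_1$ a singleton entailing $q$, $\M_2$ adding one ABox), any majority consequence of $\M_2$ must be entailed by \emph{both} of its ABoxes, in particular by the unique ABox of $\M_1$, so the majority consequences of $\M_2$ are contained in those of $\M_1$. Symmetrically, in your second pair (two ABoxes versus three), anything entailed by both ABoxes of $\M_1$ is entailed by two of the three ABoxes of $\M_2$, hence the majority consequences of $\M_1$ are contained in those of $\M_2$. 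So each of your pairs yields nested, i.e.\ comparable, consequence sets; together they refute both implication directions, which is weaker than exhibiting incomparability. The fix is to merge the two phenomena into one pair: take $\T=\emptyset$, $\M_1=\{\{B(a)\}\}$ and $\M_2=\{\{B(a)\},\{B(c),B(d)\},\{B(c),B(e)\}\}$; then $B(a)$ is a majority consequence of $\M_1$ (ratio $1$) but not of $\M_2$ (ratio $1/3$), while $B(c)$ is a majority consequence of $\M_2$ (ratio $2/3$) but not of $\M_1$ (ratio $0$). Incidentally, your insistence that MBox members be pairwise distinct is well placed: the paper's own counterexample uses $\M_2=\{\{B(a)\},\{B(c)\},\{B(c)\}\}$, which as a set collapses to two elements and then fails in the direction witnessed by $B(c)$; the pair above repairs that defect as well.
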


\begin{proof} 
The proof is immediate. For item 1, if $q$ holds in all $\A_i$ of $\M_2$ then trivially it holds in all $\A_j$ of $\M_1$ (since $\M_1 \subseteq \M_2$).  Item 2 holds due to the fact that $\M_1 \subseteq \M_2$ implies that $\bigcap_{\A_i\in\M_2}\A_i \subseteq \bigcap_{\A_j\in\M_1}\A_j$. Lastly, from item 4, if there exists an $\A_i$ in $\M_1$ where $q$ holds, then such $\A_i$  also exists in $\M_2$. 
\end{proof}

\begin{example}[Counter-examples associated with Lemma \ref{lem:relation-inf}]
The converse of Items 1 and 2  and 4 does not hold, as shown by the following counter-example.  
Let $\T=\emptyset$, $\M_1$=$\{B(a)\}$ and  $\M_2$=$\{\{B(a)\}$,$\{B(c)\}$, $\{B(c)\}\}$. First, note that $\M_1 \subseteq \M_2$.
Clearly $\KB{\T}{\M_1}\models_{\forall} B(a)$ (\resp{} $\KB{\T}{\M_1}\models_{\cap} B(a)$) holds, while $\KB{\T}{\M_2}\models_{\forall} B(a)$ (\resp{} $\KB{\T}{\M_2}\models_{\cap} B(a)$) does not hold.  Similarly $\KB{\T}{\M_2}\models_{\exists} B(c)$ holds, while $\KB{\T}{\M_1}\models_{\exists} B(c)$ does not hold. 

Regarding majority-based inference, one can check that $\KB{\T}{\M_1}\models_{maj} B(a)$ holds while $\KB{\T}{\M_2}\models_{maj} B(a)$ does not hold. And $\KB{\T}{\M_2}\models_{maj} B(c)$ holds while $\KB{\T}{\M_1}\models_{maj} B(c)$ does not hold. 
\end{example}

\begin{lemma}
\label{lem:relation-inf2}
Let $\M_1$ and $\M_2$ be two consistent MBoxex \wrt{} $\T$. Let $\M_2$ be the positive closure of $\M_1$. Let $q$ be a Boolean query. Then:
\begin{enumerate}
\item $\KB{\T}{\M_1} \models_{\forall} q$ iff $\KB{\T}{\M_2} \models_{\forall} q$.
\item $\KB{\T}{\M_1} \models_{maj} q$ iff $\KB{\T}{\M_2} \models_{maj} q$.
\item if $\KB{\T}{\M_1} \models_{\cap} q$ then $\KB{\T}{\M_2} \models_{\cap} q$.
\item $\KB{\T}{\M_1} \models_{\exists}q$ iff $\KB{\T}{\M_2} \models_{\exists}q$.
\end{enumerate} 
\end{lemma}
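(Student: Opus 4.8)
The plan is to prove each of the four items by reasoning about the relationship between an MBox $\M_1$ and its positive closure $\M_2$. The central observation, which I would establish first as a standing fact, is that the closure operation acts independently on each ABox: if $\M_1 = \{\A_1,\ldots,\A_n\}$ then $\M_2 = \{\Cl(\A_1),\ldots,\Cl(\A_n)\}$, so there is a one-to-one correspondence between the ABoxes of $\M_1$ and those of $\M_2$, with $\A_i$ paired to $\Cl(\A_i)$. The second standing fact I would use is that for any single ABox $\A$ and any query $q$, we have $\KB{\T}{\A} \models q$ iff $\KB{\T}{\Cl(\A)} \models q$. The forward direction is immediate since $\A \subseteq \Cl(\A)$; the backward direction holds because every atom of $\Cl(\A)$ is a positive consequence of $\A$ under $\T$ (by definition of $\Cl$), so $\KB{\T}{\A}$ and $\KB{\T}{\Cl(\A)}$ have the same logical consequences. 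This equivalence is the engine driving items 1, 2 and 4.

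For items 1 and 4 (universal and existential), I would argue directly from the pairing. Since $\KB{\T}{\A_i} \models q$ iff $\KB{\T}{\Cl(\A_i)} \models q$ for each $i$, the predicate ``$\A$ entails $q$'' holds on exactly the same positions in both MBoxes. Hence $q$ follows from \emph{all} ABoxes of $\M_1$ iff it follows from all of $\M_2$ (giving the universal equivalence), and from \emph{at least one} ABox of $\M_1$ iff from at least one of $\M_2$ (giving the existential equivalence). Item 2 (majority) follows by the same token: the count $|\{\A_i : \KB{\T}{\A_i}\models q\}|$ is identical whether computed over $\M_1$ or over $\M_2$, and $|\M_1| = |\M_2| = n$ since the pairing is a bijection, so the fraction exceeding $1/2$ transfers in both directions.

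For item 3 (safe inference) only one direction is claimed, and the argument is different because the intersection does not commute with closure in the way pointwise entailment does. The inclusion I would exhibit is $\bigcap_i \A_i \subseteq \bigcap_i \Cl(\A_i)$, which holds because each $\A_i \subseteq \Cl(\A_i)$. Therefore $\KB{\T}{\bigcap_i \A_i} \models q$ implies $\KB{\T}{\bigcap_i \Cl(\A_i)} \models q$ by monotonicity of classical entailment, which is exactly the statement that $\KB{\T}{\M_1}\models_{\cap} q$ implies $\KB{\T}{\M_2}\models_{\cap} q$.

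The main obstacle, and the reason item 3 is only a one-way implication, is that an atom may belong to every $\Cl(\A_i)$ without belonging to every $\A_i$: each ABox can independently close to include a shared inferred atom that none of the original ABoxes contained. I would make this precise by noting that $\bigcap_i \Cl(\A_i)$ can strictly contain $\Cl(\bigcap_i \A_i)$, so the safe consequences genuinely grow under closure and the converse fails. It is worth remarking that this asymmetry is consistent with the design of the ICR and ICAR semantics recalled in Section~\ref{sec:dllite}, where closing before intersecting is precisely what yields additional productivity; a concrete two-ABox counter-example (with disjoint originals sharing a common positive consequence) would confirm that the reverse implication in item 3 cannot hold, though the statement as given only asserts the forward direction.
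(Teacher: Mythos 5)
Your proof is correct and takes essentially the same route as the paper's: the key fact that $\KB{\T}{\A}\models q$ iff $\KB{\T}{\Cl(\A)}\models q$ for a consistent ABox handles items 1, 2 and 4 pointwise, and the inclusion $\bigcap_i \A_i \subseteq \bigcap_i \Cl(\A_i)$ together with monotonicity of entailment gives item 3. Your write-up is merely more explicit than the paper's (e.g., in spelling out the one-to-one pairing of $\A_i$ with $\Cl(\A_i)$, which both you and the paper implicitly rely on for the majority count), but the substance is identical.
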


\begin{proof}[Proof of Lemma \ref{lem:relation-inf2}]
The proof is again immediate. Items 1, 2 and  4  follow from the fact that, if $\A$ is a consistent ABox with $\T$, then $\KB{\T}{\A}\models q$ iff $\KB{\T}{\Cl(\A)}\models q$. Item 3 follows from the fact that $\A_i \subseteq \Cl(\A_i)$ for each $\A_i \in \M_1$. Hence $\bigcap_{\A_i\in\M_1}\A_i \subseteq \bigcap_{\A_i \in\M_1}\Cl(\A_i) = \bigcap_{\A_j \in\M_2} \A_j$. 
\end{proof}



\subsection*{Section 4: Comparison of Inconsistency-Tolerant Semantics w.r.t. Productivity}

\medskip

\paragraph{Proof of Figure  \ref{sch:safe-inf} (intersection-based semantics)} The relation pictured in the figure is proved by the following propositions and examples. 

\begin{proposition}[Proof of Figure \ref{sch:safe-inf}, Part 1]
Let $\K_\M$=$\KB{\T}{\M=\{\A\}}$ be an inconsistent  \kb{}. Let $\M_1$,...,$\M_8$ be the MBoxes obtained by  applying the eight modifiers $\{\circ_1,...,\circ_8\}$, given in Table \ref{tab:abox-modifs}, on $\M$. Let $q$ be a Boolean query. Then: 
\begin{enumerate}
\item If $q$ is a safe conclusion of $\KB{\T}{\M_1}$ then $q$ is a safe conclusion of $\KB{\T}{\M_2}$.

\item If $q$ is a safe conclusion of $\KB{\T}{\M_1}$ then $q$ is a safe conclusion of $\KB{\T}{\M_5}$. 

\item If $q$ is a safe conclusion of $\KB{\T}{\M_2}$ then $q$ is a safe conclusion of $\KB{\T}{\M_3}$. 

\item If $q$ is a safe conclusion of $\KB{\T}{\M_3}$ then $q$ is a safe conclusion of $\KB{\T}{\M_4}$. 

\item If $q$ is a safe conclusion of $\KB{\T}{\M_5}$ then $q$ is a safe conclusion of $\KB{\T}{\M_3}$. 

\item if $q$ is a safe conclusion of $\KB{\T}{\M_5}$ then $q$  is a safe conclusion of $\KB{\T}{\M_6}$. 

\item If $q$ is  a safe conclusion of $\KB{\T}{\M_5}$ then $q$  is a safe conclusion of $\KB{\T}{\M_7}$.
 
\item If $q$ is  a safe conclusion of $\KB{\T}{\M_7}$ then $q$  is a safe conclusion of $\KB{\T}{\M_8}$. 
\end{enumerate}
\end{proposition}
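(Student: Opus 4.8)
The plan is to read each of the eight implications as the image, under $\cap$-inference, of the corresponding edge of Figure~\ref{sch:mbox-modif-relats}, and to discharge it by invoking one of the two cautiousness lemmas (Lemma~\ref{lem:relation-inf} and Lemma~\ref{lem:relation-inf2}) together with the structural identities of Proposition~\ref{prop:cplex-modif-rels}. The guiding principle throughout is the monotonicity of safe inference: if one MBox refines another only by dropping ABoxes or by enlarging ABoxes componentwise, then its intersection $\bigcap_{\A_i \in \M} \A_i$ can only grow, so no safe consequence is lost.

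First I would settle the five items that rest on a genuine set inclusion between MBoxes, namely items~1, 4, 5, 6 and 8. For each I read off the relevant inclusion from Proposition~\ref{prop:cplex-modif-rels}: $\M_2 \subseteq \M_1$ (item~1), $\M_4 \subseteq \M_3$ (item~4), $\M_3 \subseteq \M_5$ (item~5), $\M_6 \subseteq \M_5$ (item~6) and $\M_8 \subseteq \M_7$ (item~8). In every case the target MBox is the smaller one and the source is the larger one, so Lemma~\ref{lem:relation-inf}(2)---safe consequence of the larger MBox is a safe consequence of the smaller---applies directly and closes the item.

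Next I would treat the two closure items, namely items~2 and 3. Here Proposition~\ref{prop:cplex-modif-rels} gives $\M_5 = \cl{\M_1}$ and $\M_3 = \cl{\M_2}$, that is, the target MBox is exactly the positive closure of the source. Lemma~\ref{lem:relation-inf2}(3) states precisely that a safe consequence passes from an MBox to its positive closure, so each of these items follows in one line.

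The step I expect to be the \textbf{main obstacle} is item~7, which is the only one keyed to a $\subseteq_R$ edge rather than a plain inclusion or a closure. By Proposition~\ref{prop:cplex-modif-rels}(8) we have $\M_5 \subseteq_R \M_7$, and by Fact~2 this is realised by a \emph{bijective} MBox mapping $\phi\colon \M_5 \to \M_7$ with $A \subseteq \phi(A)$ for every $A \in \M_5$. The plan is to turn this componentwise inclusion into an inclusion of intersections: from $A \subseteq \phi(A)$ for all $A$ one gets $\bigcap_{A \in \M_5} A \subseteq \bigcap_{A \in \M_5} \phi(A)$, and surjectivity of $\phi$ identifies the right-hand side with $\bigcap_{B \in \M_7} B$. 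Hence the intersection of $\M_5$ is contained in that of $\M_7$, so any query entailed from $\T$ together with the intersection of $\M_5$ is also entailed from $\T$ together with the intersection of $\M_7$, which is exactly the claim. The delicate point is that \emph{surjectivity} (not merely the injectivity of Fact~1) is essential: with only an injection one would control the intersection of a proper subfamily of $\M_7$ and the inclusion of intersections could fail, so the appeal to Fact~2 is what makes this item go through.
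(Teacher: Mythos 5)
Your treatment of items 1--6 and 8 is correct and coincides exactly with the paper's own proof: items 1, 4, 5, 6, 8 follow from the inclusions $\M_2\subseteq\M_1$, $\M_4\subseteq\M_3$, $\M_3\subseteq\M_5$, $\M_6\subseteq\M_5$, $\M_8\subseteq\M_7$ together with item 2 of Lemma \ref{lem:relation-inf}, and items 2, 3 follow from $\M_5=\cl{\M_1}$, $\M_3=\cl{\M_2}$ together with item 3 of Lemma \ref{lem:relation-inf2}.

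Item 7 is where your argument has a genuine gap. Your reduction is valid as a conditional statement: \emph{if} there were a surjective map $\phi:\M_5\to\M_7$ with $A\subseteq\phi(A)$ for all $A$, then $\bigcap_{A\in\M_5}A\subseteq\bigcap_{B\in\M_7}B$ and the claim would follow by monotonicity. But the premise---the surjectivity that Fact 2 asserts for the mapping from $\me$ to $\mg$---is false in general, so the step rests on nothing. Take $\T_p=\{A_1\sqsubseteq C\}$, $\T_n=\{A_1\sqsubseteq\neg A_2,\ C\sqsubseteq\neg G\}$ and $\A=\{A_1(a),A_2(a),G(a)\}$. The repairs of $\A$ are $\{A_1(a)\}$ and $\{A_2(a),G(a)\}$, so $\M_5=\{\{A_1(a),C(a)\},\{A_2(a),G(a)\}\}$; but $\Cl(\A)=\{A_1(a),A_2(a),G(a),C(a)\}$ has three repairs, namely $\{A_1(a),C(a)\}$, $\{A_2(a),C(a)\}$ and $\{A_2(a),G(a)\}$. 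The repair $\{A_2(a),C(a)\}$ contains no element of $\M_5$ (its only support for $C(a)$, namely $A_1(a)$, is excluded by $A_2(a)$), so no surjection of the required kind exists; indeed $|\M_5|=2<3=|\M_7|$. As you yourself note, injectivity alone (Fact 1) only controls the intersection of a proper subfamily of $\M_7$, so your proof of item 7 does not go through.

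The paper's own proof of item 7 uses no bijectivity at all. It argues directly that every assertion $\alpha\in\bigcap_{\A_i\in\M_5}\A_i$ participates in no conflict of $\KB{\T}{\Cl(\A)}$: if some $\beta\in\Cl(\A)$ conflicted with $\alpha$, then either $\beta\in\A$, in which case some repair of $\A$ contains $\beta$ and its closure (an element of $\M_5$) omits $\alpha$, contradicting $\alpha\in\bigcap\M_5$; or $\beta$ is inferred from a consistent subset $Y\subseteq\A$, which reduces to the first case via an element of $Y$ conflicting with $\alpha$. Being conflict-free in $\Cl(\A)$, $\alpha$ belongs to every repair of $\Cl(\A)$, which gives $\bigcap_{\A_i\in\M_5}\A_i\subseteq\bigcap_{\A_j\in\M_7}\A_j$ directly, and monotonicity of entailment concludes. (In the counterexample above both intersections are empty, so item 7 itself is not threatened---only your route to it.) To repair your proof, replace the appeal to Fact 2 by this conflict-freeness argument.
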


\begin{proof}
The proof is as follows:
\begin{enumerate}
\item For items 1, we have $\M_2\subseteq \M_1$, then following Item 2 of Lemma \ref{lem:relation-inf},  if $\langle \M_1,\cap \rangle$ implies a query $q$ then $\langle \M_2, \cap \rangle$ implies it also. The proof follow similarly for Items 4,5, 6 and 8 since $\M_4\subseteq \M_3$, $\M_3\subseteq \M_5$, $\M_6\subseteq \M_5$, and $\M_8\subseteq \M_7$.
 
\item For items 2 and 3, we have $\M_5=\cl{\M_1}$ and $\M_3=\cl{\M_2}$. Then following Item 3 of  Lemma \ref{lem:relation-inf2}, if a query holds in $\langle \M, \cap \rangle$ then it also holds in $\langle \cl{\M},\cap \rangle$. 

\item For item 7, we have $\forall A \in\M_5, \exists B \in \M_7$ such that $A\subseteq B$. Let $A(a)\in\bigcap_{\A_i \in \M_5} \A_i$. Then one can check that there is no conflict $\C$ in $\KB{\T}{\Cl(\M)}$ such that $A(a)\in \C$. Indeed, assume that such conflict exists. Then this means that there exists $B(a)\in \Cl(\M)$ where $\KB{\T}{\{(A(a),B(a)\}}$ is conflicting. Two options: 

i) $B(a)\in \M$. This means that there exists a maximally consistent subset $X$ of $\M$ with $B(a)\in X$. Since $B(a)$ is conflicting with $A(a)$, with respect to $\T$. Then $A(a)$ neither belongs to $X$ nor to $\Cl(X)$. This contradict the fact that $A(a)\in \bigcap_{\A_i\in \M_5}\A_i$. 

ii) $B(a)\notin \M$. Let $Y\subseteq \M$ such that $\KB{\T}{Y} \models B(a)$. Then clearly $\KB{\T}{Y\cup\{A(a)\}}$ is inconsistent. Hence, there exists $D(a)\in \M$ such that $\KB{\T}{\{(D(a),A(a)\}}$ is conflicting and $D(a) \in Y$. This comes down to item (i). Now, since there is no conflict in $\Cl(\M)$ containing $A(a)$, then $A(a)$ belong to all maximally consistent subsets of $\Cl(\M)$, hence $A(a)$ belongs to $\bigcap_{\A_j \in \M_7} \A_j$. Therefore if a $q$ holds in $\langle \M_5, \cap \rangle$, then it holds that $\langle \M_5, \cap\rangle$.  
\end{enumerate}
\end{proof}

\begin{example}[Proof of Figure \ref{sch:safe-inf}, Part 2]
The following counter-examples show that no reciprocal edges hold in Figure \ref{sch:safe-inf}. 
\begin{enumerate}

\item There exists a  \kb{}, and a Boolean query $q$ such that $q$ is a safe conclusion of $\KB{\T}{\M_2}$, but $q$ is not a safe conclusion of $\KB{\T}{\M_1}$: \\
Let us consider $\T$=$\{A \isa B, B \isn C\}$ and $\M=\{\{C(a),A(a),B(a)\}\}$. \\ It is easy to check that $\KB{\T}{\M}$ is inconsistent. We have: \\ 
$\M_1=\{\{C(a)\}\},\{A(a),B(a)\}\}$, and \\
$\M_2=\{\{A(a),B(a)\}\}$.\\
Let $q \gets A(a)$ be a query. One can check that: \\
$\M_2 \models_{\cap} q$, since $\bigcap_{\A_i \in \M_2} \A_i$=$\{A(a),B(a)\}$ but\\
$\M_1\not\models_{\cap}q$.  \\
 
\item There exists a  \kb{}, and a Boolean  query $q$ such that $q$ is a safe conclusion of $\KB{\T}{\M_5}$, but $q$ is not a safe conclusion of $\KB{\T}{\M_1}$: \\
Let us consider $\T$=$\{B \isa D, B \isn C, C\isa D\}$ and $\M=\{\{C(a),B(a)\}\}$. \\ It is easy to check that $\KB{\T}{\M}$ is inconsistent. We have: \\
$\M_1=\{\{C(a)\},\{B(a)\}\}$, and\\
$\M_5=\{\{B(a),D(a)\},\{C(a),D(a)\}\}$. \\
Let $q \gets D(a)$ be a query. One can check that : \\
$\M_5\models_{\cap} q$ since $\bigcap_{\A_i \in \M_5} \A_i$=$\{D(a)\}$, but\\
$\M_1\not \models_{\cap} q$. \\

\item  There exists a  \kb{}, and a Boolean  query $q$ such that $q$ is a safe conclusion of $\KB{\T}{\M_3}$, but $q$ is not a safe conclusion of $\KB{\T}{\M_2}$: \\ 
Let us consider $\T$=$\{B \isn C, C\isa A, B\isa A\}$ and $\M=\{\{C(a),B(a)\}\}$. \\ It is easy to check that $\KB{\T}{\M}$ is inconsistent. We have: \\
$\M_1=\M_2=\{\{C(a)\},\{B(a)\}\}$,and \\
$\M_3=\{\{C(a),A(a)\}\},\{B(a),A(a)\}\}$.\\
Let $q \gets A(a)$ be a query.  One can check that:\\
$\M_3 \models_{\cap} q$,  but  \\
$\M_2 \not\models_{\cap} q$. \\

\item There exists a  \kb{}, and a Boolean  query $q$ such that $q$ is a safe conclusion of $\KB{\T}{\M_4}$, but $q$ is not a safe conclusion of $\KB{\T}{\M_3}$: \\  
Let us consider $\T$=$\{A\isa B$,$B\isn D\}$ and $\M=\{\{A(a),D(a)\}\}$. \\ It is easy to check that $\KB{\T}{\M}$ is inconsistent. We have: \\
$\M_1=\M_2=\{\{A(a)\},\{D(a)\}\}$, \\
$\M_3=\{\{A(a),B(a)\},\{D(a)\}\}$, and \\
$\M_4=\{\{A(a),B(a)\}\}$. \\
Let $q \gets A(a)$ be a query. One can check that \\
$\M_4\models_{\cap} q$ but \\
$\M_3\not\models_{\cap} q$. \\

\item There exists a  \kb{}, and a Boolean   query $q$ such that $q$ is a safe conclusion of $\KB{\T}{\M_3}$, but $q$ is not a safe conclusion of $\KB{\T}{\M_5}$:\\
Let us consider $\T$=$\{A\isa B$, $B\isn D\}$ and $\M=\{\{A(a),D(a),B(a)\}\}$. \\ It is easy to check that $\KB{\T}{\M}$ is inconsistent. We have:\\
$\M_1=\{\{A(a),B(a)\},\{D(a)\}\}$,\\ 
$\M_2=\M_3=\{\{A(a),B(a)\}\}$, and \\
$\M_5=\{\{A(a),B(a)\},\{D(a)\}\}$. \\
Let $q \gets A(a)$ be a query. One can check that \\
$\M_3\models_{\cap} q$ but \\
$\M_5\not\models_{\cap} q$. \\

\item There exists a  \kb{}, and a Boolean  query $q$ such that $q$ is a safe conclusion of $\KB{\T}{\M_6}$, but $q$ is not a safe conclusion of $\M_5$:\\ 
Let us consider $\T$=$\{B\isa C$, $C\isn D\}$ and $\M$=$\{\{B(a),D(a)\}\}$. \\ It is easy to check that $\KB{\T}{\M}$ is inconsistent. We have: \\
$\M_1=\{\{B(a)\},\{D(a)\}\}$, \\
$\M_5=\{\{B(a),C(a)\},\{D(a)\}\}$, and \\
$\M_6=\{\{B(a),C(a)\}\}$. \\ 
Let $q \gets B(a)$ be a query. One can check that\\
$\M_6\models_{\cap} q$ but\\
$\M_5\not\models_{\cap} q$.

\item There exists a   \kb{}, and a Boolean   query $q$ such that $q$ is a safe conclusion of $\KB{\T}{\M_7}$, but $q$ is not a safe conclusion of $\M_5$:\\  
Let $\T$=$\{A\isn B$, $B\isa D\}$ and $\M$=$\{\{A(a), B(a)\}\}$. \\ It is easy to check that $\KB{\T}{\M}$ is inconsistent. We have: \\
$\M_1$=$\{\{A(a)\},\{B(a)\}\}$,  \\
$\M_5$=$\{\{A(a)\},\{B(a),D(a))\}\}$, \\
$\cl{\M}$=$\{\{A(a),B(a),D(a)\}\}$, and\\
$\M_7$=$\{\{A(a),D(a)\},\{B(a),D(a)\}\}$.\\
Let $q \gets D(a)$ be a query.  One can deduce that:\\ 
$\M_7\models_{\cap} q$ but \\
$\M_5\not\models_{\cap} q$. \\

\item There exists a  \kb{}, and a Boolean   query $q$ such that $q$ is a safe conclusion of $\KB{\T}{\M_8}$, but $q$ is not a safe conclusion of $\KB{\T}{\M_7}$:\\  
Let us consider $\T$=$\{A\isa B$, $B\isn C$, $C\isa D\}$ and $\M=\{\{A(a),C(a)\}\}$.  \\ It is easy to check that $\KB{\T}{\M}$ is inconsistent. We have: \\
$\cl{\M}=\{A(a),C(a),B(a),D(a)\}$, \\
$\M_7=\{\{A(a),B(a),D(a)\},\{C(a),D(a)\}\}$, and \\
$\M_8=\{\{A(a),B(a),D(a)\}\}$.\\
Let $q \gets A(a)$ be a Boolean  query. One can deduce that:\\ 
$\M_8\models_{\cap} q$, but\\
$\M_7\not\models_{\cap} q$. \\
\end{enumerate}
\end{example}

\begin{proposition}[Proof of Figure \ref{sch:safe-inf}, Part 3]
Let $\{\circ_1$,...,$\circ_8\}$ be the eight modifier given in Table \ref{tab:abox-modifs}. Let $q$ be a Boolean  query. Then: 
\begin{enumerate}
\item There exists an MBox $\M$ such that the safe inference from $\circ_2(\M)$ is incomparable with the one obtained from $\circ_6(\M)$.
\item There exists an MBox $\M$ such that the safe inference from $\circ_3(\M)$ is incomparable with the one obtained from $\circ_6(\M)$.
\item There exists an MBox $\M$ such that the safe inference from $\circ_6(\M)$ is incomparable with the one obtained from $\circ_7(\M)$.
\item There exists an MBox $\M$ such that the safe inference from $\circ_6(\M)$ is incomparable with the one obtained from $\circ_8(\M)$.
\item There exists an MBox $\M$ such that the safe inference from $\circ_2(\M)$ is incomparable with the one obtained from $\circ_5(\M)$.
\end{enumerate}
\end{proposition}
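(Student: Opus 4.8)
The plan is to establish each of the five incomparabilities by exhibiting, for the pair in question, a single inconsistent KB $\KB{\T}{\M=\{\A\}}$ together with two Boolean queries that separate the two semantics in opposite directions. The crucial simplification is that, under the safe strategy, the consequence set of a modified MBox depends only on the intersection of its ABoxes together with $\T$: $\KB{\T}{\circ_i(\M)} \models_{\cap} q$ iff $\KB{\T}{\bigcap_{\A' \in \circ_i(\M)} \A'} \models q$. Writing $\A_i^{\cap} := \bigcap_{\A' \in \circ_i(\M)}\A'$, showing that ``the safe inference from $\circ_i(\M)$ is incomparable with the one from $\circ_j(\M)$'' reduces to producing $\A_i^{\cap}$ and $\A_j^{\cap}$ whose $\T$-closures are $\subseteq$-incomparable, i.e. one atom entailed by $\KB{\T}{\A_i^{\cap}}$ but not by $\KB{\T}{\A_j^{\cap}}$, and one atom the other way round.

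First I would compute the common prefix $\incl{\M}$ (the repairs of $\A$), since all five pairs begin by splitting into repairs; from there the modifiers differ only in when $\clf$ and $\lexf$ are applied. For items 1 and 2 ($\circ_2$ versus $\circ_6$, and $\circ_3$ versus $\circ_6$) the lever is the order of selection and closure: I design $\T$ so that one repair is small as a raw set but expands greatly under positive closure, while another repair is the unique maximum-cardinality raw repair. Then $\circ_2$ (and $\circ_3$, since $\circ_3(\M)=\cl{\circ_2(\M)}$) retains the raw-large repair, whereas $\circ_6=\lex{\cl{\incl{\cdot}}}$ retains the closure-large one; as each resulting MBox is here a singleton, the two intersections disagree on the relevant atoms and each supplies a separating query. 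Concretely the KB $\T = \{C \sqsubseteq A,\ A \sqsubseteq \neg B,\ B \sqsubseteq D,\ D \sqsubseteq F\}$, $\M = \{\{A(a),C(a),B(a)\}\}$ already works: $\circ_2(\M)=\circ_3(\M)=\{\{A(a),C(a)\}\}$ while $\circ_6(\M)=\{\{B(a),D(a),F(a)\}\}$, so $A(a)$ separates one direction and $B(a)$ (or $F(a)$) the other.

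For items 3 and 4 ($\circ_6$ versus $\circ_7$, and $\circ_6$ versus $\circ_8$) the lever is instead the order of closure and splitting: $\circ_7=\incl{\cl{\cdot}}$ and $\circ_8=\lex{\incl{\cl{\cdot}}}$ close the whole ABox before splitting, so atoms derived before splitting can become shared by several repairs of $\Cl(\A)$ and thus survive in $\A_7^{\cap}$ (resp. $\A_8^{\cap}$) — the same phenomenon that separates IAR from ICAR — whereas $\circ_6$ closes each repair only after splitting. I would build $\T$ so that (a) closing first plants a derived atom lying in every repair of $\Cl(\A)$, giving the $\circ_7/\circ_8$ side a query the $\circ_6$ side lacks, while (b) the cardinality selection retains different repairs on the two sides, so the $\circ_6$ side's (smaller) collection of retained ABoxes yields a larger intersection carrying its own separating atom. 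For item 5 ($\circ_2$ versus $\circ_5$) the lever is ``select-but-do-not-close'' against ``close-but-do-not-select'': I make all repairs share a commonly derived atom $Z(a)$ so that $\circ_5=\cl{\incl{\cdot}}$ safely infers it (it lies in $\Cl(\R)$ for every repair $\R$) whereas the raw maximal-cardinality repairs kept by $\circ_2$ do not contain its antecedent in their intersection; conversely, a small repair whose closure is useless is discarded by $\circ_2$ but retained and intersected by $\circ_5$, killing an atom that all $\circ_2$-selected repairs still carry.

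In every case the verification is a direct computation: list the repairs, apply $\clf$ and $\lexf$ in the prescribed order, intersect, and test the two candidate queries against the resulting intersections using $\T$. The main obstacle is the joint tuning required for items 3 and 4: I must simultaneously force closure-before-splitting to plant a shared atom that genuinely survives the intersection on the $\circ_7/\circ_8$ side, and arrange the cardinalities so that the $\lexf$ step keeps repairs giving the $\circ_6$ side a distinct separating atom — all while checking that each separating atom truly fails on the opposite side, i.e. is not re-derivable through $\T$ from the other intersection. Controlling the intersection is delicate because every additional retained repair can only shrink it, so the planted atoms must be chosen to appear in all retained ABoxes of the intended side and to be blocked (via a negative axiom) in at least one retained ABox of the other.
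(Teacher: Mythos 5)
Your overall strategy is exactly the paper's: for each pair, exhibit one inconsistent KB together with two queries that separate the two $\cap$-semantics in opposite directions, using the observation that safe inference depends only on the intersection of the ABoxes of the modified MBox. Your concrete example for items 1 and 2 is correct and is in fact the very KB the paper uses: $\circ_2(\M)=\circ_3(\M)=\{\{A(a),C(a)\}\}$ versus $\circ_6(\M)=\{\{B(a),D(a),F(a)\}\}$, separated by $A(a)$ in one direction and $B(a)$ in the other.

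The gap is that for items 3, 4 and 5 you never produce the witnesses. These are existence statements; describing the ``lever'' (``I would build $\T$ so that\dots'') and flagging the ``joint tuning'' as the main obstacle is a plan, not a proof, so three of the five items remain open in your write-up. The tuning is indeed doable --- for item 3 the paper takes $\T=\{C\isa F, F\isa A, A\isn B, B\isa D\}$ and $\A=\{C(a),B(a)\}$, giving intersection $\{A(a),C(a),F(a)\}$ on the $\circ_6$ side against $\{D(a)\}$ on the $\circ_7$ side, and item 4 is handled by a similar six-axiom TBox --- but it must be carried out and checked, and the checking is precisely where the substance lies: the paper's own example for item 5 is defective, since it claims $B(a)$ is not a safe consequence of $\circ_2(\M)=\{\{C(a),D(a)\}\}$ even though $C\isa B$ belongs to its TBox, so that example establishes only one of the two required non-inclusions. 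Your design principle for item 5 (an atom derived in every repair's closure from \emph{different} antecedents, so that it escapes the intersection of the raw maximum-cardinality repairs, plus a small discarded repair whose closure shrinks the $\circ_5$-intersection) is sound and, once instantiated, would actually repair this: take $\T=\{A\isn B,\ E\isn A,\ E\isn B,\ E\isn X,\ A\isa C,\ B\isa C,\ E\isa C\}$ and $\A=\{A(a),B(a),X(a),E(a)\}$, whose repairs are $\{A(a),X(a)\}$, $\{B(a),X(a)\}$ and $\{E(a)\}$; then the $\circ_2$-safe consequences are generated by $\{X(a)\}$ and the $\circ_5$-safe ones by $\{C(a)\}$, which are incomparable. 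As submitted, however, your argument proves items 1--2 and only sketches 3--5, so it does not yet establish the proposition.
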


\begin{example}
The following examples show the incomparabilities stated in the previous proposition. 

\begin{enumerate}

\item The safe inference from $\M_6$ is incomparable with the one obtained from $\M_7$.\\
Let $\T$=$\{C\isa F$,$F\isa A$,$A\isn B$,$B\isa D\}$ and $\M$=$\{\{C(a),B(a)\}\}$.  \\ It is easy to check that $\KB{\T}{\M}$ is inconsistent.  We have: \\
$\M_1=\incl{\M}$=$\{\{C(a)\},\{B(a)\}\}$, and \\
$\M_5=\cl{\M_1}$=$\{\{A(a),C(a),F(a)\}, \{B(a),D(a)\}\}$, and \\
$\M_6=\lex{\M_5}$=$\{\{A(a),C(a),F(a)\}\}$,\\
$\cl{\M}$=$\{\{A(a),C(a),F(a),B(a),D(a)\}\}$, \\
$\M_7=\incl{\cl{\M}}$=$\{\{A(a),C(a),F(a),D(a)\}$, $\{D(a),B(a)\}\}$,\\
Let $q_1 \gets F(a)$ and $q_2\gets D(a)$ be two queries.  One can check that:\\ 
$\langle \M_7$,$\cap\rangle\models q_2$ but  $\langle \M_6$,$\cap\rangle\not\models q_2$\ while \
$\langle \M_6$,$\cap\rangle\models q_1$ but  $\langle \M_7$,$\cap\rangle\not\models q_1$. \\

\item The safe inference from $\M_6$ is incomparable with the one obtained from $\M_8$.\\
Let $\T$=$\{B\isn C$,$B\isa A$,$C\isa A$,$A\isn D$,$D\isa E$,$E\isa F\}$ and $\M$=$\{\{A(a)$,$B(a)$,$C(a)$,$D(a)\}\}$.  \\ It is easy to check that $\KB{\T}{\M}$ is inconsistent.  We have: \\
$\M_1=\incl{\M}$=$\{\{A(a),C(a)\},\{A(a),B(a)\}$, $\{D(a)\}\}$,  \\
$\M_5=\cl{\M_1}$=$\{\{A(a),C(a)\},\{A(a),B(a)\}$, $\{D(a),E(a),F(a)\}\}$,  \\
$\M_6=\lex{\M_5}$=$\{\{D(a),E(a),F(a)\}\}$,\\
$\cl{\M}$=$\{\{A(a), B(a), C(a),D(a),E(a),F(a)\}\}$, \\
$\M_7=\incl{\cl{\M}}$=$\{\{A(a),C(a),E(a),F(a)\}$, $\{A(a),B(a),E(a),F(a)\}$, $\{D(a),E(a),F(a)\}\}$, and \\
$\M_8=\incl{\M_7}$=$\{\{A(a),C(a),E(a),F(a)\}$, $\{A(a),B(a),E(a),F(a)\}\}$ \\
Let $q_1 \gets D(a)$ and $q_2\gets A(a)$ be two queries.  One can check that:\\ 
$\langle \M_8$,$\cap\rangle\models q_2$ but  $\langle \M_6$,$\cap\rangle\not\models q_2$\ while \
$\langle \M_6$,$\cap\rangle\models q_1$ but  $\langle \M_8$,$\cap\rangle\not\models q_1$. \\

\item The safe inference from $\M_2$ is incomparable with the one obtained from $\M_6$.\\
Let $\T$=$\{A\isn B$, $C\isa A$, $B\isa D$, $D\isa F\}$ and $\M$=$\{\{A(a),C(a),B(a)\}\}$.  \\ It is easy to check that $\KB{\T}{\M}$ is inconsistent.  We have: \\
$\M_1=\incl{\M}$=$\{\{A(a),C(a)\},\{B(a)\}\}$, \\
$\M_2=\lex{\M_2}$=$\{\{A(a),C(a)\}\}$, \\
$\M_5=\cl{\M_1}$=$\{\{A(a),C(a)\},\{B(a),D(a),F(a)\}\}$, \\
$\M_6=\lex{\M_5}$=$\{\{B(a),D(a),F(a)\}\}$, \\
Let $q_1 \gets A(a)$ and $q_2\gets B(a)$ be two queries.  One can check that:\\ 
$\langle \M_2$,$\cap\rangle\models q_1$ but  $\langle \M_6$,$\cap\rangle\not\models q_1$\ while \
$\langle \M_6$,$\cap\rangle\models q_2$ but  $\langle \M_2$,$\cap\rangle\not\models q_2$. \\

\item The safe inference from $\M_2$ is incomparable with the one obtained from $\M_5$.\\
Let $\T$=$\{A\isa B$, $C\isa B$, $A\isn C$, $D\isa C\}$ and $\M$=$\{\{A(a),C(a),D(a)\}\}$.  \\ It is easy to check that $\KB{\T}{\M}$ is inconsistent.  We have: \\
$\M_1=\incl{\M}$=$\{\{A(a)\},\{C(a),D(a)\}\}$, \\
$\M_2=\lex{\M_1}$=$\{\{C(a),D(a)\}\}$, and\\
$\M_5=\cl{\M_1}$=$\{\{A(a),B(a)\},\{B(a),D(a),C(a)\}\}$, \\
Let $q_1 \gets D(a)$ and $q_2\gets B(a)$ be two queries.  One can check that:\\ 
$\langle \M_2$,$\cap\rangle\models q_1$ but  $\langle \M_5$,$\cap\rangle\not\models q_1$\ while \
$\langle \M_5$,$\cap\rangle\models q_2$ but  $\langle \M_2$,$\cap\rangle\not\models q_2$. \\
\end{enumerate}
\end{example}


\medskip

\paragraph{Proof of Figure  \ref{sch:all-inf} (universal semantics)} The relation pictured in the figure is proved by the following propositions and examples. 

\begin{proposition}[Proof of Figure \ref{sch:all-inf}, Part 1]
Let $\K_\M$=$\KB{\T}{\M=\{\A\}}$ be an inconsistent  \kb{}. Let $\M_1$,...,$\M_8$ be the MBoxes obtained by applying  the eight modifiers,  given in Table \ref{tab:abox-modifs}, on $\M$. Let $q$ be a Boolean  query. Then: 
\begin{enumerate}
\item $q$ is a universal conclusion of $\KB{\T}{\M_1}$ iff $q$ is a universal conclusion of $\KB{\T}{\M_5}$.
\item $q$ is a universal conclusion of $\KB{\T}{\M_2}$ iff $q$ is a universal conclusion of $\KB{\T}{\M_3}$.
\end{enumerate}
\end{proposition}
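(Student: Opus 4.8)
The plan is to reduce both equivalences to a single structural observation together with one invariance lemma. The point is that $\M_5$ is exactly the positive closure of $\M_1$, and $\M_3$ is exactly the positive closure of $\M_2$; and passing from an MBox to its positive closure never changes its universal consequences. Concretely, I would first recall from Proposition~\ref{prop:cplex-modif-rels} (items 5 and 6) that $\M_5=\cl{\M_1}$ and $\M_3=\cl{\M_2}$, i.e. $\M_5=\{\Cl(\A_i)\mid \A_i\in\M_1\}$ and likewise $\M_3=\{\Cl(\A_i)\mid \A_i\in\M_2\}$. Both $\M_1=\incl{\M}$ and $\M_2=\lex{\incl{\M}}$ are consistent MBoxes, since $\inclf$ always yields a consistent MBox and $\lexf$ merely discards elements; hence the consistency hypothesis of Lemma~\ref{lem:relation-inf2} is satisfied in each case.

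The core ingredient is the elementary fact that, for a consistent ABox $\A$, $\KB{\T}{\A}\models q$ iff $\KB{\T}{\Cl(\A)}\models q$: every atom of $\Cl(\A)$ is by definition already a consequence of $\KB{\T_p}{\A}$, so $\A$ and $\Cl(\A)$ have the same logical closure under $\T$ and therefore entail the same Boolean queries. Because the universal strategy requires $q$ to be entailed from $\T$ together with \emph{each} ABox of the MBox, and because the closure modifier acts ABox-by-ABox (replacing each $\A_i$ by $\Cl(\A_i)$), this per-ABox equivalence lifts directly to the level of the whole MBox. This is precisely item~1 of Lemma~\ref{lem:relation-inf2}, which I would simply apply to the pair $(\M_1,\M_5)$ for statement~1 and to the pair $(\M_2,\M_3)$ for statement~2, obtaining both \emph{iff} equivalences at once.

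I do not expect a genuine obstacle, as the result is a direct corollary of Proposition~\ref{prop:cplex-modif-rels} and Lemma~\ref{lem:relation-inf2}. The only point worth noting is that the argument does not even require the closure map $\A_i\mapsto\Cl(\A_i)$ to be injective: the universal answer depends solely on the \emph{set} of per-ABox entailment conditions, each of which is preserved under closure, so duplicated closures (if any) would not affect the ``for all'' quantification. In fact, by Fact~2 this map is bijective on repairs, so $|\M_5|=|\M_1|$ and $|\M_3|=|\M_2|$, but this stronger property is not needed. Consequently neither direction of either equivalence can fail, which establishes the proposition.
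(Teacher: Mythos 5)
Your proof is correct and follows exactly the paper's own argument: both equivalences are obtained by combining the structural facts $\M_5=\cl{\M_1}$ and $\M_3=\cl{\M_2}$ with item~1 of Lemma~\ref{lem:relation-inf2}. The additional details you supply (verifying the consistency hypothesis and the per-ABox invariance of entailment under positive closure) are precisely what the cited lemma encapsulates, so nothing is missing and nothing differs in substance.
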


\begin{proof}
Item 1 and 2 follow from item 1 of Lemma  \ref{lem:relation-inf2} and the facts that $\M_5=\cl{\M_1}$ and $\M_3=\cl{\M_2}$.
\end{proof}

\begin{proposition}[Proof of Figure \ref{sch:all-inf}, Part 2]
\label{prop:all-inf} 
Let $\K_\M$=$\KB{\T}{\M=\{\A\}}$ be an inconsistent  \kb{}. Let $\M_1$,...,$\M_8$ be the MBoxes obtained by applying  the eight modifiers,  given in Table \ref{tab:abox-modifs}, on $\M$. Let $q$ be a Boolean  query. Then: 
\begin{enumerate}
\item If $q$ is a universal conclusion of $\KB{\T}{\M_1}$ (or $\KB{\T}{\M_5}$) then $q$ is a universal conclusion of $\KB{\T}{\M_2}$. 
\item If $q$ is universal conclusion of $\KB{\T}{\M_3}$ (or $\KB{\T}{\M_2}$) then $q$ is a universal conclusion of $\KB{\T}{\M_4}$. 
\item If $q$ is universal conclusion of $\KB{\T}{\M_1}$ (or $\KB{\T}{\M_5}$) then $q$ is a universal conclusion of $\KB{\T}{\M_6}$. 
\item If $q$ is universal conclusion of $\KB{\T}{\M_7}$ then $q$ is a universal conclusion of $\KB{\T}{\M_8}$. 
\item If $q$ is universal conclusion of $\KB{\T}{\M_1}$ (or $\KB{\T}{\M_5}$) then $q$ is a universal conclusion of $\KB{\T}{\M_7}$.  
\end{enumerate}
\end{proposition}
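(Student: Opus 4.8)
The plan is to reduce every item to two facts already in hand: the anti-monotonicity of universal inference under set inclusion (Lemma~\ref{lem:relation-inf}, item~1) and the monotonicity of standard entailment (if $A\subseteq B$ and $\KB{\T}{A}\models q$ then $\KB{\T}{B}\models q$, since $\T\cup A\subseteq\T\cup B$). The required relations among the $\M_i$ are furnished by Proposition~\ref{prop:cplex-modif-rels}, and the equivalences needed to justify the parenthetical alternatives come from the preceding proposition (Part~1), i.e.\ $\M_1\equiv_{\forall}\M_5$ and $\M_2\equiv_{\forall}\M_3$ (themselves instances of Lemma~\ref{lem:relation-inf2}, item~1, using $\M_5=\cl{\M_1}$ and $\M_3=\cl{\M_2}$).

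For items~1--4 I would argue uniformly. Proposition~\ref{prop:cplex-modif-rels} gives the genuine inclusions $\M_2\subseteq\M_1$, $\M_4\subseteq\M_3$, $\M_6\subseteq\M_5$ and $\M_8\subseteq\M_7$. Since $\models_{\forall}$ is anti-monotone with respect to $\subseteq$ (Lemma~\ref{lem:relation-inf}, item~1), a universal consequence of the larger MBox is a universal consequence of the included one; this yields the four implications $\M_1\to\M_2$, $\M_3\to\M_4$, $\M_5\to\M_6$ and $\M_7\to\M_8$ directly. The alternatives ``(or $\M_5$)'' and ``(or $\M_2$)'' are then immediate: one first replaces the source MBox by its universally-equivalent partner using $\M_1\equiv_{\forall}\M_5$ (resp.\ $\M_2\equiv_{\forall}\M_3$) and applies the same inclusion argument.

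The only non-routine item is item~5, where the hypothesis concerns $\M_5$ (or $\M_1$) and the conclusion concerns $\M_7$, but Proposition~\ref{prop:cplex-modif-rels} provides only the weaker relation $\M_5\subseteq_R\M_7$, so Lemma~\ref{lem:relation-inf} does not apply. The point to emphasise is that $\subseteq_R$ alone is \emph{not} enough for universal inference: an element of $\M_7$ containing no element of $\M_5$ could fail $q$. What rescues the argument is Fact~2, which asserts that the MBox mapping associated with $\M_5\subseteq_R\M_7$ is surjective (indeed bijective). Hence I would fix an arbitrary $B\in\M_7$; by surjectivity there is an $A\in\M_5$ assigned to $B$, so $A\subseteq B$; assuming $q$ is a universal consequence of $\M_5$ we have $\KB{\T}{A}\models q$, and monotonicity of $\models$ gives $\KB{\T}{B}\models q$; as $B$ was arbitrary, $q$ is a universal consequence of $\M_7$. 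The $\M_1$ version follows after swapping $\M_1$ for $\M_5$ via $\M_1\equiv_{\forall}\M_5$.

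I expect the main obstacle to be precisely this surjectivity requirement in item~5: it is tempting (but incorrect) to try to derive the conclusion from $\M_5\subseteq_R\M_7$ directly, so the proof must explicitly invoke Fact~2 to guarantee that every closed repair in $\M_7$ is dominated from below by some element of $\M_5$.
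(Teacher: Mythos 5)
Your items 1--4 are correct and coincide with the paper's own argument: the paper also combines the inclusions $\M_2\subseteq\M_1$, $\M_4\subseteq\M_3$, $\M_6\subseteq\M_5$, $\M_8\subseteq\M_7$ with the anti-monotonicity of $\models_{\forall}$ (Lemma~\ref{lem:relation-inf}), and handles the parenthetical variants via $\M_5=\cl{\M_1}$, $\M_3=\cl{\M_2}$ and Lemma~\ref{lem:relation-inf2}. For item 5 you also follow the paper's route: what you outsource to Fact~2 is exactly what the paper proves inline, namely that for every $B\in\M_7$ there is $A\in\M_5$ with $A\subseteq B$, after which both you and the paper conclude by monotonicity of $\models$.

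This, however, is where there is a genuine gap, and it affects your proof and the paper's equally. The paper's inline justification of the domination claim is a non sequitur: from ``$C$ is a maximal consistent subset of $\A$ and $B$ is a maximal consistent subset of $\Cl(\A)$'' it infers $C\subseteq B$, but two maximal consistent subsets of different sets need not be comparable. Worse, the claim itself (hence the part of Fact~2 concerning the mapping from $\me$ to $\mg$) is false. Take $\T=\{A\isa C,\; A\sqsubseteq\neg B,\; C\sqsubseteq\neg F\}$ and $\A=\{A(a),B(a),F(a)\}$. Then $\R(\A)=\{\{A(a)\},\{B(a),F(a)\}\}$, so $\M_5=\{\{A(a),C(a)\},\{B(a),F(a)\}\}$, while $\Cl(\A)=\{A(a),B(a),C(a),F(a)\}$ admits the repair $\{B(a),C(a)\}\in\M_7$, which contains \emph{no} element of $\M_5$; the mapping is injective but not surjective. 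So invoking Fact~2 does not repair the argument, it merely relocates the flaw.

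In fact no proof of item 5 as stated can succeed, because the statement itself fails for Boolean conjunctive queries with existential variables. Let $\T_p=\{R_1\isa Q_1,\; R_2\isa Q_2,\; S\isa G\}$, $\T_n=\{R_1\sqsubseteq\neg R_2,\; \exists Q_1\sqsubseteq\neg G,\; \exists Q_2^{-}\sqsubseteq\neg G\}$ and $\A=\{R_1(a,b),R_2(a,b),S(a),S(b)\}$ (a DL-Lite$_R$ KB, inconsistent because of $R_1,R_2$). Its repairs are $\{S(a),S(b)\}$, $\{R_1(a,b),S(b)\}$ and $\{R_2(a,b),S(a)\}$, each of which entails $q=\exists x\,G(x)$; hence $q$ is a universal consequence of $\KB{\T}{\M_1}$ and of $\KB{\T}{\M_5}$. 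But $B=\{R_1(a,b),Q_1(a,b),Q_2(a,b)\}$ is a repair of $\Cl(\A)$: it is consistent, $R_2(a,b)$ is blocked by $R_1(a,b)$, while $S(a)$ and $G(a)$ are blocked by $Q_1(a,b)$ and $S(b)$ and $G(b)$ by $Q_2(a,b)$; and $\KB{\T}{B}\not\models q$. So $q$ is not a universal consequence of $\KB{\T}{\M_7}$, refuting item 5 (and, by the same example, Proposition~\ref{prop:univprod}). The inclusion you were asked to prove can only be recovered under restrictions, e.g.\ for ground atomic queries with DL-Lite-style binary conflicts, which is presumably the situation the paper's closing remark about ground atoms alludes to.
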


\begin{proof}  
For Items 1, 2, 3 and 4, we have $\M_2\subseteq \M_1$, $\M_4\subseteq \M_3$, $\M_6\subseteq \M_5$ and $\M_8\subseteq \M_7$.  Then following Item 2 of Lemma \ref{lem:relation-inf}, we have if $\KB{\T}{\M_1}\models_{\forall} q$ then $\KB{\T}{\M_2}\models_{\forall} q$. Similarly for $\M_4\subseteq \M_3$, $\M_6\subseteq \M_5$ and $\M_8\subseteq \M_7$. 

Finally, for item 5 recall first that $\langle \M_5,\forall \rangle \equiv \langle \M_1, \forall \rangle$ and $\forall A \in \M_5, \exists B \in \M_7$ such that $A \subseteq B$. Now let us show that $\forall B \in \M_7, \exists A\in \M_5$ such that $A \subseteq B$. Let $B\in \M_7=\incl{\cl{\M}}$. This means that $B  \subseteq \cl{\M}$ and $B$ is a maximally consistent subset. Let $C\in \incl{\M}$. This means that $C \subseteq \M \subseteq \cl{\M}$. Since $C$ is also a maximally consistent subset then $C \subseteq B$. Now, recall that B is a closed set of assertion, then $A=\Cl(C)\subseteq B$. Therefore we conclude that if a conclusion holds from $\M_5$, then it holds from $\M_7$.
\end{proof}

\begin{example}[Proof of Figure \ref{sch:all-inf}, Part 3]

The following counter-examples show that no reciprocal edges hold in Figure \ref{sch:all-inf}.

\begin{enumerate}

\item There exists a \kb{}, and a Boolean  query $q$ such that $q$ is a universal conclusion of $\KB{\T}{\M_2}$, but $q$ is not a universal conclusion of $\KB{\T}{\M_1}$:\\ 
Let us consider $\T=\{A \isa B, B\isn C\}$ and $\M=\{\{A(a),B(a),C(a)\}\}$. \\ It is easy to check that $\KB{\T}{\M}$ is inconsistent. We have: \\ 
$\M_1=\incl{\M}=\{\{A(a),B(a)\},\{C(a)\}\}$, and \\
$\M_2=\lex{\M_1}=\{\{A(a), B(a)\}\}$. \\
Let $q \gets A(a)$ be a query. One can check that: \\
$\langle \M_2,\forall\rangle\models q$ but \\
$\langle \M_1,\forall\rangle\not\models q$, since $\KB{\T}{\{C(a)\}}\not\models q$. \\
 
\item There exists a  \kb{}, and a Boolean   query $q$ such that $q$ is a universal conclusion of $\KB{\T}{\M_4}$, but $q$ is not a universal conclusion of $\KB{\T}{\M_3}$:\\ 
Let us consider $\T=\{A\isn B, A\isa F\}$ and $\M$=$\{\{A(a), B(a)\}\}$. \\ It is easy to check that $\KB{\T}{\M}$ is inconsistent. We have: \\
$\M_1=\M_2=\{\{A(a)\},\{B(a)\}\}$, \\
$\M_3=\{\{A(a),F(a)\}$,$\{B(a)\}\}$, and \\
$\M_4=\{\{A(a),F(a)\}\}$.\\ 
Let $q \gets F(a)$ be a query. One can check that: \\ 
$\langle\M_4,\forall\rangle\models q$ but \\
$\langle\M_3,\forall\rangle\not\models q$, since $\KB{\T}{\{B(a)\}}\not\models q$. \\

\item There exists a  \kb{}, and a Boolean   query $q$ such that $q$ is a universal conclusion of $\KB{\T}{\M_6}$, but $q$ is not a universal conclusion of $\KB{\T}{\M_5}$:\\  
Let us consider $\T$=$\{B\isa C$, $C\isn D\}$ and $\M=\{\{B(a),D(a)\}\}$. \\ It is easy to check that $\KB{\T}{\M}$ is inconsistent. We have:\\
$\M_1=\{\{B(a)\}, \{D(a)\}\}$, \\  
$\M_5=\{\{B(a),C(a)\},\{D(a)\}\}$, and \\
$\M_6=\{\{B(a),C(a)\}\}$. \\
Let $q \gets C(a)$ be a query. One can check that:\\
$\langle\M_6,\forall\rangle\models q$ but\\
$\langle\M_5,\forall\rangle\not\models q$, since $\KB{\T}{\{D(a)\}}\not\models q$\\

\item There exists a  \kb{}, and a  Boolean   query $q$ such that $q$ is a universal conclusion of $\KB{\T}{\M_8}$, but $q$ is not a universal conclusion of $\KB{\T}{\M_7}$:\\  
Let us consider $\T$=$\{A\isa B$, $B\isn C$, $C\isa D$, $D\isa F\}$ and $\M=\{\{A(a),C(a)\}\}$. \\ It is easy to check that $\KB{\T}{\M}$ is inconsistent. We have: \\
$\cl{\M}=\{A(a),C(a),B(a),D(a),F(a)\}$, \\
$\M_7=\{\{A(a),B(a),D(a),F(a)\},\{C(a),D(a),F(a)\}\}$, and \\
$\M_8=\{\{A(a),B(a),D(a),F(a)\}\}$.\\
Let $q \gets A(a)$ be a query. One can check that:\\ 
$\langle \M_8,\forall\rangle\models q$, but\\
$\langle \M_7,\forall\rangle\not\models q$, since $\KB{\T}{\{C(a),D(a),F(a\}}\not\models q$.\\

\item There exists a  \kb{}, and a Boolean   query $q$ such that $q$ is a universal conclusion of $\KB{\T}{\M_7}$, but $q$ is not a universal conclusion of $\KB{\T}{\M_5}$:\\ 
Let $\T$=$\{A\isn B$, $B\isa D\}$ and $\M$=$\{\{A(a), B(a)\}\}$. \\ It is easy to check that $\KB{\T}{\M}$ is inconsistent. We have: \\
$\M_1$=$\{\{A(a)\},\{B(a)\}\}$,  \\
$\cl{\M}$=$\{\{A(a),B(a),D(a)\}\}$, and\\
$\M_7$=$\{\{A(a),D(a)\},\{B(a),D(a)\}\}$.\\
Let $q \gets D(a)$ be a query.  One can check that:\\ 
$\langle \M_7$,$\forall\rangle\models q$ but \\
$\langle \M_1$,$\forall\rangle\not\models q$, since $\KB{\T}{\{A(a)\}}$. \\
\end{enumerate}
\end{example}

\begin{proposition}[Proof of Figure \ref{sch:all-inf}, Part 4]
Let $\{\circ_1$,...,$\circ_8\}$ be the eight modifiers given in Table \ref{tab:abox-modifs}. Then: 
\begin{enumerate}
\item There exists an MBox $\M$ such that the universal inference from $\circ_6(\M)$ is incomparable with the one obtained from $\circ_7(\M)$.
\item There exists an MBox $\M$ such that the universal inference from $\circ_6(\M)$ is incomparable with the one obtained from $\circ_8(\M)$.
\item There exists an MBox $\M$ such that the universal inference from $\circ_2(\M)$ (\resp $\circ_3(\M)$, $\circ_4(\M)$) is incomparable with the one obtained from $\circ_6(\M)$.
\item There exists an MBox $\M$ such that the universal inference from $\circ_2(\M)$ (\resp $\circ_3(\M)$, $\circ_4(\M)$) is incomparable with the one obtained from $\circ_7(\M)$.
\item There exists an MBox $\M$ such that the universal inference from $\circ_2(\M)$ (\resp $\circ_3(\M)$, $\circ_4(\M)$) is incomparable with the one obtained from $\circ_8(\M)$.

\end{enumerate}
\end{proposition}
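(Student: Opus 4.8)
The plan is to prove each of the five incomparability claims by a single counterexample: a TBox $\T$, a one-ABox MBox $\M=\{\A\}$, and two Boolean queries $q_1,q_2$ such that, writing $\M_i=\circ_i(\M)$, the query $q_1$ is a universal consequence of $\KB{\T}{\M_i}$ but not of $\KB{\T}{\M_j}$, while $q_2$ is a universal consequence of $\KB{\T}{\M_j}$ but not of $\KB{\T}{\M_i}$. By Definition~\ref{def:generic-inf} this shows that neither $\langle\circ_i,\forall\rangle\sqsubseteq\langle\circ_j,\forall\rangle$ nor the converse holds, which is exactly what justifies the absence of the corresponding edges in Figure~\ref{sch:all-inf}. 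Two earlier facts drive the search. First, by Item~1 of Lemma~\ref{lem:relation-inf2} universal inference is invariant under positive closure, so $\langle\M_1,\forall\rangle\equiv\langle\M_5,\forall\rangle$ and $\langle\M_2,\forall\rangle\equiv\langle\M_3,\forall\rangle$; moreover, whenever the unique maximal-cardinality repair is already closed I get $\M_2=\M_3=\M_4$, which collapses the three ``resp.'' cases of items 3--5 into one witness. Second, by Item~1 of Lemma~\ref{lem:relation-inf} shrinking an MBox can only add universal consequences, so productivity is decided by the intersections $\bigcap_{\A'\in\M_i}\A'$ of the selected ABoxes.

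For the pairs involving $\circ_6$ (items 1--3) I would reuse the cardinality-chain idea: a conflict whose ``small'' original repair carries a long positive-closure chain, so that $\lexf$ applied after closure ($\circ_6$) keeps the chain repair while $\lexf$ applied before closure ($\circ_2$), or a repair of the closure ($\circ_7$), keeps a different, content-disjoint ABox; two atoms drawn from the two disjoint contents then serve as $q_1,q_2$. Concretely, for item~1 the KB $\T=\{C\isa F,\,F\isa A,\,A\sqsubseteq\neg B,\,B\isa D\}$, $\A=\{C(a),B(a)\}$ gives $\M_6=\{\{A(a),C(a),F(a)\}\}$ and $\M_7=\{\{A(a),C(a),F(a),D(a)\},\{B(a),D(a)\}\}$, so $\langle\M_7,\forall\rangle$ retains only the common atom $D(a)$, and the queries $F(a)$ and $D(a)$ witness incomparability. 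Item~3 uses the analogous KB with chain $B\isa D\isa F$ (so $\M_2=\M_3=\M_4=\{\{A(a),C(a)\}\}$ against $\M_6=\{\{B(a),D(a),F(a)\}\}$), and item~2 reuses the multi-repair example in which closure atoms leak into several repairs of the closure, separating $\circ_6$ from $\circ_8$.

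For items 4--5 (the $\circ_2$-family against $\circ_7$ and $\circ_8$) I would build an example in which the unique maximal-cardinality original repair is closed and content-disjoint from the maximal-cardinality repair of the closure. For item~5, take $\T=\{C\isa E,\,E\isa F,\,E\sqsubseteq\neg A,\,E\sqsubseteq\neg B,\,F\sqsubseteq\neg A,\,F\sqsubseteq\neg B\}$ and $\A=\{A(a),B(a),C(a)\}$: then $\M_2=\M_3=\M_4=\{\{A(a),B(a)\}\}$ while $\M_7=\{\{A(a),B(a)\},\{C(a),E(a),F(a)\}\}$ and $\M_8=\{\{C(a),E(a),F(a)\}\}$, so $A(a)$ and $E(a)$ witness incomparability. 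Item~4 is obtained from a variant in which the closure atoms leak into \emph{both} repairs of the closure, so that $\langle\M_7,\forall\rangle$ entails exactly those leaked atoms, which are absent from the selected original repair of $\circ_2$.

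The hard part will be computing $\M_7$ and $\M_8$ correctly, since these are repairs of the \emph{closure} $\Cl(\A)$, not of $\A$: the positive closure can (i) let an atom whose support lies in a discarded repair float freely into several repairs of the closure, thereby changing the intersection that governs $\models_\forall$, and (ii) create genuinely new maximal consistent subsets that are not the closure of any repair of $\A$ (this is precisely why I must not transport original repairs through closure under a presumed bijection, but instead enumerate the maximal consistent subsets of $\Cl(\A)$ by hand, tuning the disjointness axioms so that the intended repair is the unique maximal-cardinality one). Once $\M_i$ and $\M_j$ are pinned down, the choice of $q_1,q_2$ is routine: pick them from the symmetric difference of the two intersections $\bigcap_{\A'\in\M_i}\A'$ and $\bigcap_{\A'\in\M_j}\A'$, closing each against $\T$ when needed. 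The remaining parenthetical instances ($\circ_3,\circ_4$ in items 3--5) then come for free from the collapse $\M_2=\M_3=\M_4$ noted above.
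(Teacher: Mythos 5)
Your overall strategy coincides with the paper's: each of the five claims is discharged by an explicit KB together with two ground queries $q_1,q_2$ taken from the symmetric difference of the universal consequences, using the closure-invariance of $\models_{\forall}$ (Item 1 of Lemma \ref{lem:relation-inf2}) to collapse the $\circ_2/\circ_3/\circ_4$ cases into one. Your items 1, 3 and 5 are correct and fully verified: item 1 is literally the paper's own witness (queries $F(a)$ and $D(a)$), item 3 matches the paper's example, and your item 5 KB ($\T$=$\{C\isa E$, $E\isa F$, $E\isn A$, $E\isn B$, $F\isn A$, $F\isn B\}$, $\A$=$\{A(a),B(a),C(a)\}$, giving $\M_2$=$\M_3$=$\M_4$=$\{\{A(a),B(a)\}\}$ against $\M_8$=$\{\{C(a),E(a),F(a)\}\}$) is a sound alternative. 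It is in fact sounder than the paper's own item-5 witness, which computes $\Cl(\A)$ as containing $B(a)$ and $C(a)$ even though $B\isa A$ and $C\isa A$ do not license these inferences from $A(a)$; under a correct closure computation the paper's example collapses, while yours stands.

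The genuine gap is in items 2 and 4: these are existence claims, so they need actual witnesses, and ``reuses the multi-repair example'' and ``a variant in which the closure atoms leak into both repairs'' do not discharge them. For item 4 the gap is only cosmetic, because your own item-3 KB already closes it: with $\T$=$\{A\isn B$, $C\isa A$, $B\isa D$, $D\isa F\}$ and $\A$=$\{A(a),C(a),B(a)\}$ one gets $\M_7$=$\{\{A(a),C(a),D(a),F(a)\}$, $\{B(a),D(a),F(a)\}\}$, so $A(a)$ and $D(a)$ witness incomparability with $\M_2$=$\{\{A(a),C(a)\}\}$ --- exactly the paper's choice. For item 2, however, you never exhibit a KB at all; the paper uses $\T$=$\{B\isn C$, $B\isa A$, $C\isa A$, $A\isn D$, $D\isa E$, $E\isa F\}$ with $\A$=$\{A(a),B(a),C(a),D(a)\}$, which yields $\M_6$=$\{\{D(a),E(a),F(a)\}\}$ and $\M_8$=$\{\{A(a),C(a),E(a),F(a)\}$, $\{A(a),B(a),E(a),F(a)\}\}$, with witnesses $D(a)$ and $A(a)$. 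Until you supply such an example and carry out the repair-of-the-closure computation that you yourself flag as the delicate step, item 2 remains unproven.
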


\begin{example}
The following examples prove the incomparabilities stated in the previous proposition.  
\begin{enumerate}
\item The universal inference from $\M_6$ is incomparable with the one obtained from $\M_7$.\\
Let $\T$=$\{C\isa F$, $F\isa A$, $A\isn B, B\isa D\}$ and $\M$=$\{\{C(a),B(a)\}\}$.  \\ It is easy to check that $\KB{\T}{\M}$ is inconsistent.  We have: \\
$\M_1=\incl{\M}$=$\{\{C(a)\},\{B(a)\}\}$, and \\
$\M_5=\cl{\M_1}$=$\{\{A(a),C(a),F(a)\}, \{B(a),D(a)\}\}$, and \\
$\M_6=\lex{\M_5}$=$\{\{A(a),C(a),F(a)\}\}$,\\
$\cl{\M}$=$\{\{A(a),C(a),F(a),B(a),D(a)\}\}$, \\
$\M_7=\incl{\cl{\M}}$=$\{\{A(a),C(a),F(a),D(a)\}$, $\{D(a),B(a)\}\}$,\\
Let $q_1 \gets F(a)$ and $q_2\gets D(a)$ be two queries.  One can check that:\\ 
$\langle \M_7$,$\forall\rangle\models q_2$ but  $\langle \M_6$,$\forall\rangle\not\models q_2$\ while \
$\langle \M_6$,$\forall\rangle\models q_1$ but  $\langle \M_7$,$\forall\rangle\not\models q_1$. \\

\item The universal inference from $\M_6$ is incomparable with the one obtained from $\M_8$.\\
Let $\T$=$\{B\isn C$, $B\isa A$, $C\isa A, A\isn D, D\isa E, E\isa F\}$ and $\M$=$\{\{A(a), B(a), C(a),D(a)\}\}$.  \\ It is easy to check that $\KB{\T}{\M}$ is inconsistent.  We have: \\
$\M_1=\incl{\M}$=$\{\{A(a),C(a)\}$, $\{A(a),B(a)\}$, $\{D(a)\}\}$, and \\
$\M_5=\cl{\M_1}$=$\{\{A(a),C(a)\}$, $\{A(a),B(a)\}$, $\{D(a),E(a),F(a)\}\}$, and \\
$\M_6=\lex{\M_5}$=$\{\{D(a),E(a),F(a)\}\}$,\\
$\cl{\M}$=$\{\{A(a),B(a),C(a),D(a),E(a),F(a)\}\}$, \\
$\M_7=\incl{\cl{\M}}$=$\{\{A(a),C(a),E(a),F(a)\}$, $\{A(a),B(a),E(a),F(a)\}$, $\{D(a),E(a),F(a)\}\}$, and \\
$\M_8=\lex{\M_7}$=$\{\{A(a),C(a),E(a),F(a)\}$, $\{A(a),B(a),E(a),F(a)\}\}$ \\
Let $q_1 \gets D(a)$ and $q_2\gets A(a)$ be two queries.  One can check that:\\ 
$\langle \M_8$,$\forall\rangle\models q_2$ but  $\langle \M_6$,$\forall\rangle\not\models q_2$\ while \
$\langle \M_6$,$\forall\rangle\models q_1$ but  $\langle \M_8$,$\forall\rangle\not\models q_1$. \\

\item The universal inference from $\M_2$ (\resp{} $\M_3$ and $\M_4$ is incomparable with the one obtained from $\M_6$.\\
Let $\T$=$\{A\isn B$, $C\isa A$, $B\isa D$, $D\isa F\}$ and $\M$=$\{\{A(a),C(a),B(a)\}\}$.  \\ It is easy to check that $\KB{\T}{\M}$ is inconsistent.  We have: \\
$\M_1=\incl{\M}$=$\{\{A(a),C(a)\},\{B(a)\}\}$, \\
$\M_2=\lex{\M_2}$=$\{\{A(a),C(a)\}\}$, \\
$\M_4=\{\{A(a),C(a)\}\}$, \\
$\M_5=\cl{\M_1}$=$\{\{A(a),C(a)\},\{B(a),D(a),F(a)\}\}$, \\
$\M_6=\lex{\M_5}$=$\{\{B(a),D(a),F(a)\}\}$, \\
Let $q_1 \gets A(a)$ and $q_2\gets B(a)$ be two queries.  One can check that:\\ 
$\langle \M_2$,$\forall\rangle\models q_1$ but  $\langle \M_6$,$\forall\rangle\not\models q_1$\ while \
$\langle \M_6$,$\forall\rangle\models q_2$ but  $\langle \M_2$,$\forall\rangle\not\models q_2$. Similarly for $\M_4$ \\

\item The universal inference from $\M_2$ (\resp{} $\M_3$ and $\M_4$ is incomparable with the one obtained from $\M_7$.\\
Let $\T$=$\{A\isn B$, $C\isa A$, $B\isa D$, $D\isa F\}$ and $\M$=$\{\{A(a),C(a),B(a)\}\}$.  \\ It is easy to check that $\KB{\T}{\M}$ is inconsistent.  We have: \\
$\M_1=\incl{\M}$=$\{\{A(a),C(a)\},\{B(a)\}\}$, \\
$\M_2=\lex{\M_2}$=$\{\{A(a),C(a)\}\}$, \\
$\M_4=\{\{A(a),C(a)\}\}$, \\
$\cl{\M}$=$\{\{A(a),C(a),B(a),D(a),F(a)\}\}$, \\
$\M_7=\incl{\cl{\M}}$=$\{\{A(a),C(a),D(a),F(a)\}$, $\{B(a),D(a),F(a)\}\}$, \\
Let $q_1 \gets A(a)$ and $q_2\gets D(a)$ be two queries.  One can check that:\\ 
$\langle \M_2$,$\forall\rangle\models q_1$ but  $\langle \M_7$,$\forall\rangle\not\models q_1$\ while \
$\langle \M_7$,$\forall\rangle\models q_2$ but  $\langle \M_2$,$\forall\rangle\not\models q_2$. Similarly for $\M_4$.\\

\item The universal inference from $\M_2$ (\resp{} $\M_3$ and $\M_4$ is incomparable with the one obtained from $\M_8$.\\
Let $\T$=$\{B\isa A$, $C\isa A, A\isn D, E\isa D, D\isa F\}$ and $\M$=$\{\{A(a),D(a),E(a)\}\}$.  \\ It is easy to check that $\KB{\T}{\M}$ is inconsistent.  We have: \\
$\M_1=\incl{\M}=\{\{A(a)\}$,$\{D(a),E(a)\}\}$, and \\
$\M_2=\lex{\M_1}$=$\{D(a),E(a)\}\}$, and \\
$\M_4$=$\{\{D(a),E(a),F(a)\}\}$,\\
$\cl{\M}$=$\{\{A(a),B(a),C(a),D(a),E(a),F(a)\}\}$, \\
$\M_7=\incl{\cl{\M}}$=$\{\{A(a),C(a),B(a),F(a)\}$, $\{D(a),E(a),F(a)\}\}$, and \\
$\M_8=\incl{\M_7}=\{\{A(a),C(a),B(a),F(a)\}$, \\
Let $q_1 \gets D(a)$ and $q_2\gets A(a)$ be two queries.  One can check that:\\ 
$\langle \M_2$,$\forall\rangle\models q_1$ but  $\langle \M_8$,$\forall\rangle\not\models q_1$\ while \
$\langle \M_8$,$\forall\rangle\models q_2$ but  $\langle \M_2$,$\forall\rangle\not\models q_2$. Similarly for $\M_4$\\
\end{enumerate}
\end{example}


\medskip

\paragraph{Proof of Figure  \ref{sch:maj-inf} (majority-based semantics)} The relation pictured in the figure is proved by the following propositions and examples. 

\begin{proposition}[Proof of Figure \ref{sch:maj-inf}, Part 1]
Let $\K_\M$=$\KB{\T}{\M=\{\A\}}$ be an inconsistent \kb{}. Let $\M_1$,...,$\M_8$ be the MBoxes obtained by applying the eight modifiers, given in Table \ref{tab:abox-modifs}, on $\M$. Let $q$ be a Boolean  query. Then: 
\begin{itemize}
\item $\KB{\T}{\M_1} \models_{maj} q$ iff $\KB{\T}{\M_5} \models_{maj} q$.
\item $\KB{\T}{\M_2} \models_{maj} q$ iff $\KB{\T}{\M_3} \models_{maj} q$.
\item If $\KB{\T}{\M_5} \models_{maj} q$ then $\KB{\T}{\M_7} \models_{maj} q$.
\end{itemize}
\end{proposition}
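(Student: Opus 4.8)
The plan is to dispatch the two ``iff'' items by reducing them to an already-proved lemma, and to treat the third (one-directional) item by a counting argument built on the structural comparison of $\M_5$ and $\M_7$ recorded in Figure \ref{sch:mbox-modif-relats}.

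First I would observe that items 1 and 2 are instances of a single fact about positive closure. By Table \ref{tab:abox-modifs} and items 5--6 of Proposition \ref{prop:cplex-modif-rels}, we have $\M_5=\cl{\M_1}$ and $\M_3=\cl{\M_2}$; that is, $\M_5$ is the positive closure of $\M_1$ and $\M_3$ is the positive closure of $\M_2$. Both $\M_1$ and $\M_2$ are consistent MBoxes, since $\inclf$ always yields a consistent MBox and $\lexf$ only discards elements, and taking the positive closure of a consistent MBox preserves consistency. Hence Lemma \ref{lem:relation-inf2}, item 2, applies verbatim to the pairs $(\M_1,\M_5)$ and $(\M_2,\M_3)$, giving $\KB{\T}{\M_1}\models_{maj} q$ iff $\KB{\T}{\M_5}\models_{maj} q$, and $\KB{\T}{\M_2}\models_{maj} q$ iff $\KB{\T}{\M_3}\models_{maj} q$. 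No further work is needed for these two items.

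The substance is item 3. The plan is to exploit the relation $\me\subseteq_R\mg$, which by item 8 of Proposition \ref{prop:cplex-modif-rels} holds for $\M_5$ and $\M_7$, together with the fact that the associated MBox mapping is \emph{bijective} (Fact 2). Write $\phi\colon\M_5\to\M_7$ for this bijection, so that $A\subseteq\phi(A)$ for every $A\in\M_5$; in particular $|\M_5|=|\M_7|$. The next step is monotonicity of entailment in the ABox: if $A\subseteq\phi(A)$ and $\KB{\T}{A}\models q$, then $\KB{\T}{\phi(A)}\models q$, since adding assertions can only add consequences. Consequently $\phi$ maps the $q$-entailing ABoxes of $\M_5$ injectively into the $q$-entailing ABoxes of $\M_7$, whence
$$\bigl|\{B\in\M_7 : \KB{\T}{B}\models q\}\bigr| \ \ge\ \bigl|\{A\in\M_5 : \KB{\T}{A}\models q\}\bigr|.$$
Dividing by the common cardinality $|\M_5|=|\M_7|$ shows that the majority ratio cannot decrease when passing from $\M_5$ to $\M_7$, so if it exceeds $1/2$ on $\M_5$ it also exceeds $1/2$ on $\M_7$, which is exactly $\KB{\T}{\M_5}\models_{maj} q \Rightarrow \KB{\T}{\M_7}\models_{maj} q$.

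The main obstacle, and the point I would scrutinise most, is the reliance on Fact 2 that the mapping from $\me$ to $\mg$ is bijective rather than merely injective: the majority test compares a numerator against a denominator, so it is essential that the denominators $|\M_5|$ and $|\M_7|$ coincide. Mere injectivity of $\phi$, giving only $|\M_5|\le|\M_7|$, would not suffice, since a larger denominator on the $\M_7$ side could in principle dilute the majority. I would therefore make sure the bijectivity claim is justified independently (each repair of $\A$ closing into the ``core'' of a unique repair of $\Cl(\A)$, and conversely) before invoking the ratio argument. Note finally that only the forward implication is claimed, consistent with $\me\to\mg$ being a genuine $\subseteq_R$ inclusion and not a $\subseteq_{cl}$ one: the enlarged repairs in $\M_7$ may entail strictly more, which blocks the converse.
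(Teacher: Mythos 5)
Your treatment of items 1 and 2 coincides with the paper's own proof: both reduce to item 2 of Lemma \ref{lem:relation-inf2} together with $\M_5=\cl{\M_1}$ and $\M_3=\cl{\M_2}$, so nothing to add there. On item 3, however, your route is genuinely different from the paper's, and your instinct about where the difficulty lies is exactly right. The paper's proof only establishes two containments --- every element of $\M_5$ is included in some element of $\M_7$, and (citing the proof of item 5 of Proposition \ref{prop:all-inf}) every element of $\M_7$ includes some element of $\M_5$ --- and then simply asserts the majority transfer; as you observe, two such containments put no constraint on $|\M_7|$ relative to $|\M_5|$ and cannot by themselves justify a statement about ratios. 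Your fix, invoking bijectivity (Fact 2) so that the denominators agree, is the only way a counting argument of this shape can close.

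Unfortunately, that is precisely the step that fails: Fact 2 is asserted in the paper without proof, and it is false. Consider $\T=\{S\isa D\}\cup\{D\sqcap X\sqsubseteq\bot,\ S\sqcap Y\sqsubseteq\bot,\ W_1\sqcap W_2\sqsubseteq\bot,\ W_1\sqcap X\sqsubseteq\bot,\ W_1\sqcap Y\sqsubseteq\bot,\ W_2\sqcap X\sqsubseteq\bot,\ W_2\sqcap Y\sqsubseteq\bot\}$ and $\A=\{S(a),X(a),Y(a),W_1(a),W_2(a)\}$. The repairs of $\A$ are $\{S(a),W_1(a)\}$, $\{S(a),W_2(a)\}$ and $\{X(a),Y(a)\}$, so $\M_5=\{\{S(a),D(a),W_1(a)\},\{S(a),D(a),W_2(a)\},\{X(a),Y(a)\}\}$ has three elements; but $\Cl(\A)=\{S(a),D(a),X(a),Y(a),W_1(a),W_2(a)\}$ has \emph{four} repairs, namely the three sets just listed plus the hybrid $\{D(a),Y(a)\}$, which keeps the derived atom $D(a)$ while its only support $S(a)$ is discarded. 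Hence $|\M_5|=3\neq4=|\M_7|$, so no bijection exists; moreover $\{D(a),Y(a)\}$ contains no element of $\M_5$, which simultaneously refutes the backward containment on which the paper's own proof rests. Worse, the same example shows that item 3 is not merely unproved but false, so no repair of either argument can succeed: take $q=S(a)$. Two of the three elements of $\M_5$ entail $q$, so $\KB{\T}{\M_5}\models_{maj}q$ ($2/3>1/2$); but in $\M_7$ only $\{S(a),D(a),W_1(a)\}$ and $\{S(a),D(a),W_2(a)\}$ entail $q$ (no axiom of $\T$ produces $S(a)$ from $\{X(a),Y(a)\}$ or from $\{D(a),Y(a)\}$), giving the ratio $2/4\not>1/2$, i.e.\ $\KB{\T}{\M_7}\not\models_{maj}q$. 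This is exactly the dilution scenario you worried about: the extra repair of the closure enlarges the denominator without enlarging the numerator, and this already happens for a \dllite$_R$ TBox. So your closing remark --- that the bijectivity claim must be justified independently before the ratio argument may be used --- pinpoints the fatal gap; carrying out that check shows that the claim, the paper's proof, and the proposition's third item all fail, while your handling of the two closure items remains correct.
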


\begin{proof}
The proof of items 1 and 2 follow immediately from the proof of item 2 of Lemma \ref{lem:relation-inf2}, since $\M_5=\cl{\M_1}$ and $\M_2=\cl{\M_3}$. For Item 3, we have $\forall \A_i \in \M_5, \exists \A_j \in \M_7$ such that $\A_i \subseteq \A_j$. From proof of item 5 of proposition \ref{prop:all-inf}, we have $\forall \A_j \in \M_7, \exists \A_i \in \M_5$ such that $\A_i \subseteq \A_j$. We conclude that if a majority-based conclusion holds from $\M_5$, it holds also from $\M_7$. The converse does not hold.  
\end{proof}

\begin{example}[Proof of Figure \ref{sch:maj-inf}, Part 2]
The following counter-examples show that no reciprocal edges hold in Figure \ref{sch:maj-inf}. 
\begin{enumerate}
\item There exists a  \kb{}, and a query $q$ such that $q$ is a majority-based conclusion of $\KB{\T}{\M_7}$, but $q$ is not a majority-based conclusion of $\KB{\T}{\M_5}$:\\ 
Let $\T$=$\{A\isn B$, $B\isa D\}$ and $\M$=$\{\{A(a), B(a)\}\}$.  \\ It is easy to check that $\KB{\T}{\M}$ is inconsistent.  We have: \\
$\cl{\M}$=$\{\{A(a),B(a),D(a)\}\}$,\\
$\M_7=\incl{\cl{\M}}$=$\{\{A(a),D(a)\},\{B(a),D(a)\}\}$,\\
$\M_1$=$\incl{\M}$=$\{\{A(a)\},\{B(a)\}\}$, and \\
$\M_5=\cl{\M_1}$=$\{\{A(a)\},\{B(a),D(a)\}\}$, \\
Let $q \gets D(a)$  be a query.  One can check that:\\ 
$\langle \M_7$,$ maj \rangle\models q$ but  $\langle \M_5$,$ maj \rangle\not\models q$
\end{enumerate}
\end{example}

\begin{proposition}[Proof of Figure \ref{sch:maj-inf}, Part 3]
Let $\{\circ_1$,...,$\circ_8\}$ be the eight modifiers given in Table \ref{tab:abox-modifs}. Let $q$ be a Boolean  query. Then: 
\begin{itemize}
\item There exists an MBox $\M$ consistent \wrt{} $\T$ such that the majority-based inference from $\KB{\T}{\circ_1(\M)}$ is incomparable with the one obtained from $\KB{\T}{\circ_2(\M)}$.
\item There exists an MBox $\M$ consistent \wrt{} $\T$ such that the majority-based inference from $\KB{\T}{\circ_3(\M)}$ is incomparable with the one obtained from $\KB{\T}{\circ_4(\M)}$. 
\item There exists an MBox $\M$ consistent \wrt{} $\T$ such that the majority-based inference from $\KB{\T}{\circ_5(\M)}$ is incomparable with the one obtained from $\KB{\T}{\circ_6(\M)}$. 
\item There exists an MBox $\M$ consistent \wrt{} $\T$ such that the majority-based inference from $\KB{\T}{\circ_7(\M)}$ is incomparable with the one obtained from $\KB{\T}{\circ_8(\M)}$. 
\end{itemize}
\end{proposition}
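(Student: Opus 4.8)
The plan is to prove each of the four incomparabilities by exhibiting a single explicit \dllite$_R$ knowledge base that witnesses it. First I would record the structural observation that ties the four bullets together: in every listed pair the second modifier is obtained from the first by one additional cardinality selection, i.e. $\M_2=\lex{\M_1}$, $\M_4=\lex{\M_3}$, $\M_6=\lex{\M_5}$ and $\M_8=\lex{\M_7}$, so in each case the second MBox is the sub-MBox of the first made of its maximal-cardinality elements. Consequently every incomparability is an instance of the non-monotonicity of $\models_{maj}$ under $\subseteq$ already isolated in Lemma \ref{lem:relation-inf}(3): discarding ABoxes changes the denominator $|\M|$ of the majority ratio, so a conclusion can both gain and lose majority status. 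The whole point is to realize this phenomenon \emph{inside} the modifier framework, not with arbitrary MBoxes.

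The construction template I would use is the following. I design an inconsistent ABox whose relevant pre-selection MBox (the one before the last $\lexf$) consists of a few \emph{large} elements, all entailing a query $q$ and all of strictly maximal cardinality, together with strictly more \emph{small} elements that form a strict majority of the whole, all entailing a second query $q'$ that the large elements do not entail. Then $q'$ is a strict majority before selection but has no support after it, while $q$ is a minority before selection yet the unique survivor (hence a strict majority) after it, so the pair $(q,q')$ witnesses incomparability. For the first bullet this works without closure: take $\T=\{P\isa \neg U,\ P\isa \neg V,\ Q\isa \neg U,\ Q\isa \neg V,\ U\isa \neg V,\ U\isa W,\ V\isa W\}$ and $\A=\{P(a),Q(a),U(a),V(a)\}$. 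The repairs are $\{P(a),Q(a)\}$, $\{U(a)\}$ and $\{V(a)\}$, so $\M_1$ has three elements, $q'\gets W(a)$ holds in two of them and $q\gets P(a)$ in one; selecting maximal cardinality leaves $\M_2=\{\{P(a),Q(a)\}\}$, which entails $q$ but not $q'$. Hence $\M_1\models_{maj}q'$, $\M_2\not\models_{maj}q'$, $\M_2\models_{maj}q$ and $\M_1\not\models_{maj}q$.

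For the bullets involving closure I would keep the same skeleton but let the size gap be produced (or preserved) by the positive axioms. For $\circ_3/\circ_4$ and $\circ_5/\circ_6$ it is convenient to start from an ABox whose repairs are pairwise-disjoint singletons, since then $\lex{\M_1}=\M_1$ forces $\M_3=\M_5$ and $\M_4=\M_6$, collapsing the two cases to one example: with $\A=\{P(a),U(a),V(a)\}$, the atoms $P,U,V$ pairwise disjoint, a chain $P\isa X_1,\ X_1\isa X_2$ on the $q$-side and $U\isa W,\ V\isa W$ on the $q'$-side, the closures are $\{P(a),X_1(a),X_2(a)\}$ (size 3) and $\{U(a),W(a)\}$, $\{V(a),W(a)\}$ (size 2), reproducing the flip for $q\gets X_2(a)$ and $q'\gets W(a)$. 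The hard part is the last bullet $\circ_7/\circ_8$, because there closure is applied \emph{before} splitting, so any closure-derived atom that conflicts with nothing floats into every repair and equalizes their sizes, destroying the gap. I would neutralize this by forcing every closure-derived atom on the $q'$-side to conflict with the whole $q$-block: with $\A=\{A(a),B(a),C(a),D(a),E(a)\}$, the block $A,B,C$ mutually consistent, $D$ and $E$ each disjoint from $A,B,C$ and from one another, $D\isa W,\ E\isa W$, and crucially $A\isa\neg W,\ B\isa\neg W,\ C\isa\neg W$, the repairs of $\Cl(\A)$ are exactly $\{A(a),B(a),C(a)\}$ (size 3) and $\{D(a),W(a)\}$, $\{E(a),W(a)\}$ (size 2), so $q\gets A(a)$ and $q'\gets W(a)$ again witness incomparability. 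The only remaining obstacle is the routine but error-prone step of verifying, in each example, that the listed sets really are \emph{all} the repairs, in particular that no spurious maximal consistent subset mixes a block member with a floating atom.
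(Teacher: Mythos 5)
Your proof is correct and uses the same overall strategy as the paper: for each of the four pairs, exhibit an explicit KB in which, just before the final cardinality selection, a strict majority of small elements entails one query $q'$ while a single strictly larger element entails another query $q$, so that the selection flips which of $q$, $q'$ has majority status; the two queries then witness incomparability in both directions. Beyond this shared template, two differences are worth recording. First, you handle the second and third bullets with one example, by arranging pairwise-disjoint singleton repairs so that $\circ_3(\M)=\circ_5(\M)$ and $\circ_4(\M)=\circ_6(\M)$; the paper gives two separate, nearly identical examples. Second, and more importantly, all of your witnesses are valid, whereas the paper's own examples for the first and third bullets are miscomputed: in its first example (with $\T=\{B\sqsubseteq\neg C,\ B\sqsubseteq A,\ C\sqsubseteq A,\ A\sqsubseteq\neg D,\ D\sqsubseteq E,\ E\sqsubseteq F\}$ and $\A=\{A(a),B(a),C(a),D(a),E(a),F(a)\}$) the atoms $E(a)$ and $F(a)$ conflict with nothing and hence belong to \emph{every} repair, so the repairs are $\{A(a),B(a),E(a),F(a)\}$, $\{A(a),C(a),E(a),F(a)\}$ and $\{D(a),E(a),F(a)\}$; with these, the majority consequences of $\circ_1(\M)$ and $\circ_2(\M)$ coincide (any query entailed by two repairs must use only the predicates $E,F$ or be entailed by both large repairs), so that KB witnesses nothing. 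The same omission of the unconflicted atom $E(a)$ invalidates the paper's third example, where in fact $\circ_5(\M)=\circ_6(\M)$. Your block-based design, in which every auxiliary or closure-derived atom is deliberately put in conflict with the opposing block, avoids exactly this pitfall, and the final verification you flag as the error-prone step (that no maximal consistent subset mixes a block member with a floating atom) is precisely the check the paper's faulty examples fail.
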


\begin{example} The following examples show the incomparabilities stated in the previous proposition. 
\begin{enumerate}

\item The majority-based inference from $\KB{\T}{\M_1}$ is incomparable with the one obtained from $\KB{\T}{\M_2}$. \\
Let $\T$=$\{B\isn C$, $B\isa A$, $C\isa A$, $A\isn D$, $D\isa E$, $E\isa F\}$ and $\M$=$\{\{A(a),B(a),C(a),D(a),E(a),F(a)\}\}$.  \\ It is easy to check that $\KB{\T}{\M}$ is inconsistent.  We have: \\
$\M_1=\incl{\M}$=$\{\{A(a),C(a)\},\{A(a),B(a)\}, \{D(a),E(a),F(a)\}\}$, and \\
$\M_2=\lex{\M_1}$=$\{\{D(a),E(a),F(a)\}\}$ \\
Let $q_1 \gets D(a)$ and $q_2\gets A(a)$ be two queries.  One can check that:\\ 
$\langle \M_1$,$ maj \rangle\models q_2$ but  $\langle \M_2$,$ maj \rangle\not\models q_2$\ while \
$\langle \M_2$,$ maj \rangle\models q_1$ but  $\langle \M_1$,$ maj \rangle\not\models q_1$.

\item The majority-based inference from $\KB{\T}{\M_3}$ is incomparable with the one obtained from $\KB{\T}{\M_4}$. \\
Let $\T$=$\{B\isn C$, $B\isa A$, $C\isa A$, $A\isn D$, $F\isa D$, $D\isa E\}$ and $\M$=$\{\{A(a),B(a),C(a),F(a),D(a)\}\}$.  \\ It is easy to check that $\KB{\T}{\M}$ is inconsistent.  We have: \\
$\M_1=\M_2$=$\{\{A(a),C(a)\},\{A(a),B(a)\}, \{D(a),F(a)\}\}$, \\
$\M_3$=$\{\{A(a),C(a)\},\{A(a),B(a)\}, \{D(a),F(a),E(a)\}\}$, \\
$\M_4$=$\{\{D(a),E(a),F(a)\}\}$ \\
Let $q_1 \gets D(a)$ and $q_2\gets A(a)$ be two queries.  One can check that:\\ 
$\langle \M_3$,$ maj \rangle\models q_2$ but  $\langle \M_4$,$ maj \rangle\not\models q_2$\ while \
$\langle \M_4$,$ maj \rangle\models q_1$ but  $\langle \M_3$,$ maj \rangle\not\models q_1$.

\item The majority-based inference from $\KB{\T}{\M_5}$ is incomparable with the one obtained from $\KB{\T}{\M_6}$. \\
Let $\T$=$\{B\isn C$, $B\isa A$, $C\isa A, A\isn D, F\isa D, D\isa E\}$ and $\M$=$\{\{A(a),B(a),C(a),F(a),D(a),E(a)\}\}$.  \\ It is easy to check that $\KB{\T}{\M}$ is inconsistent.  We have: \\
$\M_1=\M_5$=$\{\{A(a),C(a)\},\{A(a),B(a)\}, \{D(a),F(a),E(a)\}\}$, \\
$\M_6$=$\{\{D(a),E(a),F(a)\}\}$ \\
Let $q_1 \gets D(a)$ and $q_2\gets A(a)$ be two queries.  One can check that:\\ 
$\langle \M_5$,$ maj \rangle\models q_2$ but  $\langle \M_6$,$ maj \rangle\not\models q_2$\ while \
$\langle \M_6$,$ maj \rangle\models q_1$ but  $\langle \M_5$,$ maj \rangle\not\models q_1$.

\item The majority-based inference from $\KB{\T}{\M_7}$ is incomparable with the one obtained from $\KB{\T}{\M_8}$. \\
Let $\T$=$\{B\isn C$, $B\isa A$, $C\isa A, A\isn D, F\isa D, E\isa D\}$ and $\M$=$\{\{A(a),F(a),E(a),B(a),C(a)\}\}$.  \\ It is easy to check that $\KB{\T}{\M}$ is inconsistent.  We have: \\
$\cl{\M}$=$\{\{A(a),C(a),B(a),D(a),F(a),E(a)\}\}$, \\
$\M_7$=$\{\{D(a),E(a),F(a)\},\{A(a),B(a)\}, \{A(a),C(a)\}\}$,and \\
$\M_8$=$\{\{D(a),E(a),F(a)\}\}$,and \\
Let $q_1 \gets D(a)$ and $q_2\gets A(a)$ be two queries.  One can check that:\\ 
$\langle \M_7$,$ maj \rangle\models q_2$ but  $\langle \M_8$,$ maj \rangle\not\models q_2$\ while \
$\langle \M_8$,$ maj \rangle\models q_1$ but  $\langle \M_7$,$ maj \rangle\not\models q_1$.
\end{enumerate}
\end{example}

\medskip

\paragraph{Proof of Figure  \ref{sch:exist-inf} (existential semantics)} The relation pictured in the figure is proved by the following propositions and examples. 

\begin{proposition}[Proof of Figure \ref{sch:exist-inf}, Part 1]
Let $\K_\M$=$\KB{\T}{\M=\{\A\}}$ be an inconsistent \dllite{} \kb{}. Let $\M_1$,...,$\M_8$ be the eight MBoxes given in Figure \ref{sch:comp-modif} and in Table \ref{tab:abox-modifs}. Let $q$ be a Boolean query. Then: 
\begin{enumerate}

\item if $q$ is an existential conclusion of $\KB{\T}{\M_4}$ then $q$ is an existential conclusion of $\KB{\T}{\M_3}$.

\item $q$ is an existential conclusion of $\KB{\T}{\M_3}$ iff $q$ is an existential conclusion of $\KB{\T}{\M_2}$.  

\item if $q$ is an existential conclusion of $\KB{\T}{\M_2}$ then $q$ is an existential conclusion of $\KB{\T}{\M_1}$. 

\item $q$ is an existential conclusion of $\KB{\T}{\M_1}$ iff $q$ is an existential conclusion of $\KB{\T}{\M_5}$. 

\item if $q$ is an existential conclusion of $\KB{\T}{\M_2}$ then $q$ is an existential conclusion of $\KB{\T}{\M_5}$.

\item if $q$ is an existential conclusion of $\KB{\T}{\M_6}$ the $q$ is an existential conclusion of $\KB{\T}{\M_5}$. 

\item if $q$ is an existential conclusion of $\KB{\T}{\M_8}$ the $q$ is an existential conclusion of $\KB{\T}{\M_7}$. 

\item if $q$ is an existential conclusion of $\KB{\T}{\M_5}$ the $q$ is an existential conclusion of $\KB{\T}{\M_7}$. 

\end{enumerate}
\end{proposition}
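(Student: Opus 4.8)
The plan is to reduce every item to one of the two inference-transfer lemmas (Lemma~\ref{lem:relation-inf} and Lemma~\ref{lem:relation-inf2}) together with the structural relations between the MBoxes $\M_1,\ldots,\M_8$ established in Proposition~\ref{prop:cplex-modif-rels}, invoking in each case the \emph{existential} part (item~4) of the relevant lemma. First I would dispatch the items that rest on a genuine set inclusion of MBoxes. By Proposition~\ref{prop:cplex-modif-rels} we have $\M_4\subseteq\M_3$, $\M_2\subseteq\M_1$, $\M_6\subseteq\M_5$ and $\M_8\subseteq\M_7$; applying item~4 of Lemma~\ref{lem:relation-inf} (monotonicity of $\models_{\exists}$ under MBox inclusion) to each of these immediately yields items~1, 3, 6 and~7.

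Next I would handle the two equivalences that come from a positive-closure relation. Proposition~\ref{prop:cplex-modif-rels} gives $\M_3=\cl{\M_2}$ and $\M_5=\cl{\M_1}$, so item~4 of Lemma~\ref{lem:relation-inf2} (which states that $\models_{\exists}$ is invariant under taking the positive closure of an MBox) yields directly the ``iff'' statements of items~2 and~4. Item~5 then follows by composition: by item~2, $\KB{\T}{\M_2}\models_{\exists}q$ implies $\KB{\T}{\M_3}\models_{\exists}q$, and since $\M_3\subseteq\M_5$ (Proposition~\ref{prop:cplex-modif-rels}, item~7), item~4 of Lemma~\ref{lem:relation-inf} gives $\KB{\T}{\M_5}\models_{\exists}q$.

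The only item that does not reduce to a plain inclusion or a closure is item~8, and this is the step I expect to require the most care, because the link between $\M_5$ and $\M_7$ is only the weaker relation $\M_5\subseteq_R\M_7$ rather than $\M_5\subseteq\M_7$. Here I would argue directly. Suppose $\KB{\T}{\M_5}\models_{\exists}q$; by definition there is some $A\in\M_5$ with $\KB{\T}{A}\models q$. Since $\M_5\subseteq_R\M_7$, there is a $B\in\M_7$ with $A\subseteq B$, and by monotonicity of classical entailment $\KB{\T}{A}\models q$ together with $A\subseteq B$ give $\KB{\T}{B}\models q$; as $B\in\M_7$ this witnesses $\KB{\T}{\M_7}\models_{\exists}q$. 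This is precisely why the existential strategy behaves well under $\subseteq_R$: it only needs a single witnessing ABox, and enlarging that ABox cannot destroy a consequence. I would note that the same monotonicity observation gives an alternative, self-contained proof of the inclusion-based items~1, 3, 6 and~7, should one prefer not to cite Lemma~\ref{lem:relation-inf}.
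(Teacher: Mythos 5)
Your proof is correct and follows essentially the same route as the paper: items 1, 3, 6, 7 from MBox inclusion plus item 4 of Lemma~\ref{lem:relation-inf}, items 2 and 4 from the closure relations $\M_3=\cl{\M_2}$, $\M_5=\cl{\M_1}$ plus item 4 of Lemma~\ref{lem:relation-inf2}, and items 5 and 8 from the weaker relations between the modifiers. The only difference is that you spell out item 8 via monotonicity of entailment under $\subseteq_R$ (and item 5 by composing item 2 with $\M_3\subseteq\M_5$), where the paper merely cites the relations $\circ_2\subseteq_{cl}\circ_5$ and $\circ_5\subseteq_{cl}\circ_7$; your explicit witness-enlargement argument is in fact the more precise one, since the relation holding between $\M_5$ and $\M_7$ is only $\subseteq_R$, not $\subseteq_{cl}$.
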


\begin{proof}
 Items 1,3, 6, and 7 follow from item 4 of Lemma \ref{lem:relation-inf}.  Items 2 and 4  follow from Item 4 of Lemma \ref{lem:relation-inf2}. Items 5 and 8  hold due the fact that $\circ_2 \subseteq_{cl} \circ_5$ and $\circ_5 \subseteq_{cl} \circ_7$.  
\end{proof}

\begin{example}[Proof of Figure \ref{sch:exist-inf}, Part 2] The following examples show that the reciprocal edges do not hold in  Figure \ref{sch:exist-inf}. We do not  include the examples that prove that all incomparabilities hold, since they are similar.

\begin{enumerate}
\item There exists an existential conclusion of $\KB{\T}{\M_4}$ which is not an existential conclusion of $\KB{\T}{\M_3}$:\\
Let us consider $\T$=$\{A\isa B$,$B\isa C$, $C \isn D$, $D \isa F\}$ and $\M=\{A(a),D(a)\}$. We have:  \\
$\M_1=\M_2=\{\{A(a)\}, \{D(a)\}\}$, \\
$\M_3=\{\{A(a),B(a),C(a)\}, \{D(a),F(a)\}\}$, and \\
$\M_4=\{\{A(a), B(a),D(a)\}\}$. \\
Let $q \gets D(a) \wedge F(a)$ be a Boolean  query.  We have: \\ 
<$\M_3,\exists$>$\models q$, since $\KB{\T}{\{D(a),F(a)\}}\models q$. However <$\M_4,\exists$>$\not\models q$.\\

\item There exists an existential conclusion of $\KB{\T}{\M_1}$ which is not an existential conclusion of $\KB{\T}{\M_2}$:\\ 
Let us consider $\T$=$\{A\sqsubseteq B$,$B\isn C$, $C\isa D\}$ and $\M$=$\{A(a),B(a),C(a),D(a)\}$. We have: \\
$\M_1=\{\{A(a),B(a),D(a)\},\{C(a),D(a)\}\}$, and \\
$\M_2=\{\{A(a),B(a),D(a)\}\}$. \\
Let $q \gets C(a) \wedge D(a)$ be a Boolean  query.  One can easily check that <$\M_1,\exists$>$\models q $ but <$\M_2,\exists$>$\not\models q$.\\

\item There exists an existential conclusion of $\KB{\T}{\M_5}$ which is not an existential conclusion of $\KB{\T}{\M_2}$: \\
Let us consider $\T$=$\{A\isa B$, $B\isa C$, $C\isn D$, $D\isa F\}$ and $\M=\{A(a),B(a),D(a)\}$. We have:\\ 
$\M_1=\{\{A(a),B(a)\},\{D(a)\}\}$, \\
$\M_5=\{\{A(a),B(a),C(a)\},\{D(a),F(a)\}\}$, and \\
$\M_2=\{\{A(a),B(a)\}\}$. \\
Let $q \gets D(a) \wedge F(a)$ be a Boolean  query. One can deduce that : \\
<$\M_5$,$\exists$> $\models q$, but \\ 
<$\M_2$,$\exists$> $\not\models q$. 

\item There exists an existential conclusion of $\KB{\T}{\M_5}$ which is not an existential conclusion of $\KB{\T}{\M_6}$: \\
Let us consider $\T$=$\{A\isa B$, $B\isa C$, $C\isn D$, $D\isa F\}$ and $\M=\{A(a),D(a)\}$.  We have: \\
$\M_1=\{\{A(a)\},\{D(a)\}\}$, \\
$\M_5=\{\{A(a),B(a),C(a)\},\{D(a),F(a)\}\}$, and \\
$\M_6=\{\{A(a),B(a),C(a)\}\}$.\\
Let $q \gets D(a) \wedge F(a)$ be a Boolean  query. One can check  that:\\ 
<$\M_5$,$\exists$> $\models q$, but\\
<$\M_6$,$\exists$> $\not\models q$. 

\item There exists an existential conclusion of $\KB{\T}{\M_7}$ which is not an existential conclusion of $\KB{\T}{\M_8}$: \\
Let us consider $\T$=$\{A\isa B$, $B\isn C$, $C\isa D$, $D\isa F\}$ and $\M=\{A(a),C(a)\}$.  We have: \\
$\cl{\M}=\{A(a),C(a),B(a),D(a),F(a)\}$, \\
$\M_7=\{\{A(a),B(a),D(a),F(a)\},\{C(a),D(a),F(a)\}\}$, and \\
$\M_8=\{\{A(a),B(a),D(a),F(a)\}\}$.\\
Let $q \gets C(a) \wedge D(a)$ be a Boolean  query. One can check  that:\\ 
<$\M_7$,$\exists$> $\models q$, but\\
<$\M_8$,$\exists$> $\not\models q$. 
\end{enumerate}
\end{example}


\paragraph{Theorem  \ref{th:prodsem} [Productivity of semantics] } 
\emph{The inclusion relation $\sqsubseteq$ is the smallest relation that contains
the inclusions $\langle \circ_i , s_k \rangle \sqsubseteq \langle \circ_j , s_k\rangle$  defined by Propositions   \ref{prop:safeprod}-\ref{prop:existprod} and satisfying the two following conditions:
\begin{enumerate}
\item for all $s_j$, $s_p$ and $o_i$, if  $s_j \leq s_p$ then $\langle \circ_i , s_j \rangle \sqsubseteq \langle \circ_i, s_p \rangle$.
\item it is transitive.
\end{enumerate}}

\begin{proof}
The first point follows from the definition of $\leq$. Let $\K=\KB{\T}{\A}$ and $\K\infer{i}{s_j} q$. This means that $\KB{\T}{\circ_i(\A)}\models_{s_{j}} q$. Since $s_j \leq s_p$, we have  $\KB{\T}{\circ_i(\A)}\models_{s_{p}} q$, hence $\K\infer{i}{s_p} q$. The transitivity of $\sqsubseteq$ follows from its definition. Indeed, consider three semantics $S_1, S_2, S_3$ such that $S_1\sqsubseteq S_2 \sqsubseteq S_3$, then $\forall k, \forall q$, if $\K\models_{S_1} q$ then $\K\models_{S_2} q$ and $\K\models_{S_3} q$. Hence $S_1\sqsubseteq S_3$. 
The following Lemmas \ref{lem:item1th2} and \ref{lem:item2th2} show that there are no other inclusions: the first lemma states that a semantics cannot be included into another semantics with a strictly more cautious inference strategy; the second lemma states that any inclusion from a semantics to another with a strictly less cautious inference strategy can only be obtained by transitivity using an edge ``internal'' to the latter inference strategy. 
\end{proof}

\begin{lemma}
\label{lem:item1th2}
For all $\infr{\circ_i}{s_j}$ and $\infr{\circ_k}{s_p}$,  if $s_p < s_j$ then  $\infr{\circ_i}{s_j} \not \sqsubseteq \infr{\circ_k}{s_p}$;
\end{lemma}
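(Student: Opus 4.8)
The plan is to unfold $\not\sqsubseteq$ directly: for every pair of modifiers $\circ_i,\circ_k$ and every pair of strategies with $s_p<s_j$, I would exhibit a DL-Lite$_R$ KB $\K=\KB{\T}{\A}$ and a Boolean query $q$ such that $\K \infer{i}{s_j} q$ but $\K \not\infer{k}{s_p} q$. Since by Proposition \ref{prop:cons-case} every semantics collapses to standard entailment on a consistent KB, the witness must be genuinely inconsistent; and since the four strategies form the chain $\cap<\forall<maj<\exists$ (Proposition \ref{prop:inf-relations}), the content of the lemma is that moving the inference strategy strictly down the chain on the right-hand side can never be compensated, whatever modifiers sit on the two sides.

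The construction template I would use is the following. To force the left-hand entailment, one convenient tactic (available in many cases) is to choose $q$ so that it is already a $\infr{\circ_i}{\cap}$-consequence; then by strategy monotonicity ($\cap\le s_j$, Proposition \ref{prop:inf-relations}, lifted to semantics by condition 1 of Theorem \ref{th:prodsem}) it is automatically an $\infr{\circ_i}{s_j}$-consequence, so only $\circ_i(\M)$ and its intersection need be understood. To force the right-hand non-entailment I would realize, inside $\circ_k(\M)$, the canonical phenomenon witnessing that $s_p$ is strictly weaker than its successor in the chain: a shared-but-not-intersected derivation for the gap $\cap<\forall$, a two-out-of-three majority for $\forall<maj$, and a single supporting ABox among three for $maj<\exists$ (exactly the behaviours isolated by the counterexamples following Proposition \ref{prop:inf-relations} and by Lemma \ref{lem:relation-inf}). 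Concretely I would generate the repairs with small mutual-exclusion gadgets (a triangle $X\sqsubseteq\neg Y$, $Y\sqsubseteq\neg Z$, $X\sqsubseteq\neg Z$ yields three repairs) and attach positive axioms only where the $\cap<\forall$ gap requires a derived, non-shared conclusion.

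The delicate point, and the step I expect to be the main obstacle, is that the two elementary modifiers $\lexf$ and $\clf$ interfere with these counting arguments, and do so differently depending on the order in which they appear in $\circ_i$ and $\circ_k$. Cardinality selection can delete exactly the ABox carrying $q$, or retain only the $q$-supporting ABoxes and thereby turn a minority into a majority; and closure-before-splitting (the modifiers $\mg=\circ_7$ and $\mh=\circ_8$) propagates every derived atom into all repairs, so under those modifiers a fact can be neither a strict minority nor a non-safe consequence. To neutralize $\lexf$ I would pad every intended repair with fresh, conflict-free filler assertions so that all raw repairs, and all closed repairs, have equal cardinality; this makes every $\lexf$ step retain precisely the ABoxes I intend, for $\circ_i$ and $\circ_k$ simultaneously. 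To handle the $\clf$-ordering I would split the verification according to the relative position of $\clf$ and $\inclf$ in $\circ_k$: when closure precedes splitting, $q$ must be chosen among assertions not created by closure (and whose exclusions survive closure), whereas when splitting precedes closure the padding above suffices.

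Finally, the bookkeeping in each case reduces to the structural facts already available: Lemma \ref{lem:relation-inf} (behaviour under MBox inclusion) and Lemma \ref{lem:relation-inf2} (invariance of $\forall$, $maj$ and $\exists$ under positive closure, and one-way growth of $\cap$), together with the same-strategy classifications of Propositions \ref{prop:safeprod}--\ref{prop:existprod}, which tell me precisely when a purely $s_p$-level witness exists (so that the $\langle\circ_i,\cap\rangle$ tactic applies) and when I must instead exploit the strict jump from $s_p$ to its successor. Verifying that the chosen $q$ indeed fails under $\infr{\circ_k}{s_p}$ while holding under $\infr{\circ_i}{s_j}$ is then a routine, if case-heavy, computation.
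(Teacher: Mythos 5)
Your route is genuinely different from the paper's: the paper avoids the case analysis over modifier pairs entirely by exhibiting a single generic KB whose TBox contains only negative constraints and whose repairs all have the same cardinality, so that $\circ_1(\A)=\circ_2(\A)=\cdots=\circ_8(\A)$; on that one MBox it then separates the four strategies. However, the more serious issue is that your construction template has a gap it cannot close, precisely at the $\cap<\forall$ separation. Your witness for that gap is a ``shared-but-not-intersected derivation'', i.e.\ an assertion entailed by every element of $\circ_k(\M)$ but missing from the intersection. This can only work when $\circ_k\in\{\circ_1,\circ_2\}$. For every other modifier ($\circ_3,\ldots,\circ_8$) the elements of the resulting MBox are \emph{closed} sets of assertions (item (2.a) of Lemma~\ref{lem:equi-modif}, and the argument in its proof for $\inclf\circ\clf$), so any ground atom entailed by every element belongs to every element, hence to the intersection. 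Consequently, in the diagonal cases the lemma forces you to handle --- e.g.\ $\circ_i=\circ_k=\circ_5$ (ICR-style) or $\circ_i=\circ_k=\circ_7$ (CAR vs.\ ICAR) with $s_j=\forall$, $s_p=\cap$ --- no assertional query whatsoever can witness $\infr{\circ_k}{\forall}\not\sqsubseteq\infr{\circ_k}{\cap}$, and neither of your two tactics (the $\infr{\circ_i}{\cap}$ trick is unavailable on the diagonal by definition, and the counting gadgets produce only ground witnesses) applies. The paper's closing remark that restricting queries to ground atoms makes \emph{additional inclusions hold} is exactly this phenomenon: with your class of witnesses, some of the non-inclusions you must prove are simply false.

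The missing idea is to use Boolean conjunctive queries whose existential variables cannot be traced back to the intersection: the paper adds positive rules $p_a(X,Y)\rightarrow p(X,Z),\ \ldots,\ p_e(X,Y)\rightarrow p(X,Z)$, whose heads contain an existential variable. Since $\Cl$ only adds atoms over terms already occurring in the ABox, these rules change neither the repairs nor the closures (so the collapse $\circ_1(\A)=\cdots=\circ_8(\A)$ is preserved), yet every repair entails the query $\exists X\exists Y\, p(X,Y)$ while the empty intersection does not. Some such non-ground device (fresh existential witnesses, or at least existential queries instantiated by different constants in different repairs) is indispensable for the $\forall$/$\cap$ gap whenever closure appears in $\circ_k$; without it, your case-by-case plan cannot be completed, however carefully the padding against $\lexf$ and the placement of $q$ relative to $\clf$ are arranged.
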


\begin{proof}
The prove this lemma, we consider the following example. Let $\K$=$\KB{\T}{\A}$ with \\
$\A$=$\{p_a(f, a)$,$p_b(f, b)$, $p_c(f, c)$, $p_d(f, d)$, $p_e(f, e)\}$ and \\ $\T$=$\{p_a(Z, X)$,$p_b(Z, Y)$$\rightarrow$$\bot$; $p_b(Z, X),p_c(Z,
Y)$$\rightarrow$$\bot$; $p_b(Z, X),p_d(Z, Y)$$\rightarrow$$\bot$; $p_c(Z, X),p_d(Z,
Y)$$\rightarrow$$\bot$, $p_a(Z, X),p_e(Z, Y)$$\rightarrow$$\bot\}$. We have
$\circ_1(\A)$
contains $\{p_a(f, a)$, $p_c(f, c)\}$, $\{p_a(f, a)$,
$p_d(f, d)\}$, $\{p_b(f, b)$, $p_e(f, e)\}$, $\{p_e(f, e)$, $p_d(f, d)\}$ and
$\{p_e(f, e)$, $p_c(f, c)\}$. Since $\T$ contains only negative
constraints and all ABoxes in $\circ_1(\A)$ have the same size,  then we have $\circ_{1}(\A)=
\circ_{2}(\A)=\circ_{3}(\A) = \ldots = \circ_{8}(\A)$.

One can check that $\circ_1(\A) \models_\exists p_a(f, a)$, but $\circ_1(\A)
\not\models_X p_a(f, a)$, for $X \in \{maj, \forall, \cap\}$, thus there is
no $\infr{\circ_i}{\exists} \sqsubseteq \infr{\circ_k}{s_p}$ for $s_p \in \{maj, \forall, \cap\}$. Similarly, we have $\circ_{A} \models_{maj} p_e(f, e)$, but $\circ_1(\A) \not\models_X p_e(f, e)$, for $X \in \{\forall, \cap\}$, thus there is no $\infr{\circ_i}{maj} \sqsubseteq \infr{\circ_k}{s_p}$ for $s_p \in \{\forall, \cap\}$.

Finally, by adding to the previous examples the five following rules:
$p_a(X, Y) \rightarrow p(X,Z)$, \dots, $p_e(X, Y) \rightarrow p(X, Z)$ (which  produce non-ground
atoms), we do not change the repairs, hence we still have the property $\circ_{1}(\A)=
\circ_{2}(\A)=\circ_{3}(\A) = \ldots = \circ_{8}(\A)$. Furthermore,  we
have $\circ_1(A) \models_\forall \exists X \exists Y p(X, Y)$, but
$\circ_1(A) \not\models_\cap \exists X \exists Y p(X, Y)$, thus there is
no $\infr{\circ_i}{\forall} \sqsubseteq \infr{\circ_k}{\cap}$.
\end{proof}

\begin{lemma}
\label{lem:item2th2}
For all $\infr{\circ_i}{s_j}$ and $\infr{\circ_k}{s_p}$,  if $\infr{\circ_i}{s_j} \sqsubseteq \infr{\circ_k}{s_p}$ and $s_j < s_p$, then $\infr{\circ_i}{s_p} \sqsubseteq \infr{\circ_k}{s_p}$.
\end{lemma}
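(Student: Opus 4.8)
The plan is to prove the contrapositive: assuming $\infr{\circ_i}{s_p} \not\sqsubseteq \infr{\circ_k}{s_p}$, I would produce a knowledge base and a query witnessing $\infr{\circ_i}{s_j} \not\sqsubseteq \infr{\circ_k}{s_p}$, contradicting the hypothesis. Since there are only four strategies and $s_j < s_p$, the argument splits into the finitely many pairs $(s_j,s_p)\in\{(\cap,\forall),(\cap,maj),(\cap,\exists),(\forall,maj),(\forall,\exists),(maj,\exists)\}$. For each I would lean on the exact single-strategy characterizations of Propositions \ref{prop:safeprod}--\ref{prop:existprod}, which express $\langle \circ_i, s\rangle \sqsubseteq \langle \circ_k, s\rangle$ purely as a modifier relation ($\circ_k \subseteq \circ_i$, $\circ_i \subseteq_R \circ_k$ bijectively, or $\circ_k \subseteq_{cl} \circ_i$). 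Thus the assumed failure at $s_p$ translates into the \emph{negation} of a concrete condition linking $\circ_i(\M)$ and $\circ_k(\M)$, which I can then exploit.

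The key steps are: (i) read off, from the failure of $\infr{\circ_i}{s_p}\sqsubseteq\infr{\circ_k}{s_p}$, the negated modifier condition for the strategy $s_p$; (ii) use the transfer lemmas \ref{lem:relation-inf} and \ref{lem:relation-inf2} -- which describe how $\models_{\forall}$, $\models_{\cap}$, $\models_{maj}$ and $\models_{\exists}$ behave when one MBox is included in, or is the positive closure of, another -- to choose a witness query $q$ and a KB so that $q$ is not an $s_p$-consequence of $\circ_k(\M)$; and (iii) arrange that the \emph{same} $q$ is already an $s_j$-consequence of $\circ_i(\M)$, not merely an $s_p$-consequence. Step (iii) is the crux: a witness produced directly from the failure at $s_p$ only guarantees $\circ_i(\M)\models_{s_p}q$, whereas refuting the cross-inclusion requires the stronger $\circ_i(\M)\models_{s_j}q$ with the \emph{more cautious} $s_j$. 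I would try to meet this by choosing the witness so that $q$ is a safe (hence, by the chain $\cap \leq \forall \leq maj \leq \exists$, universal and majority) consequence of $\circ_i(\M)$, so that it survives the strengthening from $s_p$ down to $s_j$.

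The main obstacle is precisely this strengthening, and it is sharpest for the majority strategy. Because a universal consequence is automatically a majority consequence (Proposition \ref{prop:inf-relations}) and $\lexf$ only discards ABoxes, an inclusion whose right component uses $maj$ or $\exists$ can hold cheaply, in a way that is not reflected at the more cautious $s_j$; correspondingly, a witness refuting the $maj$-level inclusion may place the decisive atom in a strict majority of the left MBox's ABoxes without placing it in the intersection (or in every ABox) needed to obtain an $s_j$-consequence. Controlling this asymmetry -- forcing the witnessing atom to be simultaneously robust enough to be an $s_j$-consequence on the left and yet absent on the right -- is where I expect the proof to be tight, and I would test it on the same small \dllite{} families (chains of concept inclusions crossed by a single disjointness axiom) used for the counterexamples of Propositions \ref{prop:safeprod}--\ref{prop:existprod}. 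If step (iii) cannot be secured for a given strategy pair, that would signal that the cross-inclusion in fact factors through an internal edge at the \emph{lower} strategy $s_j$ followed by a vertical step, which is the alternative factorization I would fall back on to still close the proof of Theorem \ref{th:prodsem}.
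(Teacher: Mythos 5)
You have put your finger on the right spot in step (iii), but the situation is worse than a technical difficulty to be engineered around: for the majority strategy the witnesses you need provably do not exist, because Lemma~\ref{lem:item2th2} is false as stated, so neither your primary plan nor any other can close it. Take $\circ_i=\circ_1$ ($\ma$) and $\circ_k=\circ_2$ ($\mb$), with $s_j=\forall$ (or $\cap$) and $s_p=maj$. The hypothesis $\infr{\circ_1}{\forall} \sqsubseteq \infr{\circ_2}{maj}$ holds for \emph{every} KB and query: each element of $\lex{\incl{\{\A\}}}$ is itself a repair, so a query entailed by all repairs (or, using monotonicity of entailment, by their intersection) is entailed by every maximum-cardinality repair, hence by a majority of them. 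Yet the conclusion $\infr{\circ_1}{maj} \sqsubseteq \infr{\circ_2}{maj}$ is precisely one of the inclusions the paper itself refutes: Figure~\ref{sch:maj-inf} declares $\infr{\ma}{maj}$ and $\infr{\mb}{maj}$ incomparable. (That incomparability is genuine even though the appendix example supporting it miscomputes the repairs: take an ABox whose conflict graph is complete tripartite with parts of sizes $2$, $2$, $3$, and a TBox with one positive axiom from an atom of each small part to a fresh concept $S$; then $S(a)$ holds in two of the three repairs but not in the unique maximum-cardinality one.) The same obstruction recurs whenever $\circ_k(\M)\subseteq\circ_i(\M)$, \eg{} for $(\circ_5,\circ_6)$, $(\circ_7,\circ_8)$, $(\circ_3,\circ_4)$ at $maj$, and for $(\circ_1,\circ_2)$ at $s_p=\exists$: as you yourself observe, an inclusion whose right component uses $maj$ or $\exists$ can hold ``cheaply'', and exactly this makes the required contrapositive witness impossible.

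A comparison with the paper's own proof is only partially meaningful, since the paper gives no actual proof: it merely asserts that all cases were checked one by one and omits them, and your analysis shows that this case check cannot succeed as claimed (the lemma contradicts the paper's own Figure~\ref{sch:maj-inf}). Your fallback, however, is the mathematically correct move: the problematic cross-inclusions are generated by an internal edge at the \emph{lower} strategy followed by a vertical step, \eg{} $\infr{\circ_1}{\cap} \sqsubseteq \infr{\circ_2}{\cap} \sqsubseteq \infr{\circ_2}{maj}$, so Theorem~\ref{th:prodsem} itself can still be saved. But you must present this as a \emph{correction of the lemma}, not as a way to prove it: the statement has to be weakened to a disjunction -- if $\infr{\circ_i}{s_j} \sqsubseteq \infr{\circ_k}{s_p}$ and $s_j<s_p$, then $\infr{\circ_i}{s_j} \sqsubseteq \infr{\circ_k}{s_j}$ or $\infr{\circ_i}{s_p} \sqsubseteq \infr{\circ_k}{s_p}$ (or, more generally, the cross-inclusion is derivable from internal edges, vertical edges and transitivity) -- and then your contrapositive case analysis, restricted to the pairs where \emph{both} internal inclusions fail, becomes the correct and feasible proof obligation.
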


To prove this lemma, we did not find a ``generic'' example as in the previous proof, hence we checked all cases one by one. Examples showing the incomparability can easily be found (similarly to what has been done for the proofs in the preceding section). 

\medskip
Note that when we restrict queries to ground atoms additional inclusions hold. We did not consider this specific, nevertheless important, case in the paper for space restriction reasons. 

Finally, the following schema pictures all inclusions between semantics.

\end{document}